\newcommand\reallywidehat[1]{%
\savestack{\tmpbox}{\stretchto{%
  \scaleto{%
    \scalerel*[\widthof{\ensuremath{#1}}]{\kern-.6pt\bigwedge\kern-.6pt}%
    {\rule[-\textheight/2]{1ex}{\textheight}}
  }{\textheight}%
}{0.5ex}}%
\stackon[1pt]{#1}{\tmpbox}%
}
\newcommand{\R}[0]{\mathbb{R}}
\newcommand{\diag}[0]{\operatorname{diag}}
\newcommand{\dps}{\displaystyle}
\newtheorem{theorem}{Theorem}
\newtheorem{definition}{Definition}
\newtheorem{lemma}{Lemma}
\newtheorem{remark}{Remark}
\newtheorem{example}{Example}
\newtheorem{problem}{Problem}
\newtheorem{corollary}{Corollary}
\DeclareMathOperator*{\argmax}{arg\,max}
\DeclareMathOperator{\supp}{supp}
\newcommand{\mypar}[1]{\noindent \textbf{#1.}}
\newcommand{\cited}[1]{\citeauthor{#1}\ \shortcite{#1}}
\newcommand{\mbf}[1]{\mathbf{#1}}
\newcommand{\mydef}{=}
\newcommand{\at}[1]{\left(#1\right)} 
\newcommand{\set}[1]{\{#1\}} 
\newcommand\srto[2]{#1\downarrow_{#2}}
\newcommand\hsrto[2]{\reallywidehat{#1\downarrow_{#2}}}
\newcommand{\prob}[1]{\mathbb{P}\left(#1\right)}
\renewcommand{\S}{\mathcal{S}}
\newcommand{\dcup}{\dot{\cup}}
\newcommand{\D}{\mathcal{D}}
\newcommand{\K}{\mathcal{K}}
\renewcommand{\P}{\mathcal{P}}
\newcommand{\B}{\mathcal{B}}
\newcommand{\A}{\mathcal{A}}
\newenvironment{ssft}[1][htb]{%
    \renewcommand{\ALG@name}{SSFT}
   \begin{algorithm}[#1]%
  }{\end{algorithm}}
\newenvironment{ssft2}[1][htb]{%
    \renewcommand{\ALG@name}{SSFT+}
   \begin{algorithm}[#1]%
  }{\end{algorithm}}
\title{Learning Set Functions that are Sparse in Non-Orthogonal Fourier Bases}
\author{
	Chris Wendler,
	Andisheh Amrollahi,
	Bastian Seifert,
	Andreas Krause, 
	Markus Püschel\\
}
\begin{document}


\maketitle

\begin{abstract}
Many applications of machine learning on discrete domains, such as learning preference functions in recommender systems or auctions, can be reduced to estimating a set function that is sparse in the Fourier domain. In this work, we present a new family of algorithms for learning Fourier-sparse set functions. They require at most $nk - k \log_2 k + k$ queries (set function evaluations), under mild conditions on the Fourier coefficients, where $n$ is the size of the ground set and $k$ the number of non-zero Fourier coefficients. 
In contrast to other work that focused on the orthogonal Walsh-Hadamard transform (WHT), our novel algorithms operate with recently introduced {\em non-orthogonal Fourier transforms} that offer different notions of Fourier-sparsity. These naturally arise when modeling, e.g., sets of items forming substitutes and complements. We demonstrate effectiveness on several real-world applications.
\end{abstract}

\section{Introduction}

Numerous problems in machine learning on discrete domains involve learning {\em set functions}, i.e., functions $s:2^N\to\R$ that map subsets of some ground set $N$ to the real numbers.  In recommender systems, for example, such set functions express diversity among sets of articles and their relevance w.r.t.~a given need \cite{sharma2019learning, balog2019transparent}; in sensor placement tasks, they express the informativeness of sets of sensors \cite{krause2008near}; in combinatorial auctions, they express valuations for sets of items \cite{brero2019machine}. A key challenge is to estimate $s$ from a small number of observed evaluations. Without structural assumptions an exponentially large (in $n=|N|$) number of queries is needed. Thus, a key question is which families of set functions can be efficiently learnt, while capturing important applications. One key property is {\em sparsity in the Fourier domain} \cite{stobbe2012learning, amrollahi2019efficiently}. 

The Fourier transform for set functions is classically known as the orthogonal Walsh-Hadamard transform (WHT) \cite{bernasconi1996fourier, de2008brief,  li2015active, cheraghchi2017nearly}. Using the WHT, it is possible to learn functions with at most $k$ non-zero Fourier coefficients with $O(nk)$ evaluations \cite{amrollahi2019efficiently}. In this paper, we consider an alternative family of {\em non-orthogonal} Fourier transforms, recently introduced in the context of discrete signal processing on set functions (DSSP) \cite{puschel2018discrete, puschel2020discrete}. In particular, we present the {\em first efficient algorithms} which (under mild assumptions on the Fourier coefficients), efficiently learn $k$-Fourier-sparse set functions requiring at most $(n+1)k - k \log_2 k$ evaluations. In contrast, naively computing the Fourier transform requires $2^n$ evaluations and $n 2^{n-1}$ operations \cite{puschel2020discrete}. 

Importantly, sparsity in the WHT domain does {\em not} imply sparsity in the alternative Fourier domains we consider, or vice versa. Thus, we significantly expand the class of set functions that can be efficiently learnt. One natural example of set functions, which are sparse in one of the non-orthogonal transforms, but not for the WHT, are certain preference functions considered by \cited{djolonga2016variational} in the context of recommender systems and auctions. In recommender systems, each item may cover the set of needs that it satisfies for a customer. If needs are covered by several items at once, or items depend on each other to provide value there are substitutability or complementarity effects between the respective items, which are precisely captured by the new Fourier transforms \cite{puschel2020discrete}. Hence, a natural way to learn such set functions is to compute their respective sparse Fourier transforms.

\mypar{Contributions} In this paper we develop, analyze, and evaluate novel algorithms for computing the sparse Fourier transform under various notions of Fourier basis:
\begin{enumerate}
	\item We are the first to introduce an efficient algorithm to compute the sparse Fourier transform for the recent notions of non-orthogonal Fourier basis for set functions \cite{puschel2020discrete}. In contrast to the naive fast Fourier transform algorithm that requires $2^n$ queries and $n 2^{n-1}$ operations, our sparse Fourier transform requires at most $nk - k \log_2 k + k = O(nk - k\log k)$ queries and $O(n k^2)$ operations to compute the $k$ non-zero coefficients of a Fourier-sparse set function. The algorithm works in all cases up to a null set of pathological set functions.
	 \item We then further extend our algorithm to handle an even larger class of Fourier-sparse set functions with $O(n^2 k - n k \log k)$ queries and $O(n^2 k + k^2 n)$ operations using filtering techniques.
	 \item We demonstrate the effectiveness of our algorithms in three  real-world set function learning tasks: learning surrogate objective functions for sensor placement tasks, learning facility locations functions for water networks, and preference elicitation in combinatorial auctions. The sensor placements obtained by our learnt surrogates are indistinguishable from the ones obtained using the compressive sensing based WHT by \cited{stobbe2012learning}. However, our algorithm does not require prior knowledge of the Fourier support and runs significantly faster. The facility locations function learning task shows that certain set functions are sparse in the non-orthogonal basis while being dense in the WHT basis. In the preference elicitation task also only half as many Fourier coefficients are required in the non-orthogonal basis as in the WHT basis, which indicates that bidders' valuation functions are well represented in the non-orthogonal basis. 
\end{enumerate}

\emph{Because of the page limit, all proofs are in the supplementary material.}

\section{Fourier Transforms for Set Functions}\label{dssp}

We introduce background and definitions for set functions and
associated Fourier bases, following the discrete-set signal processing (DSSP) introduced by~\cite{puschel2018discrete, puschel2020discrete}. DSSP generalizes
key concepts from classical signal processing, including shift, convolution, and Fourier transform to the powerset domain. The approach follows a general procedure that derives these concepts from a suitable definition of the shift operation \cite{puschel2008algebraic}.

\mypar{Set functions} We consider a ground set $N = \{x_1, \dots, x_n\}$. An associated set function maps each subset of $N$ to a real value:
\begin{equation}
s: 2^N \to \mathbb{R}; A \mapsto s\at{A}.
\end{equation} Each set function can be identified with a
$2^n$-dimensional vector $\mbf{s} = (s\at{A})_{A \subseteq N}$ by
fixing an order on the subsets. We choose the lexicographic order on the set indicator vectors.

\mypar{Shifts} Classical convolution (e.g., on images) is associated with the translation operator. Analogously, DSSP considers different versions of "set translations", each yielding a different DSSP model, numbered 1--5. One choice is
\begin{equation}\label{eq:shift4}
    \text{(model 4)}\quad T_Q s\at{A} = s\at{A \cup Q},\text{ for }Q\subseteq N.
\end{equation}
The shift operators $T_Q$ are parameterized by the powerset monoid $(2^N, \cup)$, since
the equality $T_Q(T_R s) = T_{Q \cup R} s$ holds for all
$Q , R \subseteq N$, and $s \in \mathbb{R}^{2^N}$. 

\mypar{Convolutional filters} The corresponding linear, shift-equivariant convolution in model 4 is given by 
\begin{equation}\label{eq:setconv}
(h * s)\at{A} = \sum_{Q \subseteq N} h\at{Q} s\at{A \cup Q}.
\end{equation}
Namely, $(h * T_R s)\at{A} =
T_R (h * s)\at{A}$, for all $R \subseteq N$. Convolving with $h$ is a linear mapping called a {\em filter} and $h$ is also a set function. In matrix notation we have $\mbf{h * s} = H \mbf{s}$, where $H$ is the filter matrix.

\mypar{Fourier transform and convolution theorem} The Fourier transform (FT) simultaneously dia\-gonalizes all filters, i.e., the matrix $F H F^{-1}$ is diagonal for all filter matrices $H$, where $F$ denotes the matrix form of the Fourier transform. Thus, different definitions of set shifts yield different notions of Fourier transform. For the shift in \eqref{eq:shift4} the Fourier transform of $s$ takes the form
\begin{equation}\label{eq:s_hat}
\widehat{s}\at{B} = \sum_{A \subseteq N: A \cup B = N} (-1)^{|A \cap B|} s\at{A}
\end{equation}
with the inverse
\begin{equation}\label{eq:s}
    s\at{A} = \sum_{B \subseteq N: A \cap B = \emptyset} \widehat{s}\at{B}.
\end{equation}
As a consequence we obtain the convolution theorem
\begin{equation}\label{convth}
\widehat{(h*s)}\at{B} = \overline{h}\at{B} \widehat{s}\at{B}.
\end{equation}
Interestingly, $\overline{h}$ (the so-called frequency response) is computed differently than $\widehat{s}$, namely as
\begin{equation}
\overline{h}\at{B} = \sum_{A \subseteq N: A \cap B = \emptyset} h\at{A}.
\end{equation}

In matrix form, with respect to the chosen order of $\mbf{s}$, the Fourier transform and its inverse are
\begin{equation}\label{ftmat}
F = \begin{pmatrix}
0 & \phantom{-}1 \\
1 & -1 \\
\end{pmatrix}^{\otimes n} \quad \text{and} \quad
F^{-1} = \begin{pmatrix}
1 & 1 \\
1 & 0 \\
\end{pmatrix}^{\otimes n},
\end{equation}
respectively, in which $M^{\otimes n} = M \otimes \cdots \otimes M$ denotes the $n$-fold Kronecker product of the matrix $M$. Thus, the Fourier transform $\widehat{\mbf{s}} = F \mbf{s}$ and its inverse $\mbf{s} = F^{-1}\widehat{\mbf{s}}$ can be computed in $n 2^{n-1}$ operations. 

The columns of $F^{-1}$ form the Fourier basis and can be viewed as indexed by $B\subseteq N$. The $B$-th column is given by $\mbf{f}^B_A = \iota_{A \cap B = \emptyset}$, where $\iota_{A \cap B = \emptyset} = 1$ if $A \cap B = \emptyset$ and $\iota_{A \cap B = \emptyset} = 0$ otherwise. The basis is not orthogonal as can be seen from the triangular structure in \eqref{ftmat}.

\mypar{Example and interpretation} We consider a special class of preference functions that, e.g., model customers in a recommender system~\cite{djolonga2016variational}. Preference functions naturally occur in machine learning tasks on discrete domains such as recommender systems and auctions, in which, for example, they are used to model complementary- and substitution effects between goods. Goods complement each other when their combined utility is greater than the sum of their individual utilities. Analogously, goods substitute each other when their combined utility is smaller than the sum of their individual utilities. Formally, a preference function is given by
\begin{equation}\label{eq:fldc}
\begin{aligned}
p(A) &= \sum_{i \in A} u_i + \sum_{\ell=1}^L \left(\max_{i \in A} r_{\ell i} - \sum_{i \in A} r_{\ell i}\right) \\&\ \ - \sum_{k=1}^K \left(\max_{i \in A} a_{ki} - \sum_{i \in A} a_{ki}\right).
\end{aligned}
\end{equation}
Equation~\eqref{eq:fldc} is composed of a so-called modular term parametrized by $u \in \R^n$, a repulsive term parametrized by $r \in \R_{\geq 0}^{L \times n}$, with $L \in \mathbb{N}$, and an attractive term parametrized by $a \in \R_{\geq 0}^{K \times n}$, with $K \in \mathbb{N}$. The repulsive term captures substitution and the attractive term complementary effects. 

\begin{example}[Running example]\label{example} Consider the ground set $\{x_1, x_2, x_3\}$, in which $x_1$ represents a tablet, $x_2$ a laptop and $x_3$ a tablet pen. Now, we create a preference function with a substitution effect between the laptop and the tablet which is expressed in the repulsive term $r = (2, 2, 0)$ and a complementary effect between the tablet and the tablet pen which is expressed in the attractive term $a = (1, 0, 1)$. The individual values of the items are expressed in the modular term $u = (2, 2, 1)$. As a result we get the following preference function $p$ and also show its Fourier transform $\widehat{p}$:

\begin{center}
    \resizebox{0.98\columnwidth}{!}{
	\begin{tabular}{@{}
			c
			c
			c
			S[table-format=1.0]
			S[table-format=1.0]
			S[table-format=1.0]
			S[table-format=1.0]
			S[table-format=1.0]
			S[table-format=1.0]
			@{}}\toprule
		 & $\emptyset$ & $\set{x_1}$ & $\set{x_2}$ & $\set{x_1, x_2}$& $\set{x_3}$ & $\set{x_1, x_3}$ & $\set{x_2, x_3}$ & $\set{x_1, x_2, x_3}$\\
        \midrule
		$p$                & {0}& {\,\,2} & {2}  & {\,\,2}  & {\,\,1}  & {4} & {3} & {4}\\
		$\widehat{p}$      & {4}& {-1}    & {0}  & {-2}     & {-2}     & {1} & {0} & {0} \\ \bottomrule
	\end{tabular}
}
\end{center}

Namely, as desired, $p\at{\set{x_1, x_2}} < p\at{\set{x_1}}+ p\at{\set{x_2}}$ and $p\at{\set{x_1, x_3}} > p\at{\set{x_1}}+ p\at{\set{x_3}}$. Note that $\widehat{p}$ is sparse. Next, we show this is always the case.
\end{example}

\begin{lemma}\label{lem:fldc} Preference functions of the form \eqref{eq:fldc} are \emph{Fourier-sparse w.r.t.~model~4} with at most $1+ n + L n + K n$ non-zero Fourier coefficients.
\end{lemma}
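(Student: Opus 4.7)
The plan is to expand each term of $p$ in the basis $\{f^B\}_{B\subseteq N}$ with $f^B(A)=\iota_{A\cap B=\emptyset}$ from \eqref{eq:s}, then take the union of the individual supports and count. The elementary identity behind everything is $\iota_{i\in A}=1-\iota_{A\cap\{i\}=\emptyset}=f^\emptyset(A)-f^{\{i\}}(A)$, since $1=f^\emptyset$ and $\iota_{A\cap\{i\}=\emptyset}=f^{\{i\}}$. Applied termwise to the modular part, this gives $\sum_{i\in A}u_i=\bigl(\sum_i u_i\bigr)f^\emptyset-\sum_i u_i f^{\{i\}}$, with Fourier support inside $\{\emptyset\}\cup\{\{i\}:i\in N\}$ (at most $n+1$ non-zero coefficients). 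The same identity, applied to the linear pieces $-\sum_{i\in A}r_{\ell i}$ and $+\sum_{i\in A}a_{ki}$ of each repulsive/attractive term, only touches the same collection $\{\emptyset\}\cup\{\text{singletons}\}$, so those pieces contribute no further coefficients to the support.

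For the remaining $\max_{i\in A}w_i$ terms (with $w_i\ge 0$ being $r_{\ell i}$ or $a_{ki}$), I would use a first-hit decomposition. Relabel the ground set so that $w_1\ge w_2\ge\dots\ge w_n$ and set $T_i=\{1,\dots,i-1\}$ (so $T_1=\emptyset$). For any $A$, the max is attained at $i^\star=\min A$ in this order, and one checks that $\max_{i\in A}w_i=\sum_i w_i\,\iota_{i\in A,\,T_i\cap A=\emptyset}$, with $A=\emptyset$ handled automatically since every indicator then vanishes. The conjunction simplifies via telescoping: $\iota_{i\in A,\,T_i\cap A=\emptyset}=\iota_{T_i\cap A=\emptyset}-\iota_{(T_i\cup\{i\})\cap A=\emptyset}=f^{T_i}(A)-f^{T_i\cup\{i\}}(A)$. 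Hence $\max_{i\in A}w_i$ is supported on the chain $\emptyset\subset\{1\}\subset\{1,2\}\subset\dots\subset N$ in the relabelled order, i.e., in at most $n+1$ sets (the chain depends on the term).

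Aggregating, the modular part and all the linear pieces contribute only the $n+1$ sets $\{\emptyset\}\cup\{\text{singletons}\}$. Each of the $L+K$ max-terms adds at most the $n-1$ new chain elements of size $\ge 2$ (i.e., $T_3,\dots,T_n,N$ in its own relabelling). Thus the Fourier support of $p$ has size at most $(n+1)+(L+K)(n-1)\le 1+n+Ln+Kn$, which is the claimed bound. The one genuinely non-routine step is the first-hit rewriting of the max combined with the telescoping identity that expresses $\iota_{i\in A,\,T_i\cap A=\emptyset}$ as a difference of two basis functions; once that is in place the rest is bookkeeping on chains and singletons.
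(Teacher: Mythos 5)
Your proof is correct, and at its combinatorial core it is the same argument as the paper's: both split $p$ by linearity into modular pieces (supported on $\emptyset$ and the singletons) plus $L+K$ max-terms, and both handle each max-term by sorting the weights and telescoping, which localizes its Fourier support on a chain in $2^N$. The difference is in the packaging. The paper rewrites $\max_{i\in A} w_i$ as a weighted coverage function over a nested family $S_{\sigma(1)}\subset\cdots\subset S_{\sigma(n)}$ in a universe $U$ with telescoped weights $w(u)=r_{\ell\sigma(u)}-r_{\ell\sigma(u-1)}$, and then invokes the known fact that weighted coverage functions with $|U|=n$ are $(n+1)$-sparse w.r.t.\ model~4 (the Venn-diagram fragment formula for the coefficients). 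You instead expand directly in the basis $f^B(A)=\iota_{A\cap B=\emptyset}$ of \eqref{eq:s}, using the first-hit identity $\iota_{i\in A,\,T_i\cap A=\emptyset}=f^{T_i}(A)-f^{T_i\cup\set{i}}(A)$, which I checked holds pointwise (including $A=\emptyset$). This buys you two small things: the support and the coefficients of each max-term are written out explicitly (coefficient $w_1$ at $\emptyset$, differences $w_i-w_{i-1}$ along the chain, $-w_n$ at $N$), with no appeal to the coverage-function machinery; and your bookkeeping, which notes that each chain's $\emptyset$ and singleton are already counted, yields the marginally sharper bound $(n+1)+(L+K)(n-1)\le 1+n+Ln+Kn$. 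A further minor point in your favor: your weak sorting $w_1\ge\cdots\ge w_n$ handles ties cleanly, whereas the paper's construction as written assumes a strict ordering $r_{\ell\sigma(1)}<\cdots<r_{\ell\sigma(n)}$ (ties are harmless there too, but only implicitly, via zero fragment weights).
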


Motivated by Lemma~\ref{lem:fldc}, we call set functions that are Fourier-sparse w.r.t.~model~4 \emph{generalized preference functions}. Formally, a generalized preference function is defined in terms of a collection of distinct subsets $N = \set{S_1, \dots, S_n}$ of some universe $U: S_i \subseteq U$, $i \in \set{1, \dots, n}$, and a weight function $w: U \to \R$. The weight of a set $S \subseteq U$ is $\mbf{w}(S) = \sum_{u \in S} w(u)$. Then, the corresponding \emph{generalized preference function} is
\begin{equation}\label{wf}
s: 2^N \to \R; A \mapsto \mbf{w}\left(\bigcup_{S_i \in A} S_i\right).
\end{equation} 

For non-negative weights $s$ is called a \emph{weighted coverage function}~\cite{krause2014submodular}, but here we allow general (signed) weights. Thus, generalized preference functions are \emph{generalized coverage functions} as introduced in \cite{puschel2020discrete}. Generalized coverage functions can be visualized by a bipartite graph, see Fig.~\ref{fig:coverage_bipartite}. In recommender systems, $S_i$ could model the customer-needs covered by item $i$. Then, the score that a customer associates to a set of items corresponds to the needs covered by the items in that set. Substitution as well as complementary effects occur if the needs covered by items overlap (e.g., $S_i \cap S_j \neq \emptyset$). 

Concretely, we observe in Fig.~\ref{fig:coverage_bipartite} that in our running example the tablet and the laptop share one need with a positive sign which yields the substitution effect, and the tablet and the tablet pen share a need with a negative sign which yields the complementary effect. 

\begin{figure}
    \centering
    \begin{subfigure}{0.23\textwidth}
        \begin{center}
            \vphantom{\includegraphics[width=0.9\textwidth]{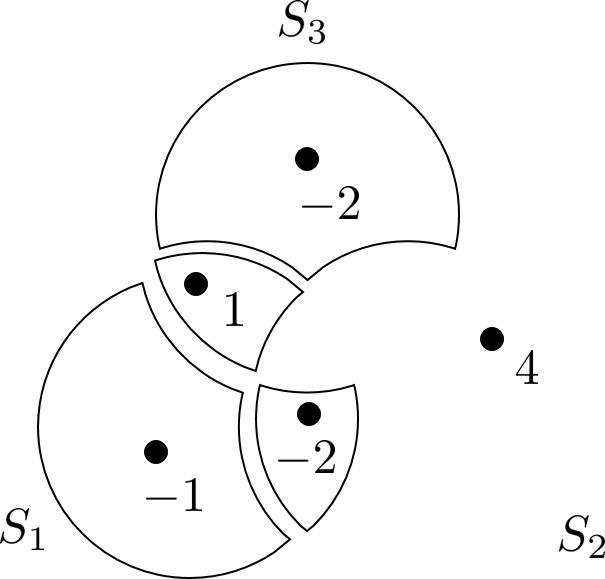}} 
            \includegraphics[width=0.6\textwidth]{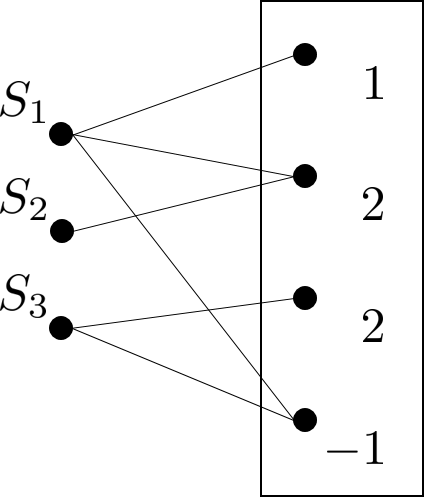}
        \end{center}
        \subcaption{Coverage function 
        }\label{fig:coverage_bipartite}
    \end{subfigure}
    \begin{subfigure}{0.23\textwidth}
        \begin{center}            
            \vphantom{\includegraphics[width=0.6\textwidth]{figures/coverage/bipartite.png}}
            \includegraphics[width=0.9\textwidth]{figures/coverage/venn.png}
        \end{center}
       \subcaption{Fourier transform in \eqref{eq:s_hat}
       }\label{fig:coverage_ft}
    \end{subfigure}
    \caption{Preference function $p$ from Example~\ref{example} as generalized coverage function  visualized as bipartite graph (a) and as Venn diagram (b). The Fourier coefficients are the weights of the fragments. Here, three are zero.}\label{fig:coverage}
\end{figure}

Interestingly, the Fourier coefficients for model 4 in \eqref{eq:s_hat} of a generalized coverage function are 
\begin{equation}\label{eq:spectrum}
\widehat{s}\at{B} = \begin{cases}
\sum_{u \in U} w(u), & \text{if } B = \emptyset,\\
-\mbf{w}\left(\bigcap_{S_i \in B} S_i \setminus \bigcup_{S_i \in N\setminus B} S_i\right), & \text{otherwise,}
\end{cases}
\end{equation}
which corresponds to the weights of the fragments of the Venn-diagram of the sets $S_i$ (Fig.~\ref{fig:coverage_ft}). If the universe $U$ contains fewer than $2^n$ items, some fragments will have weight zero, i.e., are Fourier-sparse. We refer to Section~VIII of \cited{puschel2020discrete} for an in-depth discussion and interpretation of the spectrum~\eqref{eq:spectrum}.

\begin{table}
\centering
	
	\resizebox{0.48\textwidth}{!}{
	\tiny
	$
	\begin{array}{@{}l llll@{}}\toprule
	 & \text{shift } T_Q s\at{A} & F \text{ (sum)}: \widehat{s}\at{B} = & F^{-1} \text{ (sum)}: {s}_A = \\ \midrule

	 3 & s\at{A \setminus Q} &  \dps\sum_{\stackrel{A\subseteq B}{\hphantom{ A \cup B = N}}} (-1)^{|A|}s\at{A} & 
	\hphantom{\tfrac{1}{2^{n}}} \dps\sum_{\stackrel{B\subseteq A}{\hphantom{A \cap B = \emptyset}}} (-1)^{|B|}\widehat{s}\at{B} 
	\\
	4 & s\at{A \cup Q} & \dps\sum_{\stackrel{A\subseteq N:}{ A \cup B = N}} (-1)^{|A \cap B|}s\at{A} & 
	\hphantom{\tfrac{1}{2^{n}}} \dps\sum_{\stackrel{B\subseteq N:}{A \cap B = \emptyset}} \widehat{s}\at{B}  \\
	5 & s\at{A \setminus Q \cup Q \setminus A} & \dps\sum_{\stackrel{A\subseteq N}{\hphantom{ A \cup B = N}}} (-1)^{|A \cap B|} s\at{A} & 
	\tfrac{1}{2^{n}}\dps\sum_{\stackrel{B\subseteq N}{\hphantom{A \cap B = \emptyset}}} (-1)^{|A \cap B|}\widehat{s}\at{B}  \\\bottomrule

	\end{array} 
	$}
	\caption{Shifts and Fourier concepts.\label{tab:sfdsp}}
\end{table}

\mypar{Other shifts and Fourier bases} As mentioned before, \cited{puschel2020discrete} considers 5 types of shift, i.e. DSSP models, each with its respective shift-equivariant
convolution, associated Fourier basis, and thus notion of Fourier-sparsity. Model 5 is the classical definition that yields the WHT and model 4 the version introduced above. Table~\ref{tab:sfdsp} collects the key concepts, also including model 3.

The notions of Fourier-sparsity differ substantially between models. For example, consider the coverage function for which there is only one element in the universe $U$ and this element is covered by all sets $S_1, \dots, S_n$. Then, $\widehat{s}\at{\emptyset} = 1$, $\widehat{s}\at{N} = -1$ and $\widehat{s}\at{B} = 0$ for $\emptyset \subset B \subset N$ w.r.t.~model~4, and $\widehat{s}\at{\emptyset} = 2^{n}-1$ and $\widehat{s}\at{B} = -1$ for all $\emptyset \subset B \subseteq N$ w.r.t.~the WHT. 

More generally, one can show that preference functions in \eqref{eq:fldc} with at least one row of pairwise distinct values in either the repulsive or attractive part are dense w.r.t.~the WHT basis.

The Fourier bases have appeared in different contexts before. For example, \eqref{eq:s_hat} can be related to the W-transform, which has been used by \cited{chakrabarty2012testing} to test coverage functions.

\section{Learning Fourier-Sparse Set Functions}

We now present our algorithm for learning Fourier-sparse set functions w.r.t.~model~4. One of our main contributions is that the derivation and algorithm are general, i.e., they also applies to the other models. We derive the variants for {\em models~3 and 5} from Table~\ref{tab:sfdsp} in the supplementary material. 

\begin{definition} A set function $s$ is called $k$-Fourier-sparse if \begin{equation}
   \supp(\widehat{s}) = \set{B: \widehat{s}\at{B} \neq 0}= \set{B_1, \dots, B_k}, 
\end{equation}
where we assume that $k$ is significantly smaller than $2^n$.
\end{definition}

Thus, exactly learning a $k$-Fourier-sparse set function is equivalent to
computing its $k$ non-zero Fourier coefficients and associated support. Formally, we want to solve:

\begin{problem}[Sparse FT]\label{prob:general} Given oracle access to query a $k$-Fourier-sparse set function $s$, compute its Fourier support and associated Fourier coefficients.
\end{problem}

\subsection{Sparse FT with Known Support}

First, we consider the simpler problem of computing the Fourier coefficients if the Fourier support $\supp(\widehat{s})$ (or a small enough superset $\mathcal{B} \supseteq \supp(\widehat{s})$) is known. In this case, the solution boils down to selecting queries $\mathcal{A} \subseteq 2^N$ such that the linear system of equations
\begin{equation}\label{eq:sampling}
\mbf{s}_{\mathcal{A}} = F^{-1}_{\mathcal{A}\mathcal{B}} \mbf{\widehat{s}}_{\mathcal{B}},
\end{equation} 
admits a solution. Here, $\mbf{s}_{\mathcal{A}} = (s\at{A})_{A \in \mathcal{A}}$ is the vector of queries, $F^{-1}_{\mathcal{A}\mathcal{B}}$ is the submatrix of $F^{-1}$ obtained by selecting the rows indexed by $\mathcal{A}$ and the columns indexed by $\mathcal{B}$, and $\mbf{\widehat{s}}_{\mathcal{B}}$ are the unknown Fourier coefficients we want to compute.

\begin{theorem}[\cited{puschel2020discrete}] \label{thm:sampling4} Let $s$ be $k$-Fourier-sparse with ${\supp(\widehat{s}) = \set{B_1, \dots, B_k} = \B}$. Let ${\A = \set{N \setminus B_1, \dots, N \setminus B_k}}$. Then $F^{-1}_{\mathcal{A}\mathcal{B}}$ is invertible and $s$ can be perfectly reconstructed from the queries $\mbf{s}_{\A}$.
\end{theorem}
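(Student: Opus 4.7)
The plan is to exploit the explicit form of the Fourier basis: the $(A,B)$-entry of $F^{-1}$ is $\iota_{A \cap B = \emptyset}$. With the particular choice $A_i = N \setminus B_i$, the indicator $\iota_{(N \setminus B_i) \cap B_j = \emptyset}$ simplifies to $\iota_{B_j \subseteq B_i}$. So $F^{-1}_{\mathcal{A}\mathcal{B}}$ is (after identifying the rows of $\mathcal{A}$ with their complements) exactly the zeta matrix of the poset $(\mathcal{B}, \subseteq)$, whose $(i,j)$-entry is $1$ if $B_j \subseteq B_i$ and $0$ otherwise.

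Next I would argue that this matrix is triangular up to reordering. Order the sets $B_1, \dots, B_k$ so that $|B_1| \le |B_2| \le \dots \le |B_k|$, breaking ties arbitrarily. If $B_j \subseteq B_i$ with $j \ne i$, then $|B_j| < |B_i|$, hence $j < i$; consequently for all $j > i$ we have $(F^{-1}_{\mathcal{A}\mathcal{B}})_{ij} = 0$. The diagonal entries equal $\iota_{B_i \subseteq B_i} = 1$. Therefore, after this reordering, $F^{-1}_{\mathcal{A}\mathcal{B}}$ is lower triangular with unit diagonal, so its determinant is $1$ and it is invertible.

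Finally, invertibility lets us solve the linear system in \eqref{eq:sampling}, recovering $\widehat{\mathbf{s}}_{\mathcal{B}}$ uniquely from the queries $\mathbf{s}_{\mathcal{A}}$. Since $\supp(\widehat{s}) \subseteq \mathcal{B}$, the inversion formula \eqref{eq:s} reduces to
\begin{linenomath*}\begin{equation*}
s(A) = \sum_{B \in \mathcal{B}:\, A \cap B = \emptyset} \widehat{s}(B),
\end{equation*}\end{linenomath*}
so every value of $s$ is determined, completing the reconstruction.

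I do not expect a real obstacle here: the result is essentially the observation that the sampling pattern $\mathcal{A} = \{N \setminus B : B \in \mathcal{B}\}$ converts the non-orthogonality condition $A \cap B = \emptyset$ into the subset relation $B_j \subseteq B_i$, which is a partial order and hence yields a triangular matrix. The only point requiring a little care is choosing a linear extension of this partial order (the cardinality ordering above) so that the triangularity is manifest; once that is in place, the rest is immediate.
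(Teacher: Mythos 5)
Your proof is correct and coincides with the standard argument for this result: the paper itself does not reprove Theorem~\ref{thm:sampling4} but imports it as Theorem~1 of \cited{puschel2020discrete}, where the proof is exactly your observation that $(N\setminus B_i)\cap B_j=\emptyset$ iff $B_j\subseteq B_i$, so that sorting $\mathcal{B}$ by nondecreasing cardinality makes $F^{-1}_{\mathcal{A}\mathcal{B}}$ triangular with unit diagonal, hence invertible, after which the inversion formula \eqref{eq:s} restricted to $\supp(\widehat{s})\subseteq\mathcal{B}$ recovers $s$ everywhere. No gaps; the cardinality ordering as a linear extension of the subset order is precisely the point requiring care, and you handled it.
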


Consequently, we can solve Problem~\ref{prob:general} if we have a way to discover a $\B\supseteq\supp(\widehat{s})$, which is what we do next.

\subsection{Sparse FT with Unknown Support}

In the following we present our algorithm to solve Problem~\ref{prob:general}. As mentioned, the key challenge is to determine the Fourier support w.r.t.~\eqref{eq:s_hat}. The initial skeleton is similar to the algorithm {\em Recover Coverage} by \cited{chakrabarty2012testing}, who used it to test coverage functions. Here we take the novel view of Fourier analysis to expand it to a sparse Fourier transform algorithm for all set functions. Doing so creates challenges since here the weight function in \eqref{wf} is not guaranteed to be positive. Using the framework in Section~\ref{dssp} we will analyze and address them.

Let $M\subseteq N$, and consider the associated restriction of a set function $s$ on $N$:
\begin{equation}
\srto{s}{2^M}: 2^M \to \R; A \mapsto s\at{A}
\end{equation}
The Fourier coefficients of $s$ and the restriction can be related as (proof in supplementary material):
\begin{equation}\label{eq:dsft4_restricted}
\hsrto{s}{2^M}\at{B} = \sum_{A \subseteq N \setminus M} \widehat{s}\at{A \cup B}.
\end{equation}
We observe that, if the Fourier coefficients on the right hand side of \eqref{eq:dsft4_restricted} do not cancel, knowing $\hsrto{s}{2^M}$ contains information about the sparsity of $\hsrto{s}{2^{M \cup \set{x}}}$, for $x \in N\setminus M$. To be precise, if there are no cancellations, the relation 
\begin{equation}
\begin{aligned}\label{eq:dsft4_sparsity_propagation}
\hsrto{s}{2^M}\at{B} 
&= \hsrto{s}{2^{M \cup \set{x}}}\at{B} + \hsrto{s}{2^{M \cup \set{x}}}\at{B \cup \set{x}}
\end{aligned}
\end{equation} 
implies that both $\hsrto{s}{2^{M \cup \set{x}}}\at{B}$ and $\hsrto{s}{2^{M \cup \set{x}}}\at{B \cup \set{x}}$ must be zero whenever $\hsrto{s}{2^M}\at{B}$ is zero. As a consequence, we can construct 
\begin{equation}
\B = \bigcup_{B \in \supp(\hsrto{s}{2^M})} \set{B, B \cup \set{x}},
\end{equation}
with $\supp(\hsrto{s}{2^{M \cup \set{x}}}) \subseteq \B$, from \eqref{eq:dsft4_sparsity_propagation}, and then compute $\hsrto{s}{2^{M \cup \set{x}}}$ with Theorem~\ref{thm:sampling4} in this case.

As a result we can solve Problem~\ref{prob:general} with our algorithm \textbf{SSFT}, under mild conditions on the coefficients that guarantee that cancellations do not occur, by successively computing the non-zero Fourier coefficients of restricted set functions along the chain
\begin{equation}\label{eq:dsft4_chain}
\srto{s}{2^{\emptyset}} = \hsrto{s}{2^{\emptyset}}, \hsrto{s}{2^{\set{x_1}}}, \hsrto{s}{2^{\set{x_1, x_2}}}, \dots, \hsrto{s}{2^{N}} = \widehat{s}.
\end{equation}

For example, \textbf{SSFT} works with probability one if all non-zero Fourier coefficients are sampled from independent continuous probability distributions:

\begin{lemma}\label{lem:cont_random_sf} With probability one \textbf{SSFT} correctly computes the Fourier transform of Fourier-sparse set functions $s$ with $\supp(\widehat{s}) = \mathcal{B}$ and randomly sampled Fourier coefficients, that satisfy
\begin{enumerate}
\item $\widehat{s}\at{B} \sim P_B$, where $P_B$ is a continuous probability distribution with density function $p_B$, for $B \in \supp(\widehat{s})$,
\item $p_{\mathcal{B}} = \prod_{B \in \mathcal{B}} p_B$.
\end{enumerate} 
\end{lemma}

\begin{figure*}[t!]
    \begin{minipage}[t]{0.5\textwidth}
        \null
\begin{ssft}[H]
\renewcommand{\thealgorithm}{}
\caption{Sparse set function Fourier transform of $s$}\label{alg:sdsft4}
\begin{algorithmic}[1]
	\State{$M_0 \gets \emptyset$}
	\State{$\hsrto{s}{2^{M_0}}\at{\emptyset} \gets s\at{\emptyset}$}
	\For{$i = 1, \dots, n$}
	\State{$M_i \gets M_{i-1} \cup \set{x_i}$}
	\State{$\mathcal{B} \gets \emptyset, \mathcal{A} \gets \emptyset$}
	\For{$B \in \supp(\hsrto{s}{2^{M_{i-1}}})$}
	\State{$\mathcal{B} \gets \mathcal{B} \cup \set{B, B \cup \set{x_i}}$}
	\State{$\mathcal{A} \gets \mathcal{A} \cup \set{M_i \setminus
            B, M_i \setminus (B \cup \set{x_i})}$}
	\EndFor
	\State{$\mbf{s}_{\mathcal{A}} \gets (s\at{A})_{A \in \mathcal{A}}$}
	\State{$\mbf{x} \gets \text{ solve } \mbf{s}_{\mathcal{A}} = F^{-1}_{\mathcal{A}\mathcal{B}} \mbf{x}$ for $\mbf{x}$}
	\For{$B \in \mathcal{B}$ with $\mbf{x}_B \neq 0$}
	\State $\hsrto{s}{2^{M_i}}\at{B} \gets \mbf{x}_B$
	\EndFor
	\EndFor
	\State \Return{$\hsrto{s}{2^{M_n}}$}
    \end{algorithmic}
\end{ssft}
\end{minipage}
\hfill
\begin{minipage}[t]{0.47\textwidth}
    \null
\begin{ssft2}[H]
\renewcommand{\thealgorithm}{}
\caption{Filtering based \textbf{SSFT} of $s$}\label{alg:sdsft4_general}
\begin{algorithmic}[1]
	\State{\em // Sample random coefficients.}
	\State{$h\at{\emptyset} = 1$
          \vphantom{{$\hsrto{s}{2^{M_0}}\at{\emptyset} \gets
              s\at{\emptyset}$}}} 
	\For{$x \in \{x_1, \dots, x_n\}$}
	\State{$h\at{\set{x}} \gets c \sim \mathcal{N}(0, 1)$}
	\EndFor
	\State{\em // Fourier transform of filtered set function.}
	\State{$\widehat{h * s} \gets $\textbf{SSFT}($h *
          s$)}
	\State{\em // Compute the original coefficients.}
	\For{$B \in \supp(\widehat{h * s})$}
	\State{$\widehat{s}_B \gets {\widehat{(h * s)}\at{B}}/{\overline{h}\at{B}}$}
	\EndFor	
	\State \Return{$\widehat{s}$}
        \Statex
        \Statex
        \Statex
    \end{algorithmic}
    \vspace{-0.2pt}
\end{ssft2}
\end{minipage}
\end{figure*}

\begin{figure}
    \centering
    
        \includegraphics[width=0.45\textwidth]{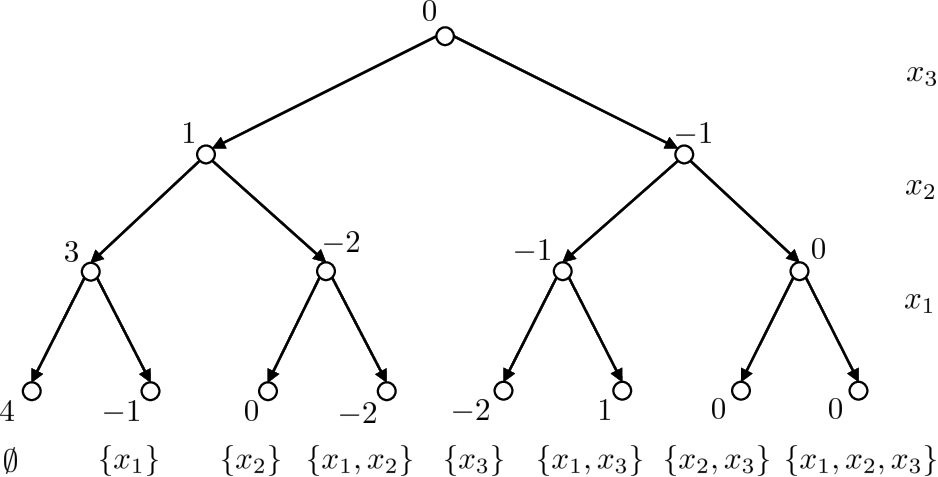}

    \caption{The Fourier coefficients of the restricted versions of $p$ from Example~\ref{example}. Depth 0 (root) corresponds to the Fourier transform of $\srto{p}{2^{\emptyset}}$, depth 1 to the one of $\srto{p}{2^{\{x_3\}}}$, depth 2 to the one of $\srto{p}{2^{\{x_2, x_3\}}}$ and depth 3 to the one of $p$.}\label{fig:tree}
\end{figure}

\mypar{Algorithm} We initialize \textbf{SSFT} with $M_0 = \emptyset$ and ${\srto{s}{2^{\emptyset}}\at{\emptyset}} = s\at{\emptyset}$ (lines~1--2). Then, in each iteration of the for loop (line~3), we grow our set $M_i = M_{i-1} \cup \set{x_i}$ by adding the next element (line~4), determine the superset $\B$ of $\supp(\hsrto{s}{2^{M_{i}}})$ based on the Fourier coefficients from the previous iteration $\hsrto{s}{2^{M_{i-1}}}$ (lines~5--8) and solve the resulting known-support-problem using Theorem~\ref{thm:sampling4} (lines~9--12). After $n$ iterations we end up with the Fourier coefficients of $s$.  

We depict an execution of \textbf{SSFT} on $p$ from Example~\ref{example} in Fig.~\ref{fig:tree}. We process the elements $x_1, x_2, x_3$ in reverse order, which results in the final Fourier coefficients being lexicographically ordered. Note that in this example applying \textbf{SSFT} naively would fail, as $\hsrto{p}{2^{\emptyset}}\at{\emptyset} = p\at{\emptyset} = 0$ \textbf{SSFT} would falsely conclude that both $\hsrto{p}{2^{\set{x_3}}}\at{\emptyset}$ and $\hsrto{p}{2^{\set{x_3}}}\at{\set{x_3}}$ are zero resulting in them and their children being pruned. Instead, we initialize the algorithm with $\hsrto{p}{\{x_3\}}$. The levels of the tree correspond to the elements of the chain~\eqref{eq:dsft4_chain}. When computing the Fourier coefficients of a level, the ones of the previous level determine the support superset $\B$. 
Whenever, \textbf{SSFT} encounters a Fourier coefficient that is equal to zero, all of its children are pruned from the tree. For example, $\hsrto{p}{2^{\set{x_2, x_3}}}\at{\set{x_2, x_3}} = 0$, thus, $\set{x_2, x_3}$ and $\set{x_1, x_2, x_3}$ are not included in the support superset $\B$ of $\hsrto{p}{N} = \widehat{p}$. While pruning in our small example only avoids the computation of $\hsrto{p}{2^{N}}\at{\set{x_2, x_3}}$ and $\hsrto{p}{2^N}\at{\set{x_1, x_2, x_3}}$, it avoids an exponential amount of computation in larger examples (for $|N| = n$, when $\hsrto{s}{2^{M_{i-1}}}\at{B} = 0$ the two subtrees of height $n - i$ containing its children are pruned). 

Note that for practical reasons we only process up to $k_\text{max}$ subsets in line 6. In line 11, we consider a Fourier coefficient $|\mbf{x}_B| < \epsilon$ (a hyperparameter) as zero.

\mypar{Analysis} We consider set functions $s$ that are $k$-Fourier-sparse (but not $(k-1)$-Fourier-sparse) with  support $\supp(\widehat{s}) = \set{B_1, \dots, B_k} = \mathcal{B}$, i.e., 
$\set{{s: 2^N \to \mathbb{R}}: {\widehat{s}\at{B} \neq 0} \text{ iff } B
  \in \mathcal{B}}$, which is isomorphic to
\begin{equation}
    \mathcal{S} =
\set{\widehat{\mbf{s}} \in \mathbb{R}^k: \widehat{\mbf{s}}_i \neq 0 \text{ for all } i \in \set{1, \dots, k}} .
\end{equation}

Let $\lambda$ denote the Lebesgue measure on $\mathbb{R}^k$. Let $\P^{M_i}_C = \set{B \in \B: B \cap M_i = C}$.

\mypar{Pathological set functions} \textbf{SSFT} fails to compute the Fourier coefficients for which $\hsrto{s}{2^{M_i}}\at{C} = 0$ despite $\P^{M_i}_C \neq \emptyset$. Thus, the set of pathological set functions $\D_1$, i.e., the set of set functions for which \textbf{SSFT} fails, can be written as the finite union of kernels 
\begin{equation}
\K_1(M_i, C) = \set{\widehat{\mbf{s}} \in \R^k: \hsrto{s}{2^{M_i}}\at{C} = 0}
\end{equation}
intersected with $\S$.

\begin{theorem}\label{thm:alg1_pathological} Using prior notation, the set of pathological set functions for \textbf{SSFT} is given by
\begin{equation}
\D_1 = 
\bigcup_{i = 0}^{n-1}\bigcup_{C \subseteq M_i: \P^{M_i}_C \neq \emptyset} \K_1(M_i, C) \cap \S, 
\end{equation}
and has Lebesgue measure zero, i.e., $\lambda(\D_1) = 0$.
\end{theorem}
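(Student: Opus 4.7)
The plan is to show that each individual set $\K_1(M_i, C) \cap \S$ is contained in a proper affine subspace of $\R^k$, and since $\D_1$ is a finite union of such sets, it must have Lebesgue measure zero in $\R^k$.

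First I would rewrite the defining equation of $\K_1(M_i,C)$ in terms of the coordinates $\widehat{\mbf{s}}$. Applying \eqref{eq:dsft4_restricted}, we have
\begin{linenomath*}\begin{equation*}
\hsrto{s}{2^{M_i}}\at{C} \;=\; \sum_{A \subseteq N \setminus M_i} \widehat{s}\at{A \cup C}.
\end{equation*}\end{linenomath*}
Because $A \subseteq N \setminus M_i$ and $C \subseteq M_i$, every set of the form $A \cup C$ satisfies $(A \cup C) \cap M_i = C$, and conversely every $B \in \B$ with $B \cap M_i = C$ arises uniquely in this way (take $A = B \setminus M_i$). Moreover, $\widehat{s}\at{B} = 0$ for $B \notin \B$. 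Hence the sum collapses to
\begin{linenomath*}\begin{equation*}
\hsrto{s}{2^{M_i}}\at{C} \;=\; \sum_{B \in \P^{M_i}_C} \widehat{s}\at{B},
\end{equation*}\end{linenomath*}
which is a linear functional in the $k$ variables $\widehat{\mbf{s}} = (\widehat{s}\at{B_1}, \dots, \widehat{s}\at{B_k})$, with all coefficients equal to $1$ on the coordinates indexed by $\P^{M_i}_C$ and $0$ elsewhere.

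The key observation is that whenever $\P^{M_i}_C \neq \emptyset$, this linear functional is not identically zero (at least one coefficient equals $1$). Therefore $\K_1(M_i, C)$ is the kernel of a nonzero linear form on $\R^k$, i.e., a hyperplane of codimension $1$, and consequently $\lambda(\K_1(M_i, C)) = 0$. Intersecting with $\S$ can only shrink this set, so $\lambda(\K_1(M_i, C) \cap \S) = 0$ as well.

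Finally I would observe that the outer union in the definition of $\D_1$ is finite: $i$ ranges over $\{0, \dots, n-1\}$, and for each $i$, the index $C$ ranges over the finitely many subsets of $M_i$ for which $\P^{M_i}_C \neq \emptyset$. A finite union of Lebesgue-null sets is Lebesgue-null, so $\lambda(\D_1) = 0$. The only real subtlety is the bookkeeping in the first step, namely recognizing that the collapse of the sum in \eqref{eq:dsft4_restricted} onto $\P^{M_i}_C$ is precisely what makes the linear form nontrivial exactly under the hypothesis $\P^{M_i}_C \neq \emptyset$; after that the measure-theoretic conclusion is immediate.
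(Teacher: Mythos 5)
Your proposal is correct and follows essentially the same route as the paper: collapse the restriction formula \eqref{eq:dsft4_restricted} onto $\P^{M_i}_C$ to see that $\K_1(M_i,C)$ is the kernel of a nonzero linear form (a codimension-$1$ subspace of $\R^k$), then conclude by finiteness of the union. The only cosmetic difference is that the paper routes the intersection with $\S$ through its Lemma~\ref{lem:support_measure} ($\lambda(A\cap\S)=\lambda(A)$), whereas you use simple monotonicity of the measure, which suffices for the null direction.
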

%

\mypar{Complexity} By reusing queries and computations from the $(i-1)$-th iteration of \textbf{SSFT} in the $i$-th iteration, we obtain:
\begin{theorem} \textbf{SSFT} requires at most ${nk - k\log_2 k + k}$ queries and $O(n k^2)$ operations. 
\end{theorem}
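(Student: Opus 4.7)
My plan is to bound queries and operations via three ingredients: a query-reuse argument per iteration, a bound on the support growth of the restricted transforms along the chain \eqref{eq:dsft4_chain}, and a triangular structure for the linear solves. Let $k_i \mydef |\supp(\hsrto{s}{2^{M_i}})|$, so $k_0 \le 1$ and $k_n = k$. At iteration $i$ the query set $\A$ consists of $M_i \setminus B$ and $M_i \setminus (B \cup \set{x_i})$ for $B$ in the previous support; since $B \subseteq M_{i-1}$ and $x_i \notin M_{i-1}$, the second family simplifies to $\set{M_{i-1} \setminus B : B \in \supp(\hsrto{s}{2^{M_{i-1}}})}$. I would verify by induction on $i$, unwinding that every $B \in \supp(\hsrto{s}{2^{M_{i-1}}})$ originates from iteration $i-1$ either as an element of $\supp(\hsrto{s}{2^{M_{i-2}}})$ or as such an element unioned with $\set{x_{i-1}}$, that each $M_{i-1} \setminus B$ was already queried earlier. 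This will show iteration $i$ adds at most $k_{i-1}$ genuinely new queries, so the total is at most $1 + \sum_{i=0}^{n-1} k_i$, with the $+1$ counting the initial query of $s\at{\emptyset}$.

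Next I would bound the $k_i$ using the restriction recurrence \eqref{eq:dsft4_sparsity_propagation}. A nonzero left-hand side forces at least one and at most two of the right-hand side terms to be nonzero, and the pairs $\set{B, B \cup \set{x_i}}$ are disjoint across different $B$, yielding $k_{i-1} \le k_i \le 2 k_{i-1}$. Combined with $k_0 \le 1$ and $k_n = k$, this gives $k_i \le \min(2^i, k)$. The sum $\sum_{i=0}^{n-1} \min(2^i, k)$ is maximized by a geometric ramp that saturates at $k$ after $\log_2 k$ steps, evaluating to $(k-1) + (n - \log_2 k) k$. Adding the initial $1$ yields the desired $nk - k \log_2 k + k$; when $\log_2 k$ is not an integer, the argument gives a bound differing by $O(1)$, absorbed into the $+k$ slack.

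For the operation count I would observe that the matrix $F^{-1}_{\A \mathcal{B}}$ at iteration $i$ has entries $(F^{-1})_{M_i \setminus B',\, B''} = \iota_{(M_i \setminus B') \cap B'' = \emptyset} = \iota_{B'' \subseteq B'}$ since $B'' \subseteq M_i$. Any linear extension of $\subseteq$ on rows and columns renders this matrix triangular, so the system of size at most $2 k_{i-1}$ is solved by forward substitution in $O(k_{i-1}^2)$ operations. Summing over $n$ iterations gives $O(nk^2)$. The hard part will be the reuse-by-induction step, which requires careful tracking of how each set $M_{i-1} \setminus B$ is inherited from an earlier iteration's query set; the support bound and the combinatorial maximization are then standard.
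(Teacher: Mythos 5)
Your proposal is correct and follows essentially the same route as the paper's appendix proof: reuse of the previous iteration's queries $\set{M_{i-1}\setminus B}$ so that iteration $i$ adds at most $k_{i-1}\le\min(2^{i-1},k)$ new ones, a doubling-then-saturation bound on the restricted supports (your monotonicity argument $k_{i-1}\le k_i\le k_n=k$ replaces the paper's partition argument for the cap, and your direct triangularity of $F^{-1}_{\A\B}$ under a linear extension of $\subseteq$ replaces the paper's block decomposition into $T^{i-1}$, with identical $O(nk^2)$ cost). One remark: your phrase ``differing by $O(1)$'' for non-power-of-two $k$ is imprecise---the total is $2^{\lceil\log_2 k\rceil}+(n-\lceil\log_2 k\rceil)k$ and the exact check is $2^{\theta}\le 1+\theta$ for $\theta=\lceil\log_2 k\rceil-\log_2 k\in[0,1)$---but the bound does hold, and your count in fact attains the stated $+k$ constant, whereas the paper's own appendix derivation only reaches $nk-k\log_2 k+2k$.
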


\subsection{Shrinking the Set of Pathological Fourier Coefficients}

According to Theorem~\ref{thm:alg1_pathological}, the set of pathological Fourier coefficients for a given support has measure zero. However, unfortunately,
this set includes important classes of set functions including graph cuts (in the case of unit weights) and hypergraph cuts.\footnote{As an example, consider the cut function $c$ associated with the graph $V = \set{1, 2, 3}$, $E = \set{\set{1,2}, \set{2,3}}$ and $w_{12} = w_{23} = 1$, using $\widehat{\mbf{c}} = (0, 1, 2, -2, 1, 0, -2, 0)^T$. $c$ maps every subset of $V$ to the weight of the corresponding graph cut.}

%

\mypar{Solution} The key idea to exclude these and further narrow down the set of pathological cases is to use the convolution theorem \eqref{convth}, i.e., the fact that we can modulate Fourier coefficients by filtering. Concretely, we choose a random filter $h$ such that \textbf{SSFT} works for $h*s$ with probability one. $\widehat{s}$ is then obtained from $\widehat{h*s}$ by dividing by the frequency response $\overline{h}$. We keep the associated overhead in $O(n)$ by choosing a one-hop filter, i.e., $h(B)=0$ for $|B|>1$. Motivated by the fact that, e.g., the product of a Rademacher random variable (which would lead to cancellations) and a normally distributed random variable is again normally distributed, we sample our filtering coefficients i.i.d.~from a normal distribution. By filtering our signal with such a random filter we aim to end up in a situation similar to Lemma~\ref{lem:cont_random_sf}. We call the resulting algorithm \textbf{SSFT+}, shown above. 

\mypar{Algorithm} In \textbf{SSFT+} we create a random one-hop filter $h$ (lines~2--4), apply \textbf{SSFT} to the filtered signal $h*s$ (line~6) and compute $\widehat{s}$ based on $\widehat{h*s}$ (lines~8--9).

\mypar{Analysis} Building on the analysis of \textbf{SSFT}, recall that $\S$ denotes the set of $k$-Fourier-sparse (but not $(k-1)$-Fourier-sparse) set functions and
$\P_C^{M_i}$ are the elements $B \in \supp(\widehat{s})$ satisfying $B \cap M_i = C$. Let
\begin{multline}
\K_2(M_i, C) = \left\{\widehat{\mbf{s}} \in \R^k: \hsrto{s}{2^{M_i}}\at{C} = 0 \text{ and }\right. \\ \left.\hsrto{s}{2^{M_i \cup \set{x_{j}}}}\at{C} = 0 \text{ for } j \in \set{i+1, \dots, n}\right\}.
\end{multline}


\begin{theorem}\label{thm:alg2_pathological} With probability one with
    respect to the randomness of the filtering coefficients, the set
    of pathological set functions for \textbf{SSFT+} has
    the form (using prior notation)
    \begin{equation}\label{eq:D2}
        \D_2 = \bigcup_{i = 0}^{n - 2} \bigcup_{C \subseteq M_i:
          \P^{M_i}_C \neq \emptyset} \K_2(M_i, C) \cap \S. 
    \end{equation}
\end{theorem}

Theorem~\ref{thm:alg2_pathological} shows that \textbf{SSFT+} correctly processes $\hsrto{s}{2^{M_i}}\at{C} = 0$ with $\P^{M_i}_C \neq \emptyset$, iff there is an element $x \in \set{x_{i+1}, \dots, x_n}$ for which $\hsrto{s}{2^{M_i \cup \set{x}}}\at{C} \neq 0$.


\begin{theorem} If $\D_1$ is non-empty, $\D_2$ is a proper subset of $\D_1$. In particular, $\K_1(M_i, C) \cap S \neq \emptyset$ implies $\K_2(M_i, C) <_{\R} \K_1(M_i, C)$, for all $C \subseteq M_i \subseteq N$ with $\P_C^{M_i} \neq \emptyset$.
\end{theorem}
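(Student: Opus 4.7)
The inclusion $\K_2(M_i, C) \subseteq \K_1(M_i, C)$ is immediate from the definitions, so the plan is to prove the strict containment. I would first rewrite both defining conditions as linear functionals on the coefficient vector $\widehat{\mbf{s}} \in \R^k$. Applying the restriction formula~\eqref{eq:dsft4_restricted} and using that $\widehat{s}$ vanishes outside $\B$, the condition $\hsrto{s}{2^{M_i}}\at{C}=0$ becomes
\[\phi_0(\widehat{\mbf{s}}) \mydef \sum_{B \in \P^{M_i}_C} \widehat{s}\at{B} = 0,\]
and for each $j \in \set{i+1, \dots, n}$ the condition $\hsrto{s}{2^{M_i \cup \set{x_j}}}\at{C}=0$ becomes
\[\phi_j(\widehat{\mbf{s}}) \mydef \sum_{B \in \P^{M_i}_C,\, x_j \notin B} \widehat{s}\at{B} = 0.\]
Hence $\K_1(M_i, C) = \ker \phi_0$ and $\K_2(M_i, C) = \ker \phi_0 \cap \bigcap_{j > i} \ker \phi_j$, so strict containment reduces to exhibiting a single $j > i$ for which $\phi_j$ is not a scalar multiple of $\phi_0$ on the coordinates indexed by $\P^{M_i}_C$.

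The combinatorial heart of the argument uses the hypothesis $\K_1(M_i, C) \cap \S \neq \emptyset$ to force $|\P^{M_i}_C| \geq 2$: if $\P^{M_i}_C = \set{B_0}$ were a singleton, then $\phi_0 = \widehat{s}\at{B_0}$, and $\phi_0(\widehat{\mbf{s}})=0$ would contradict the nonvanishing of coordinates required by $\S$. Pick distinct $B, B' \in \P^{M_i}_C$. Both agree with $M_i$ on $C$, so their symmetric difference lies entirely in $N \setminus M_i = \set{x_{i+1}, \dots, x_n}$, and we may pick $x_j \in B \triangle B'$ with $j > i$. Exactly one of $B, B'$ contains $x_j$; in the coordinate basis indexed by $\P^{M_i}_C$ the functional $\phi_j$ therefore has a $0$ and a $1$ at the positions of $B, B'$, whereas $\phi_0$ has $1$ at both. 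Consequently $\phi_j$ is linearly independent of $\phi_0$, the constraint $\phi_j=0$ strictly cuts down $\ker \phi_0$, and $\K_2(M_i, C) \subsetneq \K_1(M_i, C)$ as claimed.

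To conclude the global statement $\D_2 \subsetneq \D_1$, pick $(i_0, C_0)$ witnessing $\D_1 \neq \emptyset$. Since $\K_2(M_{i_0}, C_0)$ is a proper closed linear subspace of the hyperplane $\K_1(M_{i_0}, C_0)$, the complement $\K_1(M_{i_0}, C_0) \setminus \K_2(M_{i_0}, C_0)$ is open and dense in $\K_1(M_{i_0}, C_0)$ and therefore meets the non-empty relatively open set $\K_1(M_{i_0}, C_0) \cap \S$. The main (minor) obstacle is ensuring that the resulting point also escapes every other piece $\K_2(M_i, C)$ appearing in $\D_2$; I would finish by a Lebesgue dimension count inside the hyperplane $\K_1(M_{i_0}, C_0)$: each other $\K_2(M_i, C)$ is either disjoint from this hyperplane or meets it in a proper affine subspace (full containment being ruled out by applying the argument of the previous paragraph to the corresponding index), so the portion of $\D_2$ lying in $\K_1(M_{i_0}, C_0)$ is a finite union of positive-codimension affine subspaces and hence has Lebesgue measure zero, whereas $\K_1(M_{i_0}, C_0) \cap \S$ has positive measure, producing a point of $\D_1 \setminus \D_2$.
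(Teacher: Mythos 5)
Your proof is correct, and its combinatorial core coincides with the paper's: both arguments express $\hsrto{s}{2^{M_i}}\at{C}$ and $\hsrto{s}{2^{M_i \cup \set{x_j}}}\at{C}$ as linear functionals supported on $\P^{M_i}_C$, derive $|\P^{M_i}_C| \geq 2$ from $\K_1(M_i, C) \cap \S \neq \emptyset$, and extract a separating element $x_j$ from the symmetric difference of two distinct $B, B' \in \P^{M_i}_C$ (this is exactly the construction at the end of the paper's Lemma~5). Where you genuinely diverge is in the bookkeeping and the finish. The paper runs a three-case analysis (singleton $\P^{M_i}_C$; an $x$ with $|\P^{M_i \cup \set{x}}_C| = 1$; the generic case) and completes the global statement algebraically, invoking twice the fact that a real vector space cannot be written as a finite union of proper subspaces --- once inside the hyperplane $\K_1(M^{(1)}, C^{(1)})$ and once more in a somewhat fiddly step showing that intersecting both sides with $\S$ preserves strict inclusion. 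You compress the case analysis into a single linear-independence observation ($\phi_j$ takes values $0$ and $1$ at the positions of $B, B'$ where $\phi_0$ takes $1$ and $1$), and you finish measure-theoretically inside the $(k-1)$-dimensional hyperplane: the trace of $\D_2$ there is a finite union of subspaces of dimension at most $k-2$, hence null for $(k-1)$-dimensional Lebesgue measure, while $\K_1(M_{i_0}, C_0) \cap \S$ is non-empty and relatively open, hence of positive measure. This buys a more self-contained argument (no appeal to the cited union-of-subspaces fact) and disposes of the intersection with $\S$ in one stroke. One small patch is needed: your parenthetical ruling out full containment ``by applying the argument of the previous paragraph to the corresponding index'' does not apply to indices $(i, C)$ with $|\P^{M_i}_C| = 1$, since there the hypothesis $\K_1(M_i, C) \cap \S \neq \emptyset$ fails and $\K_2(M_i, C)$ is a coordinate hyperplane of dimension $k-1$; but such pieces satisfy $\K_2(M_i, C) \cap \S = \emptyset$ and so contribute nothing to $\D_2$ (alternatively, a coordinate hyperplane cannot contain $\ker \phi_0^{(i_0, C_0)}$, because $\phi_0^{(i_0, C_0)}$ has at least two non-zero entries), so your dimension count goes through unchanged.
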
 


\mypar{Complexity} There is a trade-off between the number of non-zero filtering coefficients and the size of the set of pathological set functions. 
For example, for the one-hop filters used, computing $(h*s)\at{A}$ requires $1 + n - |A|$ queries.

\begin{theorem} The query complexity of \textbf{SSFT+} is $O(n^2 k - n k \log k)$ and the algorithmic complexity is $O(n^2 k + n k^2)$. 
\end{theorem}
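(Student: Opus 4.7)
The plan is to reduce the analysis to the already-established complexity of \textbf{SSFT} and then account for the extra overhead caused by (i) evaluating the filtered function $h*s$ at each query point and (ii) dividing by the frequency response $\overline{h}$ at the end. Since the convolution theorem \eqref{convth} gives $\widehat{(h*s)}(B) = \overline{h}(B)\widehat{s}(B)$, the filtered function is itself at most $k$-Fourier-sparse, so the preceding theorem guarantees that \textbf{SSFT} applied to $h*s$ consumes at most $nk - k\log_2 k + k$ queries of $h*s$ and $O(n k^2)$ arithmetic operations.

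For the query complexity, the one-hop constraint $h(Q) = 0$ for $|Q| > 1$ reduces the convolution to
\begin{linenomath*}
\begin{equation*}
(h*s)(A) = h(\emptyset)\, s(A) + \sum_{i=1}^n h(\{x_i\})\, s(A \cup \{x_i\}),
\end{equation*}
\end{linenomath*}
and because $s(A \cup \{x_i\}) = s(A)$ whenever $x_i \in A$, each such evaluation requires at most $1 + (n - |A|) \le n+1$ fresh queries of $s$. Multiplying by the $O(nk - k\log k)$ queries of $h*s$ consumed by \textbf{SSFT} yields the stated $O(n^2 k - n k \log k)$ bound on queries of $s$. A refinement that shares queries across different evaluations of $h*s$ could improve constants, but it is not needed for this asymptotic.

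For the arithmetic complexity, each evaluation of $h*s$ costs $O(n)$ operations, so producing all inputs to \textbf{SSFT} contributes $O(n)\cdot O(nk - k\log k) = O(n^2 k)$; the internal work of \textbf{SSFT} contributes $O(n k^2)$; and the final post-processing divides at most $k$ coefficients $\widehat{(h*s)}(B)$ by $\overline{h}(B) = h(\emptyset) + \sum_{x \notin B} h(\{x\})$, each division costing $O(n)$, for an additional $O(nk)$. Summing these three contributions yields the claimed $O(n^2 k + n k^2)$. I anticipate no substantive obstacle beyond bookkeeping: the only subtlety is confirming that $\supp(\widehat{(h*s)})$ has size at most $k$ so that \textbf{SSFT}'s bound applies verbatim, which is immediate from the convolution theorem regardless of whether any $\overline{h}(B)$ happens to vanish.
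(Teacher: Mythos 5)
Your proposal is correct and follows essentially the same route as the paper's own proof: bound each evaluation of $h*s$ by $1 + n - |A| \le n+1$ queries of $s$ for a one-hop filter, multiply by the \textbf{SSFT} query bound to get $O(n^2 k - nk\log k)$, and split the arithmetic cost into $O(n^2 k)$ for assembling the filtered queries plus $O(nk^2)$ for the triangular solves inside \textbf{SSFT}. Your additional observations --- that the convolution theorem makes $h*s$ at most $k$-Fourier-sparse so the \textbf{SSFT} bound applies verbatim, and that the final divisions by $\overline{h}(B)$ cost only $O(nk)$ --- are correct details the paper leaves implicit.
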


\section{Related Work}

We briefly discuss related work on learning set functions.

\mypar{Fourier-sparse learning} There is a substantial body of research concerned with learning Fourier/WHT-sparse set functions \cite{stobbe2012learning, scheibler2013fast, kocaoglu2014sparse, li2015active, cheraghchi2017nearly, amrollahi2019efficiently}. Recently, \cited{amrollahi2019efficiently} have imported ideas from the hashing based sparse Fourier transform algorithm~\cite{hassanieh2012nearly} to the set function setting. The resulting algorithms compute the WHT of $k$-WHT-sparse set functions with a query complexity $O(n k)$ for general frequencies, $O(k d \log n)$ for low degree ($\leq d$) frequencies and $O(kd \log n \log(d \log n))$ for low degree set functions that are only approximately sparse. To the best of our knowledge this latest work improves on previous algorithms, such as the ones by \cited{scheibler2013fast}, \cited{kocaoglu2014sparse}, \cited{li2015active}, and \cited{cheraghchi2017nearly}, providing the best guarantees in terms of both query complexity and runtime. E.g., \cited{scheibler2013fast} utilize similar ideas like hashing/aliasing to derive sparse WHT algorithms that work under random support (the frequencies are uniformly distributed on $2^N$) and random coefficient (the coefficients are samples from continuous distributions) assumptions. 
\cited{kocaoglu2014sparse} propose a method to compute the WHT of a $k$-Fourier-sparse set function that satisfies a so-called unique sign property using queries polynomial in $n$ and $2^k$. 

In a different line of work, \cited{stobbe2012learning} utilize results from compressive sensing to compute the WHT of $k$-WHT-sparse set functions, for which a superset $\mathcal{P}$ of the support is known. This approach also can be used to find a $k$-Fourier-sparse approximation and has a theoretical query complexity of $O(k \log^4 |\mathcal{P}|)$. In practice, it even seems to be more query-efficient than the hashing based WHT (see experimental section of \cited{amrollahi2019efficiently}), but suffers from the high computational complexity, which scales at least linearly with $|\mathcal{P}|$. Regrading coverage functions, to our knowledge, there has not been any work in the compressive sensing literature for the non-orthogonal Fourier bases which do not satisfy RIP properties and hence lack sparse recovery and robustness guarantees.

In summary, all prior work on Fourier-based methods for learning set functions was based on the WHT. Our work leverages the broader framework of signal processing with set functions proposed by \cited{puschel2020discrete}, which provides a larger class of Fourier transforms and thus new types of Fourier-sparsity.

\mypar{Other learning paradigms} Other lines of work for learning set functions include methods based on new neural architectures \cite{dolhansky2016deep, zaheer2017deep, weiss2017sats}, methods based on backpropagation through combinatorial solvers \cite{djolonga2017differentiable, tschiatschek2018differentiable, wang2019satnet, vlastelica2019differentiation}, kernel based methods \cite{buathong2019kernels}, and methods based on other succinct representations such as decision trees \cite{feldman2013representation} and disjunctive normal forms \cite{raskhodnikova2013learning}.

\section{Empirical Evaluation}

We evaluate the two variants of our algorithm (\textbf{SSFT} and \textbf{SSFT+}) for model 4 on three classes of real-world set functions. First, we approximate the objective functions of sensor placement tasks by Fourier-sparse functions and evaluate the quality of the resulting surrogate objective functions. Second, we learn facility locations functions (which are preference functions) that are used to determine cost-effective sensor placements in water networks \cite{leskovec2007cost}. Finally, we learn simulated bidders from a spectrum auctions test suite~\cite{weiss2017sats}.

\mypar{Benchmark learning algorithms} We compare our algorithm against three state-of-the-art algorithms for learning WHT-sparse set functions: the compressive sensing based approach \textbf{CS-WHT}~\cite{stobbe2012learning}, the hashing based approach \textbf{H-WHT}~\cite{amrollahi2019efficiently}, and the robust version of the hashing based approach \textbf{R-WHT}~\cite{amrollahi2019efficiently}. For our algorithm we set $\epsilon = 0.001$ 
and $k_{\text{max}} = 1000$. 
\textbf{CS-WHT} requires a superset $\P$ of the (unknown) Fourier support, which we set to all $B\subseteq N$ with $|B|\leq 2$ and the parameter for expected sparsity to $1000$. For \textbf{H-WHT} we used the exact algorithm without low-degree assumption and set the expected sparsity parameter to $2000$. For \textbf{R-WHT} we used the robust algorithm without low-degree assumption and set the expected sparsity parameter to $2000$ unless specified otherwise.

\subsection{Sensor Placement Tasks}

We consider a discrete set of sensors located at different fixed positions measuring a quantity of interest, e.g., temperature, amount of rainfall, or traffic data, and want to find an informative subset of sensors subject to a budget constraint on the number of sensors selected (e.g., due to hardware costs). To quantify the informativeness of subsets of sensors, we fit a multivariate normal distribution to the sensor measurements~\cite{krause2008near} and associate each subset of sensors $A \subseteq N$ with its information gain~\cite{srinivas2009gaussian}
\begin{equation}
G\at{A} = \frac{1}{2} \log |I_{|A|} + \sigma^{-2} (K_{ij})_{i, j \in A}|, 
\end{equation}
where $(K_{ij})_{i, j \in A}$ is the submatrix of the covariance matrix $K$ that is indexed by the sensors $A \subseteq N$ and $I_{|A|}$ the $|A|\times |A|$ identity matrix. We construct two covariance matrices this way for temperature measurements from 46 sensors at Intel Research \emph{Berkeley} and for velocity data from 357 sensors deployed under a highway in \emph{California}.

The information gain is a submodular set function and, thus, can be approximately maximized using the greedy algorithm by \cited{nemhauser1978analysis}: $A^* \approx \argmax_{A \subseteq N: |A| \leq d} G\at{A}$ to obtain informative subsets. We do the same using Fourier-sparse surrogates $s$ of $G$: $A^{+} \approx \argmax_{A \subseteq N: |A| \leq d} s\at{A}$ and compute $G(A^+)$. As a baseline we place $d$ sensors at random $A_{\text{rand}}$ and compute $G\at{A_{\text{rand}}}$. Figure~\ref{fig:information_gain} shows our results. The x-axes correspond to the cardinality constraint used during maximization and the y-axes to the information gain obtained by the respective informative subsets. In addition, we report next to the legend the execution time and number of queries needed by the successful experiments.

\begin{figure*}[]
    \centering
    \hfill
    \begin{subfigure}[c]{0.48\textwidth}
    \centering
    \includegraphics[width=0.64\textwidth]{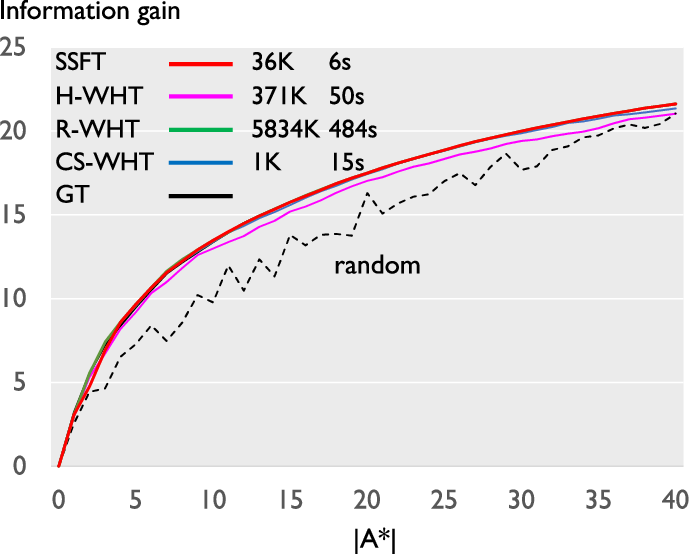}
    \subcaption{Berkeley, $n = 46$}
    \end{subfigure}
    \hfill
    \begin{subfigure}[c]{0.48\textwidth}
    \centering
    \includegraphics[width=0.64\textwidth]{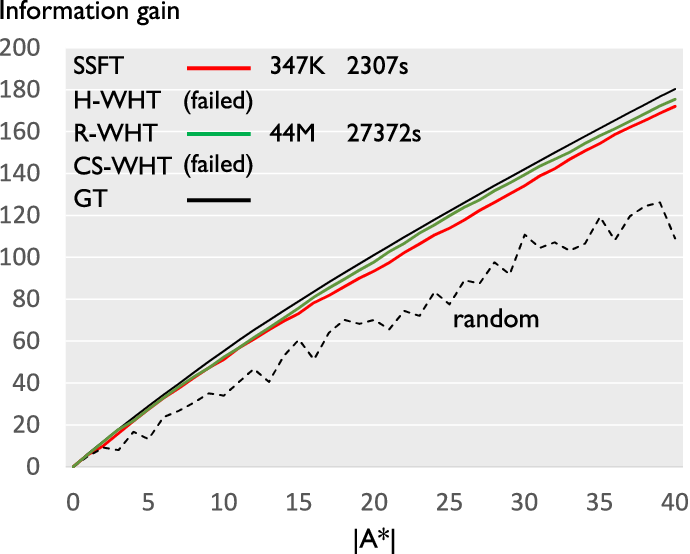}
    \subcaption{California, $n = 357$}
    \end{subfigure}
    \hfill
    \caption{Comparison of learnt surrogate objective functions on submodular maximization tasks subject to cardinality constraints (x-axis); On the y-axis we plot the information gain achieved by the informative subset obtained by the respective method. We report the number of queries and execution time in seconds next to the legend or indicate failure.}\label{fig:information_gain}
\end{figure*}

\mypar{Interpretation of results} \textbf{H-WHT} only works for the \emph{Berkeley} data. For the other data set it is not able to reconstruct enough Fourier coefficients to provide a meaningful result. The likely reason is that the target set function is not exactly Fourier-sparse, which can cause an excessive amount of collisions in the hashing step. In contrast, \textbf{CS-WHT} is noise-robust and yields sensor placements that are indistinguishable from the ones obtained by maximizing the true objective function in the first task. However, for the \emph{California} data, \textbf{CS-WHT} times out. In contrast, \textbf{SSFT} and \textbf{R-WHT} work well on both tasks. In the first task, \textbf{SSFT} is on par with \textbf{CS-WHT} in terms of sensor placement quality and significantly faster despite requiring more queries. On the \emph{California} data, \textbf{SSFT} yields sensor placements of similar quality as the ones obtained by \textbf{R-WHT} while requiring orders of magnitude fewer queries and time.

\subsection{Learning Preference Functions}

We now consider a class of preference functions that are used for the cost-effective contamination detection in water networks \cite{leskovec2007cost}. The networks stem from the Battle of Water Sensor Networks (BSWN) challenge \cite{ostfeld2008battle}. The junctions and pipes of each BSWN network define a graph. Additionally, each BSWN network has dynamic parameters such as time-varying water consumption demand patterns, opening and closing valves, and so on. 

To determine a cost-effective subset of sensors (e.g., given a maximum budget), \cited{leskovec2007cost} make use of facility locations functions of the form
\begin{equation}\label{eq:fl}
    p: 2^N \to \mathbb{R}; A \mapsto \sum_{\ell=1}^L \max_{i \in A} r_{\ell i},
\end{equation}
where $r$ is a matrix in $\R_{\geq 0}^{L \times n}$. Each row corresponds to an event (e.g., contamination of the water network at any junction) and the entry $r_{\ell i}$ quantifies the utility of the $i$-th sensor in case of the $\ell$-th event. It is straightforward to see that \eqref{eq:fl} is a preference function with $a = 0$ and $u_i = \sum_{\ell=1}^L r_{\ell i}$. Thus, they are sparse w.r.t.~model~4 and dense w.r.t.~WHT (see Lemma~\ref{lem:fldc}). 

\cited{leskovec2007cost} determined three different utility matrices $r \in \R^{3424 \times 12527}$ that take into account the fraction of events detected, the detection time, and the population affected, respectively. The matrices were obtained by costly simulating millions of possible contamination events in a 48 hour timeframe. For our experiments we select one of the utility matrices and obtain subnetworks by selecting the columns that provide the maximum utility, i.e., we select the $|N| = n$ columns $j$ with the largest $\max_{\ell} r_{\ell j}$. 

\begin{table}
\scriptsize
\centering
\caption{Comparison of model~4 sparsity (\textbf{SSFT}) against WHT sparsity (\textbf{R-WHT}) of facility locations functions in terms of reconstruction error $\|\mbf{p} - \mbf{p'}\|/\|\mbf{p}\|$ for varying $|N|$; The italic results are averages over 10 runs.}\label{tab:fln}
    \begin{tabular}{@{}rlrrrrr@{}}\toprule
$|N|$ &  & $\alpha$ & queries & time (s) & $k$  & $\|\mbf{p} - \mbf{p'}\|/\|\mbf{p}\|$ \\\midrule
$20$ & \textbf{SSFT} & & \boldmath$734$ & \boldmath$0.12$ & \boldmath$102$ & \boldmath$0$ \\
& WHT & & & &  $2^{9}$ & $0.000143$  \\
& & & & & $2^{14}$ & $0.000078$  \\
& & & & & $2^{19}$ & $0.000001$ \\\midrule
$50$ &  \textbf{SSFT}& & \boldmath$10K$ & \boldmath$2$ & \boldmath$648$ & \boldmath$0$ \\
& R-WHT & 1 & $\mathit{2,103K}$ & $\mathit{361}$ & $\mathit{1,380}$ &  $\mathit{0.001744}$ \\
&  & 2 & $\mathit{4,192K}$ & $\mathit{766}$ & $\mathit{2,739}$ &  $\mathit{0.000847}$ \\
&  & 4 & $\mathit{8,370K}$ & $\mathit{1,499}$ & $\mathit{5,054}$ &  $\mathit{0.000129}$ \\
&  & 8 & $\mathit{16,742K}$ & $\mathit{2,838}$ & $\mathit{9,547} $ &  $\mathit{0.000108}$ \\
\midrule

$100$ & \textbf{SSFT} & & \boldmath$76K$ & \boldmath$24$ & \boldmath$2,308$ &  \boldmath$0$ \\
& R-WHT & 1 & $16,544K$ & $5,014$ & $2,997$ &  $0.000546$ \\
&  & 2 & $33,100K$ & $10,265$ & $6,466$ &  $0.000380$ \\\midrule
$200$ & \textbf{SSFT} & & \boldmath$494K$ & \boldmath$451$ & \boldmath$7,038$ &  \boldmath$0$ \\
$300$ & \textbf{SSFT} & & \boldmath$1,644K$ & \boldmath$2,368$ & \boldmath$16,979$ & \boldmath$0$ \\
$400$ & \textbf{SSFT} & & \boldmath$3,859K$ & \boldmath$7,654$ & \boldmath$28,121$ & \boldmath$0$ \\
$500$ & \textbf{SSFT} & & \boldmath$7,218K$ & \boldmath$17,693$ & \boldmath$38,471$ &  \boldmath$0$ \\
\bottomrule
\end{tabular}
\end{table}

\begin{table*}[]\caption{Multi-region valuation model ($n=98$). Each row corresponds to a different bidder type.}
  \label{tab:mrvm}
\resizebox{\textwidth}{!}{
\begin{tabular}{@{}lrrr|rrr|rrr@{}}
    \toprule
  &  \multicolumn{3}{c|}{number of queries (in thousands)} & \multicolumn{3}{c|}{Fourier coefficients recovered}  & \multicolumn{3}{c}{relative reconstruction error} \\
B. type & \textbf{SSFT} & \textbf{SSFT+} & \textbf{H-WHT} & \textbf{SSFT} & \textbf{SSFT+} & \textbf{H-WHT} & \textbf{SSFT} & \textbf{SSFT+} & \textbf{H-WHT}\\\midrule
local & $3 \pm 4$ & $229 \pm 73$ & $781 \pm 0$ & $118 \pm 140$&$303 \pm 93$&$675 \pm 189$&$0.5657 \pm 0.4900$&$0 \pm 0$&$0 \pm 0$\\
regional & $20 \pm 1$ & $646 \pm 12$ & $781 \pm 0$ & $659 \pm 32$&$813 \pm 36$&$1,779 \pm 0$&$0.0118 \pm 0.0071$&$0 \pm 0$&$0 \pm 0$\\
national & $71 \pm 0$ & $3,305 \pm 1$ & $781 \pm 0$ & $1,028 \pm 3$&$1,027 \pm 6$&$4,170 \pm 136$&$0.0123 \pm 0.0014$&$0.0149 \pm 0.0089$&$0.2681 \pm 0.2116$\\

    \bottomrule
\end{tabular}
}
\end{table*}

In Table~\ref{tab:fln} we compare the sparsity of the corresponding facility locations function in model~4 against its sparsity in the WHT. For $|N| = 20$, we compute the full WHT and select the $k$ largest coefficients. For $|N| > 20$, we compute the $k$ largest WHT coefficients using \textbf{R-WHT}. The model~4 coefficients are always computed using \textbf{SSFT}. If the facility locations function is $k$ sparse w.r.t.~model~4 for some $|N| = n$, we set the expected sparsity parameter of \textbf{R-WHT} to different multiples $\alpha k$ up to the first $\alpha$ for which the algorithm runs out of memory.  We report the number of queries, time, number of Fourier coefficients $k$, and relative reconstruction error. For \textbf{R-WHT} experiments that require less than one hour we report average results over 10 runs (indicated by italic font).
For $|N| > 20$, the relative error cannot be computed exactly and thus is obtained by sampling 100,000 sets $\mathcal{A}$ uniformly at random and computing $\|\mbf{p}_{\mathcal{A}} - \mbf{p'}_{\mathcal{A}}\|/\|\mbf{p}_{\mathcal{A}}\|$, where $p$ denotes the real facility locations function and $p'$ the estimate.

\mypar{Interpretation of results} The considered facility locations functions are indeed sparse w.r.t.~model~4 and dense w.r.t. the WHT. As expected, \textbf{SSFT} outperforms \textbf{R-WHT} in this scenario, which can be seen by the lower number of queries, reduced time, and an error of exactly zero for the SSFT. This experiment shows certain classes of set functions of practical relevance are better represented in the model~4~basis than in the WHT~basis.

\subsection{Preference Elicitation in Auctions}

In combinatorial auctions a set of goods $N = \set{x_1, \dots, x_n}$ is auctioned to a set of $m$ bidders. Each bidder $j$ is modeled as a set function $b_j: 2^N \to \R$ that maps each bundle of goods to its subjective value for this bidder. The problem of learning bidder valuation functions from queries is known as the preference elicitation problem \cite{brero2019machine}. Our experiment sketches an approach under the assumption of Fourier sparsity. 

As common in this field~\cite{weissteiner2019deep, weissteiner2020fourier}, we resort to simulated bidders. Specifically, we use the multi-region valuation model (MRVM) from the spectrum auctions test suite \cite{weiss2017sats}. In MRVM, 98 goods are auctioned off to 10 bidders of different types (3 local, 4 regional, and 3 national). We learn these bidders using the prior Fourier-sparse learning algorithms, this time including \textbf{SSFT+}, but excluding \textbf{CS-WHT}, since $\P$ is not known in this scenario. 
Table~\ref{tab:mrvm} shows the results: means and standard deviations of the number of queries required, Fourier coefficients recovered, and relative error (estimated using 10,000 samples) taken over the bidder types and 25 runs.

\mypar{Interpretation of results} First, we note that \textbf{SSFT+} can indeed improve over \textbf{SSFT} for set functions that are relevant in practice. Namely, \textbf{SSFT+} consistently learns sparse representations for local and regional bidders, while \textbf{SSFT} fails. \textbf{H-WHT} also achieves perfect reconstruction for local and regional bidders. For the remaining bidders none of the methods achieves perfect reconstruction, which indicates that those bidders do not admit a sparse representation. Second, we observe that, for the local and regional bidders, in the non-orthogonal model~4 basis only half as many coefficients are required as in the WHT basis. Third, \textbf{SSFT+} requires less queries than \textbf{H-WHT} in the Fourier-sparse cases.  

\section{Conclusion} 

We introduced an algorithm for learning set functions that are sparse with respect to various generalized, non-orthogonal Fourier bases. In doing so, our work significantly expands the set of efficiently learnable set functions. As we explained, the new notions of sparsity connect well with preference functions in recommender systems and the notions of complementarity and substitutability, which we consider an exciting avenue for future research.


\section*{Ethics Statement}
Our approach is motivated by a range of real world applications, including modeling preferences in recommender systems and combinatorial auctions, that require the modeling, processing, and analysis of set functions, which is notoriously difficult due to their exponential size. Our work adds to the tool set that makes working with set functions computationally tractable. 
Since the work is of foundational and algorithmic nature we do not see any immediate ethical concerns. In case that the models estimated with our algorithms are used for making decisions (such as recommendations, or allocations in combinatorial auctions), of course additional care has to be taken to ensure that ethical requirements such as fairness are met. These questions are complementary to our work. 

\bibliography{refs}

\begin{thebibliography}{38}
\providecommand{\natexlab}[1]{#1}
\providecommand{\url}[1]{\texttt{#1}}
\providecommand{\urlprefix}{URL }
\expandafter\ifx\csname urlstyle\endcsname\relax
  \providecommand{\doi}[1]{doi:\discretionary{}{}{}#1}\else
  \providecommand{\doi}{doi:\discretionary{}{}{}\begingroup
  \urlstyle{rm}\Url}\fi

\bibitem[{Amrollahi et~al.(2019)Amrollahi, Zandieh, Kapralov, and
  Krause}]{amrollahi2019efficiently}
Amrollahi, A.; Zandieh, A.; Kapralov, M.; and Krause, A. 2019.
\newblock {Efficiently Learning Fourier Sparse Set Functions}.
\newblock In \emph{Advances in Neural Information Processing Systems},
  15094--15103.

\bibitem[{Balog, Radlinski, and Arakelyan(2019)}]{balog2019transparent}
Balog, K.; Radlinski, F.; and Arakelyan, S. 2019.
\newblock {Transparent, Scrutable and Explainable User Models for Personalized
  Recommendation}.
\newblock In \emph{Proc. Conference on Research and Development in Information
  Retrieval (ACM SIGIR)}, 265--274.

\bibitem[{Bernasconi, Codenotti, and Simon(1996)}]{bernasconi1996fourier}
Bernasconi, A.; Codenotti, B.; and Simon, J. 1996.
\newblock {On the Fourier analysis of Boolean functions}.
\newblock \emph{preprint} 1--24.

\bibitem[{Bj{ö}rklund et~al.(2007)Bj{ö}rklund, Husfeldt, Kaski, and
  Koivisto}]{bjorklund2007fourier}
Bj{ö}rklund, A.; Husfeldt, T.; Kaski, P.; and Koivisto, M. 2007.
\newblock Fourier Meets {M}{ö}bius: Fast Subset Convolution.
\newblock In \emph{Proc ACM Symposium on Theory of Computing}, 67--74.

\bibitem[{Brero, Lubin, and Seuken(2019)}]{brero2019machine}
Brero, G.; Lubin, B.; and Seuken, S. 2019.
\newblock {Machine Learning-powered Iterative Combinatorial Auctions}.
\newblock \emph{arXiv preprint arXiv:1911.08042}.

\bibitem[{Buathong, Ginsbourger, and Krityakierne(2020)}]{buathong2019kernels}
Buathong, P.; Ginsbourger, D.; and Krityakierne, T. 2020.
\newblock {Kernels over Sets of Finite Sets using RKHS Embeddings, with
  Application to Bayesian (Combinatorial) Optimization}.
\newblock In \emph{International Conference on Artificial Intelligence and
  Statistics}, 2731--2741.

\bibitem[{Chakrabarty and Huang(2012)}]{chakrabarty2012testing}
Chakrabarty, D.; and Huang, Z. 2012.
\newblock Testing Coverage Functions.
\newblock In \emph{International Colloquium on Automata, Languages, and
  Programming}, 170--181. Springer.

\bibitem[{Cheraghchi and Indyk(2017)}]{cheraghchi2017nearly}
Cheraghchi, M.; and Indyk, P. 2017.
\newblock Nearly optimal deterministic algorithm for sparse Walsh-Hadamard
  transform.
\newblock \emph{ACM Transactions on Algorithms (TALG)} 13(3): 1--36.

\bibitem[{De~Wolf(2008)}]{de2008brief}
De~Wolf, R. 2008.
\newblock {A brief introduction to Fourier analysis on the Boolean cube}.
\newblock \emph{Theory of Computing} 1--20.

\bibitem[{Djolonga and Krause(2017)}]{djolonga2017differentiable}
Djolonga, J.; and Krause, A. 2017.
\newblock Differentiable Learning of Submodular Models.
\newblock In \emph{Advances in Neural Information Processing Systems},
  1013--1023.

\bibitem[{Djolonga, Tschiatschek, and Krause(2016)}]{djolonga2016variational}
Djolonga, J.; Tschiatschek, S.; and Krause, A. 2016.
\newblock Variational Inference in Mixed Probabilistic Submodular Models.
\newblock In \emph{Advances in Neural Information Processing Systems},
  1759--1767.

\bibitem[{Dolhansky and Bilmes(2016)}]{dolhansky2016deep}
Dolhansky, B.~W.; and Bilmes, J.~A. 2016.
\newblock {Deep Submodular Functions: Definitions and Learning}.
\newblock In \emph{Advances in Neural Information Processing Systems},
  3404--3412.

\bibitem[{Feldman, Kothari, and Vondr{\'a}k(2013)}]{feldman2013representation}
Feldman, V.; Kothari, P.; and Vondr{\'a}k, J. 2013.
\newblock Representation, Approximation and Learning of Submodular Functions
  Using Low-rank Decision Trees.
\newblock In \emph{Conference on Learning Theory}, 711--740.

\bibitem[{Hassanieh et~al.(2012)Hassanieh, Indyk, Katabi, and
  Price}]{hassanieh2012nearly}
Hassanieh, H.; Indyk, P.; Katabi, D.; and Price, E. 2012.
\newblock {Nearly Optimal Sparse Fourier Transform}.
\newblock In \emph{Proc. ACM Symposium on Theory of Computing}, 563--578.

\bibitem[{Khare(2009)}]{khare2009vector}
Khare, A. 2009.
\newblock Vector spaces as unions of proper subspaces.
\newblock \emph{Linear algebra and its applications} 431(9): 1681--1686.

\bibitem[{Kocaoglu et~al.(2014)Kocaoglu, Shanmugam, Dimakis, and
  Klivans}]{kocaoglu2014sparse}
Kocaoglu, M.; Shanmugam, K.; Dimakis, A.~G.; and Klivans, A. 2014.
\newblock Sparse Polynomial Learning and Graph Sketching.
\newblock In \emph{Advances in Neural Information Processing Systems},
  3122--3130.

\bibitem[{Krause and Golovin(2014)}]{krause2014submodular}
Krause, A.; and Golovin, D. 2014.
\newblock Submodular function maximization.

\bibitem[{Krause, Singh, and Guestrin(2008)}]{krause2008near}
Krause, A.; Singh, A.; and Guestrin, C. 2008.
\newblock {Near-optimal Sensor Placements in Gaussian processes: Theory,
  Efficient Algorithms and Empirical Studies}.
\newblock \emph{Journal of Machine Learning Research} 9: 235--284.

\bibitem[{Leskovec et~al.(2007)Leskovec, Krause, Guestrin, Faloutsos,
  VanBriesen, and Glance}]{leskovec2007cost}
Leskovec, J.; Krause, A.; Guestrin, C.; Faloutsos, C.; VanBriesen, J.; and
  Glance, N. 2007.
\newblock Cost-effective Outbreak Detection in Networks.
\newblock In \emph{Proc. ACM SIGKDD International Conference on Knowledge
  Discovery and Data Mining}, 420--429.

\bibitem[{Li and Ramchandran(2015)}]{li2015active}
Li, X.; and Ramchandran, K. 2015.
\newblock An Active Learning Framework using Sparse-Graph Codes for Sparse
  Polynomials and Graph Sketching.
\newblock In \emph{Advances in Neural Information Processing Systems},
  2170--2178.

\bibitem[{Nemhauser, Wolsey, and Fisher(1978)}]{nemhauser1978analysis}
Nemhauser, G.~L.; Wolsey, L.~A.; and Fisher, M.~L. 1978.
\newblock {An analysis of approximations for maximizing submodular set
  functions — I}.
\newblock \emph{Mathematical programming} 14(1): 265--294.

\bibitem[{Ostfeld et~al.(2008)Ostfeld, Uber, Salomons, Berry, Hart, Phillips,
  Watson, Dorini, Jonkergouw, Kapelan et~al.}]{ostfeld2008battle}
Ostfeld, A.; Uber, J.~G.; Salomons, E.; Berry, J.~W.; Hart, W.~E.; Phillips,
  C.~A.; Watson, J.-P.; Dorini, G.; Jonkergouw, P.; Kapelan, Z.; et~al. 2008.
\newblock The Battle of the Water Sensor Networks (BWSN): A Design Challenge
  for Engineers and Algorithms.
\newblock \emph{Journal of Water Resources Planning and Management} 134(6):
  556--568.

\bibitem[{Püschel(2018)}]{puschel2018discrete}
Püschel, M. 2018.
\newblock A Discrete Signal Processing Framework for Set Functions.
\newblock In \emph{Proc. International Conference on Acoustics, Speech and
  Signal Processing (ICASSP)}, 4359--4363. IEEE.

\bibitem[{Püschel and Moura(2008)}]{puschel2008algebraic}
Püschel, M.; and Moura, J.~M. 2008.
\newblock {Algebraic signal processing theory: Foundation and 1-D time}.
\newblock \emph{IEEE Trans. on Signal Processing} 56(8): 3572--3585.

\bibitem[{P{ü}schel and Wendler(2020)}]{puschel2020discrete}
P{ü}schel, M.; and Wendler, C. 2020.
\newblock Discrete Signal Processing with Set Functions.
\newblock \emph{arXiv preprint arXiv:2001.10290}.

\bibitem[{Raskhodnikova and Yaroslavtsev(2013)}]{raskhodnikova2013learning}
Raskhodnikova, S.; and Yaroslavtsev, G. 2013.
\newblock {Learning pseudo-Boolean k-DNF and Submodular Functions}.
\newblock In \emph{Proc. ACM-SIAM Symposium on Discrete Algorithms},
  1356--1368.

\bibitem[{Scheibler, Haghighatshoar, and Vetterli(2013)}]{scheibler2013fast}
Scheibler, R.; Haghighatshoar, S.; and Vetterli, M. 2013.
\newblock {A Fast Hadamard Transform for Signals with Sub-linear Sparsity}.
\newblock In \emph{Proc. Annual Allerton Conference on Communication, Control,
  and Computing}, 1250--1257. IEEE.

\bibitem[{Sharma, Harper, and Karypis(2019)}]{sharma2019learning}
Sharma, M.; Harper, F.~M.; and Karypis, G. 2019.
\newblock Learning from Sets of Items in Recommender Systems.
\newblock \emph{ACM Trans. on Interactive Intelligent Systems (TiiS)} 9(4):
  1--26.

\bibitem[{Srinivas et~al.(2010)Srinivas, Krause, Kakade, and
  Seeger}]{srinivas2009gaussian}
Srinivas, N.; Krause, A.; Kakade, S.~M.; and Seeger, M. 2010.
\newblock Gaussian Process Optimization in the Bandit Setting: No Regret and
  Experimental Design.
\newblock In \emph{Proc. International Conference on Machine Learning (ICML)},
  1015--1022.

\bibitem[{Stobbe and Krause(2012)}]{stobbe2012learning}
Stobbe, P.; and Krause, A. 2012.
\newblock {Learning Fourier Sparse Set Functions}.
\newblock In \emph{Artificial Intelligence and Statistics}, 1125--1133.

\bibitem[{Tschiatschek, Sahin, and
  Krause(2018)}]{tschiatschek2018differentiable}
Tschiatschek, S.; Sahin, A.; and Krause, A. 2018.
\newblock Differentiable Submodular Maximization.
\newblock In \emph{Proc. International Joint Conference on Artificial
  Intelligence}, 2731--2738.

\bibitem[{Vlastelica et~al.(2019)Vlastelica, Paulus, Musil, Martius, and
  Rol{\'\i}nek}]{vlastelica2019differentiation}
Vlastelica, M.; Paulus, A.; Musil, V.; Martius, G.; and Rol{\'\i}nek, M. 2019.
\newblock {Differentiation of Blackbox Combinatorial Solvers}.
\newblock \emph{arXiv preprint arXiv:1912.02175} .

\bibitem[{Wang et~al.(2019)Wang, Donti, Wilder, and Kolter}]{wang2019satnet}
Wang, P.-W.; Donti, P.~L.; Wilder, B.; and Kolter, Z. 2019.
\newblock {SATNet: Bridging deep learning and logical reasoning using a
  differentiable satisfiability solver}.
\newblock \emph{arXiv preprint arXiv:1905.12149} .

\bibitem[{Weiss, Lubin, and Seuken(2017)}]{weiss2017sats}
Weiss, M.; Lubin, B.; and Seuken, S. 2017.
\newblock SATS: A Universal Spectrum Auction Test Suite.
\newblock In \emph{Proceedings of the 16th Conference on Autonomous Agents and
  MultiAgent Systems}, 51--59.

\bibitem[{Weissteiner and Seuken(2020)}]{weissteiner2019deep}
Weissteiner, J.; and Seuken, S. 2020.
\newblock {Deep Learning-powered Iterative Combinatorial Auctions}.
\newblock In \emph{34th AAAI Conference on Artificial Intelligence}.

\bibitem[{Weissteiner et~al.(2020)Weissteiner, Wendler, Seuken, Lubin, and
  P{\"u}schel}]{weissteiner2020fourier}
Weissteiner, J.; Wendler, C.; Seuken, S.; Lubin, B.; and P{\"u}schel, M. 2020.
\newblock Fourier Analysis-based Iterative Combinatorial Auctions.
\newblock \emph{arXiv preprint arXiv:2009.10749} .

\bibitem[{Wendler and Püschel(2019)}]{wendler2019sampling}
Wendler, C.; and Püschel, M. 2019.
\newblock Sampling Signals on Meet/Join Lattices.
\newblock In \emph{Proc. Global Conference on Signal and Information Processing
  (GlobalSIP)}.

\bibitem[{Zaheer et~al.(2017)Zaheer, Kottur, Ravanbakhsh, Poczos,
  Salakhutdinov, and Smola}]{zaheer2017deep}
Zaheer, M.; Kottur, S.; Ravanbakhsh, S.; Poczos, B.; Salakhutdinov, R.~R.; and
  Smola, A.~J. 2017.
\newblock Deep Sets.
\newblock In \emph{Advances in Neural Information Processing Systems},
  3391--3401.

\end{thebibliography}
\appendix
\section*{Appendix}
\section{Preference Functions}

\begin{table*}[]
\centering
	\caption{Shifts and Fourier concepts.\label{tab:sfdsp_appendix}}
	\tiny
	$
	\begin{array}{@{}l llllll@{}}\toprule
	\text{model } & \text{shift } T_Q s\at{A} & B\text{-th basis vec. } \mbf{f}^B_A =  & F \text{ (sum)}: \widehat{s}\at{B} = & F^{-1} \text{ (sum)}: {s}_A = &  \text{known as} \\ \midrule

	 3 & s\at{A \setminus Q} & (-1)^{|B|}\iota_{B \subseteq A} & \dps\sum_{\stackrel{A\subseteq B}{\hphantom{ A \cup B = N}}} (-1)^{|A|}s\at{A} & 
	\hphantom{\tfrac{1}{2^{n}}} \dps\sum_{\stackrel{B\subseteq A}{\hphantom{A \cap B = \emptyset}}} (-1)^{|B|}\widehat{s}\at{B} &
	 
	\text{Zeta transform \cite{bjorklund2007fourier}}
	\\[5mm]
	4 & s\at{A \cup Q} & \iota_{A \cap B = \emptyset} & \dps\sum_{\stackrel{A\subseteq N:}{ A \cup B = N}} (-1)^{|A \cap B|}s\at{A} & 
	\hphantom{\tfrac{1}{2^{n}}} \dps\sum_{\stackrel{B\subseteq N:}{A \cap B = \emptyset}} \widehat{s}\at{B} &
	
	\text{W-transform \cite{chakrabarty2012testing}}  \\[5mm]
	5 & s\at{A \setminus Q \cup Q \setminus A} & \tfrac{1}{2^{n}}(-1)^{|A \cap B|} & \dps\sum_{\stackrel{A\subseteq N}{\hphantom{ A \cup B = N}}} (-1)^{|A \cap B|} s\at{A} & 
	\tfrac{1}{2^{n}}\dps\sum_{\stackrel{B\subseteq N}{\hphantom{A \cap B = \emptyset}}} (-1)^{|A \cap B|}\widehat{s}\at{B} &
	
	\text{WHT \cite{bernasconi1996fourier}}  \\\bottomrule

	\end{array} 
	$
	
\end{table*}

Let $N = \set{x_1, \dots, x_n}$ denote our ground set. For this section, we assume $x_1 = 1, \dots, x_n = n$. 

An important aspect of our work is that certain set functions are sparse in one basis but not in the others. In this section we show that {\em preference functions}~\cite{djolonga2016variational} indeed constitute a class of set functions that are sparse w.r.t.~model~4 (see Table~\ref{tab:sfdsp_appendix}, which we replicate from the paper for convenience) and dense w.r.t.~model~5 (= WHT basis). Preference functions naturally occur in machine learning tasks on discrete domains such as recommender systems and auctions, in which they, e.g., are used to model complementary- and substitution effects between goods. Goods complement each other when their combined utility is greater than the sum of their individual utilities. E.g., a pair of shoes is more useful than the two shoes individually and a round trip has higher utility than the combined individual utilities of outward and inward flight. Analogously, goods substitute each other when their combined utility is smaller than the sum of their individual utilities. E.g., it might not be necessary to buy a pair of glasses if you already have one. Formally, a preference function is given by
\begin{linenomath*}\begin{equation}\label{eq:fldc_appendix}
\begin{aligned}
p: 2^N \to \mathbb{R}; A \mapsto \sum_{i \in A} u_i + \sum_{\ell=1}^L \left(\max_{i \in A} r_{\ell i} - \sum_{i \in A} r_{\ell i}\right) \\- \sum_{k=1}^K \left(\max_{i \in A} a_{ki} - \sum_{i \in A} a_{ki}\right).
\end{aligned}
\end{equation}\end{linenomath*}
Equation~\eqref{eq:fldc_appendix} is composed of a modular part parametrized by $u \in \R^n$, a repulsive part parametrized by $r \in \R_{\geq 0}^{L \times n}$, with ${L \in \mathbb{N}}$, and an attractive part parametrized by $a \in \R_{\geq 0}^{K \times n}$, with $K \in \mathbb{N}$.  

\begin{lemma}\label{lem:fldc_appendix} Preference functions of the form \eqref{eq:fldc_appendix} are Fourier-sparse w.r.t.~model~4.
\end{lemma}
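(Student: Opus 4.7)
The plan is to exploit linearity of the model-4 Fourier transform and decompose $p$ into atoms whose Fourier supports I can bound by hand. First I would rewrite
$$p(A) = m(A) + \sum_{\ell=1}^{L} \max_{i \in A} r_{\ell i} \;-\; \sum_{k=1}^{K} \max_{i \in A} a_{k i},$$
where $m(A) = \sum_{i=1}^n v_i \, \iota_{i \in A}$ is the combined modular part with $v_i = u_i - \sum_\ell r_{\ell i} + \sum_k a_{k i}$, and $\max_{i \in \emptyset}$ is understood as $0$. This reduces the task to bounding the supports of $\widehat{m}$ and of each of the $L+K$ max atoms separately, and then taking a union.

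For the modular atom I would use $\iota_{i \in A} = 1 - \iota_{A \cap \{x_i\} = \emptyset}$, recall that the model-4 basis element indexed by $B$ is $\mbf{f}^B_A = \iota_{A \cap B = \emptyset}$, and obtain
$$m(A) = \Big(\sum_{i} v_i\Big)\, \mbf{f}^{\emptyset}_A \;-\; \sum_{i=1}^n v_i\, \mbf{f}^{\{x_i\}}_A,$$
so $\supp(\widehat{m}) \subseteq \{\emptyset, \{x_1\}, \ldots, \{x_n\}\}$, of size at most $n+1$. For a generic max atom $M(A) = \max_{i \in A} r_i$ I would sort the weights as $r_{\pi(1)} \geq \cdots \geq r_{\pi(n)}$, set $r_{\pi(n+1)} = 0$, and verify the telescoping identity
$$M(A) = \sum_{j=1}^{n} \bigl(r_{\pi(j)} - r_{\pi(j+1)}\bigr)\, \iota_{A \cap B_j \neq \emptyset}, \qquad B_j := \{\pi(1), \ldots, \pi(j)\},$$
by checking both $A = \emptyset$ (all indicators vanish) and $A \neq \emptyset$ (the sum telescopes to $r_{\pi(j^*)}$, where $j^* = \min\{j : \pi(j) \in A\}$). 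Rewriting $\iota_{A \cap B_j \neq \emptyset} = 1 - \mbf{f}^{B_j}_A$ and using $\sum_j (r_{\pi(j)} - r_{\pi(j+1)}) = r_{\pi(1)}$ places $\widehat{M}$ on the chain $\{\emptyset, B_1, B_2, \ldots, B_n\}$, again of size at most $n+1$.

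Combining the decomposition with a union bound on supports, and counting $\emptyset$ only once across the modular atom and all $L + K$ max atoms, yields at most $(n+1) + Ln + Kn = 1 + n + Ln + Kn$ nonzero coefficients in $\widehat{p}$, as claimed. The main obstacle I foresee is the telescoping identity for $\max$: treating the edge case $A = \emptyset$ correctly and recognizing that each max atom contributes only $n$ new \emph{chain} sets on top of the shared empty-set frequency, so that the count matches the claimed bound tightly. Everything else is linearity bookkeeping; the crucial feature of the model-4 basis that makes the argument work is that $\iota_{A \cap B = \emptyset}$ exactly captures the "max on a chain" structure, whereas the WHT basis would render these atoms generically dense (cf.~Remark~\ref{rem:fldc}).
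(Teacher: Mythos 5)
Your proposal is correct and arrives at the paper's quantitative bound $1+n+Ln+Kn$ via the same basic decomposition---modular part plus $L+K$ max atoms, with each max atom resolved along a chain obtained by sorting its weights---but you execute the key step by a genuinely different (and more self-contained) route. The paper converts each atom $\max_{i\in A}r_{\ell i}$ into a weighted coverage function on a universe of size $n$ with nested cover sets $S_{\sigma(1)}\subset\cdots\subset S_{\sigma(n)}$, and then invokes the fact that such coverage functions are $(n+1)$-Fourier-sparse w.r.t.\ model~4, a fact that rests on the Venn-fragment formula for coverage-function Fourier coefficients stated (without proof) in the main text. You bypass coverage functions entirely: your telescoping identity $M(A)=\sum_{j=1}^{n}(r_{\pi(j)}-r_{\pi(j+1)})\,\iota_{A\cap B_j\neq\emptyset}$, combined with $\iota_{A\cap B_j\neq\emptyset}=\mbf{f}^{\emptyset}_A-\mbf{f}^{B_j}_A$, is an explicit expansion in the model-4 basis, so the support bound is immediate and the coefficients are exhibited concretely, namely $\widehat{M}(\emptyset)=r_{\pi(1)}$ and $\widehat{M}(B_j)=-(r_{\pi(j)}-r_{\pi(j+1)})$. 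What the paper's route buys is the conceptual identification of preference functions as generalized coverage functions, which it reuses elsewhere (the Venn-diagram picture and the density-w.r.t.-WHT remark). What yours buys is elementarity---no appeal to the fragment formula---plus cleaner edge cases: your descending sort with $\geq$ handles tied weights verbatim (ties merely make some coefficients vanish), whereas the paper's strict ascending sort $r_{\ell\sigma(1)}<\cdots<r_{\ell\sigma(n)}$ tacitly assumes distinct entries, and you make the convention $\max_{i\in\emptyset}=0$ explicit. One small observation: your union count is an upper bound in any case, since $B_1=\set{\pi(1)}$ is a singleton already counted in the modular support, so the true support can be strictly smaller; this does not affect the claim.
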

\begin{proof} In order to prove that preference functions are sparse w.r.t.~model~4 we exploit the linearity of the Fourier transform. That is, we are going to show that $p$ is Fourier sparse by showing that it is a sum of Fourier sparse set functions. In particular, there are only two types of summands (= set functions): 

First, $A \mapsto \sum_{i \in A} u_i$, $A \mapsto - \sum_{i \in A} r_{\ell i}$, for $\ell \in \set{1, \dots, L}$, and $A \mapsto \sum_{i \in A} a_{ki}$, for $k \in \set{1, \dots, K}$, are modular set functions whose only non-zero Fourier coefficients are summed up in $\widehat{p}\at{\set{x}}$ for $x \in N$ and $\widehat{p}\at{\emptyset}$. 

Second, $f_{\ell}(A) = \max_{i \in A} r_{{\ell}i}$, for $\ell \in \set{1, \dots, L}$, and $g_k(A) = - \max_{i \in A} a_{ki}$, for $k \in \set{1, \dots, K}$, are weighted- and  negative weighted coverage functions, respectively. In order to see that ${f_{\ell}\at{A} = \max_{i \in A} r_{{\ell}i}}$ is a weighted coverage function, observe that the codomain of $f_{\ell}$ is $\set{r_{{\ell}1}, \dots, r_{{\ell}n}}$. Let $\sigma: N \to N$ denote the permutation that sorts $r_{\ell}$, i.e., $r_{{\ell}\sigma(1)} < r_{{\ell}\sigma(2)} < \cdots < r_{{\ell}\sigma(n)}$. Let $U = \set{1, \dots, n}$ denote the universe. We set ${w(u) = r_{{\ell}\sigma(u)} - r_{{\ell}\sigma(u - 1)}}$ for $u \geq 2$ and ${w(u) = r_{{\ell}\sigma(1)}}$ for $u = 1$. Let $M = \set{S_1, \dots, S_n}$. Let the set $S_i = \set{1, \dots, \sigma^{-1}(i)}$. In words, $S_i$ covers all elements $u$ that correspond to smaller values $r_{\ell \sigma(u)}$, i.e., $u$ with  $r_{\ell \sigma(u)} \leq r_{\ell i}$. By construction of $w(u)$ we have $\mbf{w}(S_i) = r_{\ell \sigma(1)} + (r_{\ell \sigma(2)} - r_{\ell \sigma(1)}) + \dots + (r_{\ell \sigma(\sigma^{-1}(i))} - r_{\ell \sigma(\sigma^{-1}(i) - 1)}) = r_{\ell i}$, and, because of  $S_{\sigma(1)} \subset S_{\sigma(2)} \subset \cdots \subset S_{\sigma(n)}$ we have, for all $A \subseteq N$, 
\begin{linenomath*}\begin{equation}
\mbf{w}\left(\bigcup_{i \in A} S_i \right) = \mbf{w}(S_j) = r_{\ell j},
\end{equation}\end{linenomath*}
where $j$ is the element in $A$ that satisfies $S_i \subseteq S_j$ for all $i \in A$. Now, observe that $S_i \subseteq S_j$ is equivalent to $r_{\ell i} \leq r_{\ell j}$. Thus, by definition of $\max$ we have ${r_{\ell j} = \max_{i \in A} r_{\ell i}}$. 

The same construction works for $g_k(A) = - \max_{i \in A} a_{ki}$. Weighted coverage functions with $|U| = n$ are $n$-Fourier-sparse with respect to the W-transform~\cite{chakrabarty2012testing} and $(n + 1)$-Fourier-sparse with respect to model~4 (one additional coefficient for $\emptyset$). The preference function $p$ is a sum of $1 + K + L$ modular set functions, $K$ sparse weighted coverage functions that require at most $n$ additional Fourier coefficients (with $B \neq \emptyset$) each and $L$ sparse negative weighted coverage functions that require at most $n$ additional Fourier coefficients each. Therefore, $p$ has at most $1+ n + L n + K n = O(L n + K n)$ non-zero Fourier coefficients w.r.t.~model~4.
\end{proof}

\begin{remark} The construction in the second part of the proof of Lemma~\ref{lem:fldc_appendix} shows that preference functions with $L + K \geq 1$ are dense w.r.t.~the WHT basis, because there is an element in $U$ that is covered by all $S_1, \dots, S_n$.
\end{remark}

\section{SSFT: Support Discovery}

In this section we prove the equations necessary for the support discovery mechanism of $\textbf{SSFT}$. 

Let $s: 2^N \to \mathbb{R}$ be a set function and let $M_i = \set{x_1, \dots, x_i} \subseteq N$. As before we denote the restriction of $s$ to $M_i$ with
\begin{linenomath*}\begin{equation}\label{eq:restriction_simple}
\srto{s}{2^{M_i}}: 2^{M_i} \to \mathbb{R}; A \mapsto s\at{A}.
\end{equation}\end{linenomath*}

Recall the problem we want to solve and our algorithms (Fig.~\ref{fig:algorithms}) for doing so (under mild assumptions on the Fourier coefficients).

\begin{problem}[Sparse Fourier transform]\label{prob:general_appendix} Given oracle access to query a $k$-Fourier-sparse set function $s$, compute its Fourier support and associated Fourier coefficients.
\end{problem}

\begin{figure*}[]
    \begin{minipage}[t]{0.5\textwidth}
        \null
\begin{ssft}[H]
\renewcommand{\thealgorithm}{}
\caption{Sparse set function Fourier transform of $s$}\label{alg:sdsft4_appendix}
\begin{algorithmic}[1]
	\State{$M_0 \gets \emptyset$}
	\State{$\hsrto{s}{2^{M_0}}\at{\emptyset} \gets s\at{\emptyset}$}
	\For{$i = 1, \dots, n$}
	\State{$M_i \gets M_{i-1} \cup \set{x_i}$}
	\State{$\mathcal{B} \gets \emptyset, \mathcal{A} \gets \emptyset$}
	\For{$B \in \supp(\hsrto{s}{2^{M_{i-1}}})$}
	\State{$\mathcal{B} \gets \mathcal{B} \cup \set{B, B \cup \set{x_i}}$}
	\State{$\mathcal{A} \gets \mathcal{A} \cup \set{M_i \setminus
            B, M_i \setminus (B \cup \set{x_i})}$}
	\EndFor
	\State{$\mbf{s}_{\mathcal{A}} \gets (s\at{A})_{A \in \mathcal{A}}$}
	\State{$\mbf{x} \gets \text{ solve } \mbf{s}_{\mathcal{A}} = F^{-1}_{\mathcal{A}\mathcal{B}} \mbf{x}$ for $\mbf{x}$}
	\For{$B \in \mathcal{B}$ with $\mbf{x}_B \neq 0$}
	\State $\hsrto{s}{2^{M_i}}\at{B} \gets \mbf{x}_B$
	\EndFor
	\EndFor
	\State \Return{$\hsrto{s}{2^{M_n}}$}
    \end{algorithmic}
\end{ssft}
\end{minipage}
\hfill
\begin{minipage}[t]{0.47\textwidth}
    \null
\begin{ssft2}[H]
\renewcommand{\thealgorithm}{}
\caption{Filtering based \textbf{SSFT} of $s$}\label{alg:sdsft4_appendix_general}
\begin{algorithmic}[1]
	\State{\em // Sample random coefficients.}
	\State{$h\at{\emptyset} = 1$
          \vphantom{{$\hsrto{s}{2^{M_0}}\at{\emptyset} \gets
              s\at{\emptyset}$}}} 
	\For{$x \in \{x_1, \dots, x_n\}$}
	\State{$h\at{\set{x}} \gets c \sim \mathcal{N}(0, 1)$}
	\EndFor
	\State{\em // Fourier transform of filtered set function.}
	\State{$\widehat{h * s} \gets $\textbf{SSFT}($h *
          s$)}
	\State{\em // Compute the original coefficients.}
	\For{$B \in \supp(\widehat{h * s})$}
	\State{$\widehat{s}_B \gets {\widehat{(h * s)}\at{B}}/{\overline{h}\at{B}}$}
	\EndFor	
	\State \Return{$\widehat{s}$}
        \Statex
        \Statex
        \Statex
    \end{algorithmic}
    \vspace{-0.2pt}
\end{ssft2}
\end{minipage}
\caption{\textbf{SSFT} and \textbf{SSFT+} for model~4.}\label{fig:algorithms}
\end{figure*}

\begin{lemma}[Model~4]\label{thm:hashing_model4} Using prior notation we have
\begin{linenomath*}\begin{equation}\label{eq:hashing_model4}
\hsrto{s}{2^{M_i}}\at{B} = \sum_{A \subseteq N \setminus M_i} \widehat{s}\at{A \cup B}.
\end{equation}\end{linenomath*}
\end{lemma}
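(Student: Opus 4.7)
The plan is to derive the identity by applying the inverse Fourier transform \eqref{eq:s} to $s$, restricting attention to argument sets $A \subseteq M_i$, and then matching the resulting expansion against the inverse Fourier transform of $\srto{s}{2^{M_i}}$ itself. Uniqueness of Fourier expansion (i.e.\ invertibility of $F^{-1}$ on $2^{M_i}$) will deliver the claim.

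First, for any $A \subseteq M_i \subseteq N$ we have by \eqref{eq:s}
\begin{linenomath*}\begin{equation*}
\srto{s}{2^{M_i}}(A) = s(A) = \sum_{B \subseteq N:\, A \cap B = \emptyset} \widehat{s}(B).
\end{equation*}\end{linenomath*}
The next step is to split the index $B$ according to its intersection with $M_i$. Write $B = B' \dcup C$ with $B' = B \cap M_i \subseteq M_i$ and $C = B \setminus M_i \subseteq N \setminus M_i$. Because $A \subseteq M_i$, the condition $A \cap B = \emptyset$ is equivalent to $A \cap B' = \emptyset$ (the part $C$ lives outside $M_i$ and is automatically disjoint from $A$). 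Grouping accordingly gives
\begin{linenomath*}\begin{equation*}
\srto{s}{2^{M_i}}(A) = \sum_{B' \subseteq M_i:\, A \cap B' = \emptyset} \Biggl(\sum_{C \subseteq N \setminus M_i} \widehat{s}(B' \cup C)\Biggr).
\end{equation*}\end{linenomath*}

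On the other hand, applying the inverse transform \eqref{eq:s} directly to the restricted function $\srto{s}{2^{M_i}}$ (viewed as a set function on the ground set $M_i$) yields
\begin{linenomath*}\begin{equation*}
\srto{s}{2^{M_i}}(A) = \sum_{B' \subseteq M_i:\, A \cap B' = \emptyset} \hsrto{s}{2^{M_i}}(B').
\end{equation*}\end{linenomath*}
The final step compares the two expansions. They are both expansions of $\srto{s}{2^{M_i}}$ in the Fourier basis $\{\mbf{f}^{B'}\}_{B' \subseteq M_i}$ of $\R^{2^{M_i}}$, and this basis is genuinely a basis because its change-of-basis matrix is the $2^{|M_i|} \times 2^{|M_i|}$ upper-triangular Kronecker-product matrix in \eqref{ftmat} (for $n$ replaced by $|M_i|$), which is invertible. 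Consequently, coefficients must agree index-by-index, giving $\hsrto{s}{2^{M_i}}(B') = \sum_{C \subseteq N \setminus M_i} \widehat{s}(B' \cup C)$ for every $B' \subseteq M_i$, which is \eqref{eq:hashing_model4} after renaming $B' \to B$ and $C \to A$.

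The main (and only) subtle point is the splitting of the index $B$ and verifying that $A \cap B = \emptyset$ really reduces to $A \cap B' = \emptyset$ when $A \subseteq M_i$; everything else is a linearity/uniqueness argument. There is no genuine obstacle: the analogous identities for models~3 and~5 will follow by the same template, with $A \cap B = \emptyset$ replaced by the corresponding support condition from Table~\ref{tab:sfdsp}.
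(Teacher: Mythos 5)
Your proof is correct and takes essentially the same route as the paper's: both expand the restricted function via the inverse transform \eqref{eq:s}, split the frequency index $B$ into its parts inside and outside $M_i$ (your condition $A \cap B' = \emptyset$ is the paper's $B \subseteq M_i \setminus C$), and conclude by uniqueness of the Fourier expansion on the ground set $M_i$, which the paper phrases as \eqref{eq:hashing_model4} being the unique solution of the resulting $2^i$ equations and you phrase as invertibility of the restricted $F^{-1}$. One cosmetic nit: the Kronecker factor in \eqref{ftmat} is anti-triangular rather than upper-triangular, but it is invertible either way, so the uniqueness step stands.
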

\begin{proof}
We have $\srto{s}{2^{M_i}}\at{C} = s\at{C}$ per definition, for all $C \in 2^{M_i}$. Performing the Fourier expansion on both sides yields
\begin{linenomath*}\begin{equation}\label{eq:hashing_model4_ls}
\begin{aligned}
\sum_{B \subseteq M_i \setminus C} \hsrto{s}{2^{M_i}}\at{B} &= \sum_{B \subseteq N \setminus C} \widehat{s}\at{B} \\
&= \sum_{B \subseteq M_i \setminus C} \sum_{A \subseteq N \setminus M_i} \widehat{s}\at{A \cup B}.
\end{aligned}
\end{equation}\end{linenomath*}
\eqref{eq:hashing_model4} is the unique solution for the system of $2^i$ equations given by \eqref{eq:hashing_model4_ls}.
\end{proof}

\begin{corollary}[Model~4]\label{cor:sparsity_propagation_model4} Let $x \in N \setminus M_i$. Using prior notation we have
\begin{linenomath*}\begin{equation}
\begin{aligned}\label{eq:sparsity_propagation_model4}
\hsrto{s}{2^{M_i}}\at{B} &= \hsrto{s}{2^{M_i \cup \set{x}}}\at{B} + \hsrto{s}{2^{M_i \cup \set{x}}}\at{B \cup \set{x}}.
\end{aligned}
\end{equation}\end{linenomath*} 
\end{corollary}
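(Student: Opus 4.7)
The plan is to apply Lemma~\ref{thm:hashing_model4} twice and then split the outer sum according to whether $x$ appears or not. This is a purely bookkeeping argument: the relation is essentially a ``partial summation'' along the single coordinate $x$, and the corollary just records what happens when the restriction's ground set is enlarged by one element.

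Concretely, I would start by invoking Lemma~\ref{thm:hashing_model4} on $M_i$ to write
\begin{equation*}
\hsrto{s}{2^{M_i}}\at{B} = \sum_{A \subseteq N \setminus M_i} \widehat{s}\at{A \cup B}.
\end{equation*}
Since $x \in N \setminus M_i$, every $A \subseteq N \setminus M_i$ falls into exactly one of two classes: either $x \notin A$, in which case $A \subseteq N \setminus (M_i \cup \set{x})$; or $x \in A$, in which case $A = A' \cup \set{x}$ for a unique $A' \subseteq N \setminus (M_i \cup \set{x})$. Splitting the sum along this dichotomy and relabeling gives
\begin{equation*}
\hsrto{s}{2^{M_i}}\at{B} = \sum_{A' \subseteq N \setminus (M_i \cup \set{x})} \widehat{s}\at{A' \cup B} + \sum_{A' \subseteq N \setminus (M_i \cup \set{x})} \widehat{s}\at{A' \cup B \cup \set{x}}.
\end{equation*}

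To finish, I would apply Lemma~\ref{thm:hashing_model4} again, this time with $M_i$ replaced by $M_i \cup \set{x}$, once for frequency $B$ (valid since $B \subseteq M_i \subseteq M_i \cup \set{x}$) and once for frequency $B \cup \set{x}$ (valid since $B \cup \set{x} \subseteq M_i \cup \set{x}$). The two sums above are precisely $\hsrto{s}{2^{M_i \cup \set{x}}}\at{B}$ and $\hsrto{s}{2^{M_i \cup \set{x}}}\at{B \cup \set{x}}$, respectively, yielding \eqref{eq:sparsity_propagation_model4}.

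There is essentially no hard step here; the only care needed is to verify that both $B$ and $B \cup \set{x}$ are admissible frequency labels for the enlarged restriction (i.e.~subsets of $M_i \cup \set{x}$), and that the disjoint decomposition of $\{A : A \subseteq N \setminus M_i\}$ by membership of $x$ induces a bijection with $\{A' : A' \subseteq N \setminus (M_i \cup \set{x})\} \sqcup \{A' \cup \set{x} : A' \subseteq N \setminus (M_i \cup \set{x})\}$. Both are immediate from set-theoretic bookkeeping, so the corollary follows directly from Lemma~\ref{thm:hashing_model4}.
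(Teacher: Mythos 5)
Your proposal is correct and is essentially identical to the paper's own proof of Corollary~\ref{cor:sparsity_propagation_model4}: both apply Lemma~\ref{thm:hashing_model4} to $\hsrto{s}{2^{M_i}}\at{B}$, split the sum over $A \subseteq N \setminus M_i$ according to whether $x \in A$, relabel, and recognize the two resulting sums as $\hsrto{s}{2^{M_i \cup \set{x}}}\at{B}$ and $\hsrto{s}{2^{M_i \cup \set{x}}}\at{B \cup \set{x}}$ via the same lemma applied to $M_i \cup \set{x}$. The bookkeeping checks you note (admissibility of the frequency labels and the disjoint-union bijection) are implicit in the paper's derivation and are correct.
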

\begin{proof}
The claim follows from the simple derivation
\begin{linenomath*}\begin{equation}
\begin{aligned}\label{eq:sparsity_propagation_model4_proof}
\hsrto{s}{2^{M_i}}\at{B} &= \sum_{A \subseteq N \setminus M_i} \widehat{s}\at{A \cup B}\\
&= \sum_{A \subseteq N \setminus (M_i \cup \set{x})} \widehat{s}\at{A \cup B} \\ & \ + \sum_{A \subseteq N \setminus (M_i \cup \set{x})} \widehat{s}\at{A \cup B \cup \set{x}}\\
&= \hsrto{s}{2^{M_i \cup \set{x}}}\at{B} + \hsrto{s}{2^{M_i \cup \set{x}}}\at{B \cup \set{x}}.
\end{aligned}
\end{equation}\end{linenomath*} 
\end{proof}

Corollary~\ref{cor:sparsity_propagation_model4} is used in lines~6-8 of \textbf{SSFT} in Fig.~\ref{fig:algorithms} to find a superset of the Fourier coefficients of $\hsrto{s}{2^{M_i}}$ using $\supp(\hsrto{s}{2^{M_{i-1}}})$.

\section{SSFT: Pathological Examples}

In this section we provide proofs and derivations of the sets of pathological set functions $\D_1$ (for \textbf{SSFT}) and $\D_2$ (for \textbf{SSFT+}). In order to do so, we consider set functions $s$ that are ${k\text{-Fourier-sparse}}$ (but not $(k-1)$-Fourier-sparse) with  support ${\supp(\widehat{s}) = \set{B_1, \dots, B_k} = \mathcal{B}}$, i.e., 
${\set{s: 2^N \to \mathbb{R}: \widehat{s}\at{B} \neq 0 \text{ iff } B
  \in \mathcal{B}}}$, which is isomorphic to
\begin{linenomath*}\begin{equation}
    \mathcal{S} =
\set{\widehat{\mbf{s}} \in \mathbb{R}^k: \widehat{\mbf{s}}_i \neq 0 \text{ for all } i \in \set{1, \dots, k}}.
\end{equation}\end{linenomath*}
Here, and in the following, we identify $i \in \set{1, \dots, k}$ with $B_i$ and, in particular, $\widehat{\mbf{s}}_i$ with $\widehat{s}\at{B_i}$. We denote the Lebesgue measure on $\mathbb{R}^k$ with $\lambda$ and set $\P^{M_i}_C \mydef {\set{B \in \B: B \cap M_i = C}}$. We start with the analysis of \textbf{SSFT}.

\subsection{Pathological Set Functions for SSFT}

\textbf{SSFT} fails to compute the Fourier coefficients for which $\hsrto{s}{2^{M_i}}\at{C} = 0$ despite $\P^{M_i}_C \neq \emptyset$. Thus, the set of pathological set functions $\D_1$ can be written as the finite union of kernels of the form
\begin{linenomath*}\begin{equation}
\K_1(M_i, C) = \set{\widehat{\mbf{s}} \in \R^k: \hsrto{s}{2^{M_i}}\at{C} = 0} 
\end{equation}\end{linenomath*}
intersected with $\S$.

\begin{theorem}\label{thm:alg1_pathological_appendix} Using prior notation, the set of pathological set functions for \textbf{SSFT} is given by
\begin{linenomath*}\begin{equation}
\begin{aligned}
\D_1 &\mydef \set{\widehat{\mbf{s}} \in \S: \text{ \textbf{SSFT} fails}}\\ &= \bigcup_{i = 0}^{n-1}\bigcup_{C \subseteq M_i: \P^{M_i}_C \neq \emptyset} \K_1(M_i, C) \cap \S, 
\end{aligned}
\end{equation}\end{linenomath*}
and has Lebesgue measure zero, i.e., $\lambda(\D_1) = 0$.
\end{theorem}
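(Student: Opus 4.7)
The plan is to prove the theorem in two parts: (a) establish the set-theoretic identity describing $\D_1$, and (b) verify that the right-hand side is Lebesgue-null.

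For part (a), I would argue by induction on $i$ that after iteration $i$ of \textbf{SSFT} the support $\supp(\hsrto{s}{2^{M_i}})$ and the associated coefficient values have been correctly recovered, provided no cancellation event $\hsrto{s}{2^{M_j}}\at{C}=0$ with $\P_C^{M_j}\neq\emptyset$ has occurred at any earlier stage $j<i$. The inductive step rests on Corollary~\ref{cor:sparsity_propagation_model4}: the identity $\hsrto{s}{2^{M_{i-1}}}\at{B} = \hsrto{s}{2^{M_i}}\at{B} + \hsrto{s}{2^{M_i}}\at{B\cup\set{x_i}}$ ensures that every nonzero coefficient of $\hsrto{s}{2^{M_i}}$ descends from a nonzero coefficient of $\hsrto{s}{2^{M_{i-1}}}$, hence appears in the candidate set $\mathcal{B}$ built in lines~6--8. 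Theorem~\ref{thm:sampling4} applied to the restriction $\srto{s}{2^{M_i}}$ then guarantees invertibility of $F^{-1}_{\mathcal{A}\mathcal{B}}$ for the sample set $\mathcal{A}$ of line~8, so the linear solve in line~11 recovers the true values on $\mathcal{B}$. Conversely, a cancellation $\hsrto{s}{2^{M_i}}\at{C}=0$ with $\P_C^{M_i}\neq\emptyset$ drops a necessary seed from the recovered support and causes the algorithm to miss one or more downstream Fourier coefficients. Gathering these failure events across $i \in \set{0,\dots,n-1}$ and $C\subseteq M_i$ with $\P_C^{M_i}\neq\emptyset$ yields exactly the union in the theorem.

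For part (b), Lemma~\ref{thm:hashing_model4} gives
\[\hsrto{s}{2^{M_i}}\at{C} \;=\; \sum_{A\subseteq N\setminus M_i}\widehat{s}\at{A\cup C} \;=\; \sum_{B\in\P_C^{M_i}} \widehat{\mbf{s}}_B,\]
where the second equality uses $\widehat{s}\at{B'}=0$ for $B'\notin\B$, so only the coordinates indexed by $\P_C^{M_i}$ contribute. When $\P_C^{M_i}\neq\emptyset$ this is a nonzero linear functional on $\R^k$ (at least one coefficient equals $1$), so its kernel $\K_1(M_i,C)$ is a proper linear hyperplane and therefore Lebesgue-null. Since the double union ranges over at most $\sum_{i=0}^{n-1} 2^i < 2^n$ index pairs, $\D_1$ is a finite union of null sets, giving $\lambda(\D_1)=0$.

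The main obstacle is the induction in (a): I must check carefully that the candidate set $\mathcal{B}$ built from the propagation relation is a valid \emph{superset} of $\supp(\hsrto{s}{2^{M_i}})$ (not merely a set of guesses), and that the sampling pattern $\mathcal{A} = \set{M_i\setminus B : B\in\mathcal{B}}$ of line~8 satisfies the hypothesis of Theorem~\ref{thm:sampling4} when applied to the restriction $\srto{s}{2^{M_i}}$ with this particular $\mathcal{B}$. Once this bookkeeping is settled, the measure-theoretic estimate in (b) follows essentially immediately from the linearity observation above.
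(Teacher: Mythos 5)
Your proposal is correct and takes essentially the same route as the paper's proof: you characterize failure of \textbf{SSFT} by the cancellation events $\hsrto{s}{2^{M_i}}\at{C} = \sum_{B \in \P^{M_i}_C} \widehat{\mbf{s}}_B = 0$ with $\P^{M_i}_C \neq \emptyset$, identify each $\K_1(M_i, C)$ as the kernel of a nonzero linear functional on $\R^k$ (hence a measure-zero hyperplane), and conclude $\lambda(\D_1) = 0$ by finite subadditivity. Your explicit induction in part (a), including the check that $F^{-1}_{\A\B}$ stays invertible for the candidate superset $\B$, merely spells out what the paper asserts in one line (\textbf{SSFT} fails iff a cancellation occurs in some processing step) and settles elsewhere via the block-triangular structure in Lemma~\ref{lem:linear_system_propagation_model4}.
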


For the proof of Theorem~\ref{thm:alg1_pathological_appendix} the following Lemma is useful:
\begin{lemma}\label{lem:support_measure} Let $A \subseteq \mathbb{R}^k$ be a $\lambda$-measurable set. We have $\lambda(A \cap \S) = \lambda(A)$.
\end{lemma}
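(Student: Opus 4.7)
The plan is to show that the complement $\mathbb{R}^k \setminus \S$ is a Lebesgue null set, after which the conclusion follows from elementary measure-theoretic additivity. Once we have $\lambda(\mathbb{R}^k \setminus \S) = 0$, we can write $\lambda(A) = \lambda(A \cap \S) + \lambda(A \setminus \S)$ and observe that $\lambda(A \setminus \S) \leq \lambda(\mathbb{R}^k \setminus \S) = 0$, yielding the claim.

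The key step is the null-set observation. By definition of $\S$, an element $\widehat{\mbf{s}} \in \mathbb{R}^k$ lies outside $\S$ iff at least one coordinate vanishes, so
\begin{equation}
\mathbb{R}^k \setminus \S \;=\; \bigcup_{i=1}^{k} H_i, \qquad H_i \mydef \{\widehat{\mbf{s}} \in \mathbb{R}^k : \widehat{\mbf{s}}_i = 0\}.
\end{equation}
Each $H_i$ is an affine coordinate hyperplane of dimension $k-1$ in $\mathbb{R}^k$, hence a Lebesgue null set. A finite union of null sets is a null set, so $\lambda(\mathbb{R}^k \setminus \S) = 0$.

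I do not anticipate a genuine obstacle here: the argument is essentially a one-liner modulo the standard fact that coordinate hyperplanes have Lebesgue measure zero. The only care needed is to ensure $A$ is $\lambda$-measurable so that $A \cap \S$ and $A \setminus \S$ are measurable, which is given by hypothesis together with the fact that $\S$ itself is Borel (it is the complement of a finite union of closed hyperplanes). With these small checks in place, the identity $\lambda(A \cap \S) = \lambda(A)$ follows immediately.
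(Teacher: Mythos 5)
Your proof is correct and takes essentially the same route as the paper: both arguments show $\lambda(\S^c)=0$ by writing $\S^c$ as the finite union of the $k$ coordinate hyperplanes $\{\widehat{\mbf{s}}\in\R^k:\widehat{\mbf{s}}_i=0\}$ (which the paper phrases as $\ker(\diag \iota_i)$) and then conclude via the decomposition $\lambda(A)=\lambda(A\cap\S)+\lambda(A\cap\S^c)$. Your added check that $\S$ is Borel, so that $A\cap\S$ is measurable, is a harmless refinement of a point the paper leaves implicit.
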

\begin{proof}
First, we observe that the complement of $\S$ has Lebesgue measure zero, as it is a finite union of $(k-1)$-dimensional linear subspaces of $\R^k$:
\begin{linenomath*}\begin{equation}\label{eq:S_complement}
\begin{aligned}
\S^c &= \set{\widehat{\mbf{s}} \in \mathbb{R}^k: \widehat{\mbf{s}}_i = 0 \text{ for some } i \in \set{1, \dots, k}} \\
&= \bigcup_{i = 1}^k \ker(\diag \iota_{i}),
\end{aligned}
\end{equation}\end{linenomath*}
in which $\diag \iota_i$ is the $k \times k$ diagonal matrix with a one at position $i$ and zeros everywhere else. 

Now, the claim follows by simply writing $A$ as disjoint union $(A \cap \S) \dcup (A \cap \S^c)$:
\begin{linenomath*}\begin{equation}
\lambda(A) = \lambda(A \cap \S) + \lambda(A \cap \S^c) = \lambda(A \cap \S).
\end{equation}\end{linenomath*}
\end{proof}

\begin{proof}[Proof of Theorem~\ref{thm:alg1_pathological_appendix}]
\textbf{SSFT} fails iff there are cancellations of Fourier coefficients in one of the processing steps. That is, iff
\begin{linenomath*}\begin{equation}
\hsrto{s}{2^{M_i}}\at{C} = \sum_{B \in \P^{M_i}_C} \widehat{s}\at{B} = 0,
\end{equation}\end{linenomath*}
despite $\P^{M_i}_C \neq \emptyset$. Note that the set of set functions that fail in processing step $M_i$ at $C \subseteq M_i$ can be written as the intersection of a $(k-1)$-dimensional linear subspace $\K_1(M_i, C)$ and $\S$. To obtain $\D_1$, we collect all such sets:
\begin{linenomath*}\begin{equation}
\begin{aligned}
\D_1 &= \bigcup_{i = 0}^{n - 1} \bigcup_{C \subseteq M_i: \P^{M_i}_C \neq \emptyset} \K_1(M_i, C) \cap \S.
\end{aligned}
\end{equation}\end{linenomath*} 
Now, the claim follows from
\begin{linenomath*}\begin{equation}
\begin{aligned}
\lambda(\D_1) &\leq \sum_{i = 0}^{n-1} \sum_{C \subseteq M_i: \P^{M_i}_C \neq \emptyset} \underbrace{\lambda(\K_1(M_i, C) \cap \S)}_{= 0} = 0,
\end{aligned}
\end{equation}\end{linenomath*}
by Lemma~\ref{lem:support_measure} and the fact that proper subspaces have measure zero.
\end{proof}

Consequently, the set of $k$-Fourier-sparse (but not $(k-1)$-Fourier-sparse) set functions for which $\textbf{SSFT}$ works is $\S \setminus \D_1$. Lemma~\ref{lem:cont_random_sf} characterizes an important family of set functions in $\S \setminus \D_1$.

\begin{lemma}\label{lem:cont_random_sf} \textbf{SSFT} correctly computes the Fourier transform of Fourier-sparse set functions $s$ with $\supp(\widehat{s}) = \mathcal{B}$ and randomly sampled Fourier coefficients, that satisfy
\begin{enumerate}
\item $\widehat{s}\at{B} \sim P_B$, where $P_B$ is a continuous probability distribution with density function $p_B$, for $B \in \supp(\widehat{s})$,
\item $p_{\mathcal{B}} = \prod_{B \in \mathcal{B}} p_B$,
\end{enumerate} 
with probability one.
\end{lemma}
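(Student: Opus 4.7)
The plan is to combine Theorem~\ref{thm:alg1_pathological_appendix} (which gives $\lambda(\D_1)=0$) with the observation that, under the stated hypotheses, the law of the random Fourier-coefficient vector $\widehat{\mbf{s}} = (\widehat{s}(B))_{B \in \mathcal{B}} \in \R^k$ is absolutely continuous with respect to Lebesgue measure $\lambda$ on $\R^k$. Absolute continuity forces every $\lambda$-null set to also be a null set for the induced probability measure, so the event $\set{\widehat{\mbf{s}} \in \D_1}$ has probability zero.

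Concretely, I would proceed in three short steps. First, note that $\D_1$ is Borel measurable: by Theorem~\ref{thm:alg1_pathological_appendix} it is the finite union, over pairs $(M_i, C)$ with $\P^{M_i}_C \neq \emptyset$, of intersections $\K_1(M_i,C) \cap \S$, where $\K_1(M_i,C)$ is the kernel of the linear functional $\widehat{\mbf{s}} \mapsto \sum_{B \in \P^{M_i}_C} \widehat{\mbf{s}}_B$ (cf.\ Lemma~\ref{thm:hashing_model4}) and $\S$ is the complement of a finite union of coordinate hyperplanes as in \eqref{eq:S_complement}. Second, by assumption (2) the joint density is the product $p_{\mathcal{B}} = \prod_{B \in \mathcal{B}} p_B$, so for any Borel set $A \subseteq \R^k$ we have $\mathbb{P}(\widehat{\mbf{s}} \in A) = \int_A p_{\mathcal{B}}\, d\lambda$. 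Taking $A = \D_1$ and invoking $\lambda(\D_1) = 0$ yields $\mathbb{P}(\widehat{\mbf{s}} \in \D_1) = 0$.

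Third, I would dispose of the mild mismatch between the random model and the standing assumption that $\supp(\widehat{s}) = \mathcal{B}$ (i.e.\ $\widehat{\mbf{s}} \in \S$). Since each marginal $P_B$ is continuous, $\mathbb{P}(\widehat{s}(B) = 0) = 0$ for every $B \in \mathcal{B}$; by finite subadditivity $\mathbb{P}(\widehat{\mbf{s}} \notin \S) \leq \sum_{B \in \mathcal{B}} \mathbb{P}(\widehat{s}(B) = 0) = 0$. Combining this with the previous step gives
\begin{equation*}
\mathbb{P}(\textbf{SSFT fails}) \leq \mathbb{P}(\widehat{\mbf{s}} \in \D_1) + \mathbb{P}(\widehat{\mbf{s}} \notin \S) = 0,
\end{equation*}
which is the claim. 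There is essentially no obstacle here beyond bookkeeping: the real content is packaged into Theorem~\ref{thm:alg1_pathological_appendix}, and the remainder is the standard fact that absolutely continuous distributions assign zero mass to Lebesgue-null sets (such as finite unions of proper affine subspaces of $\R^k$).
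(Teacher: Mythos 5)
Your proposal is correct, but it takes a different route from the paper. The paper's proof argues directly at the level of individual failure events: since the coefficients are independent with continuous laws, each cancellation sum $\sum_{B \in \P^{M_i}_C} \widehat{s}\at{B}$ (with $\P^{M_i}_C \neq \emptyset$) is itself a continuous random variable, hence hits zero with probability zero, and a union bound over the finitely many pairs $(M_i, C)$ finishes the argument. You instead factor the probabilistic content through Theorem~\ref{thm:alg1_pathological_appendix}: the failure set $\D_1$ is Borel with $\lambda(\D_1) = 0$, the product density makes the law of $\widehat{\mbf{s}}$ absolutely continuous with respect to $\lambda$, and null sets transfer. Both arguments are sound, and both ultimately rest on the same characterization of failure as cancellation. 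Your version is more modular and in fact slightly more general: absolute continuity of the joint law is all you use, so independence (condition 2 of the lemma) could be relaxed to the existence of an arbitrary joint density, whereas the paper's ``sum of independent continuous random variables is continuous'' step genuinely uses independence. You also handle a point the paper's proof glosses over, namely the probability-zero event that a sampled coefficient vanishes (so that $\supp(\widehat{s})$ would be strictly smaller than $\mathcal{B}$ and the statement's standing assumption would fail); your bound $\prob{\text{\textbf{SSFT} fails}} \leq \prob{\widehat{\mbf{s}} \in \D_1} + \prob{\widehat{\mbf{s}} \notin \S}$ disposes of this cleanly. What the paper's route buys in exchange is brevity and self-containedness: it needs no measurability bookkeeping for $\D_1$ and does not invoke the measure-zero theorem at all, only the elementary convolution fact.
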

\begin{proof}
The sum of independent continuous random variables is a continuous random variable. Thus, for $\P^{M_i}_C \neq \emptyset$, the event $\sum_{B \in \P^{M_i}_C} \widehat{s}\at{B} = 0$ has probability zero. Equivalently, we have 
\begin{linenomath*}\begin{equation}
\prob{\dps\sum_{B \in \P^{M_i}_C} \widehat{s}\at{B} \neq 0} = 1
\end{equation}\end{linenomath*}

for all $i \in \{0, \dots, n\}$ and $C \subseteq M_i$ with $\P^{M_i}_C \neq \emptyset$.
\end{proof}

\subsection{Shrinking the Set of Pathological Fourier Coefficients (SSFT+)}

We now analyze the set of pathological Fourier coefficients for \textbf{SSFT+}. Building on the analysis of \textbf{SSFT}, recall that $\S$ denotes the set of $k$-Fourier-sparse (but not $(k-1)$-Fourier-sparse) set functions and
$\P_C^{M_i}$ are the elements $B \in \supp(\widehat{s})$ satisfying $B \cap M_i = C$. Let
\begin{linenomath*}\begin{equation}
\begin{aligned}
\K_2(M_i, C) &\mydef \set{\widehat{\mbf{s}} \in \R^k: \hsrto{s}{2^{M_i}}\at{C} = 0 \text{ and } \\&\quad\quad \hsrto{s}{2^{M_i \cup \set{x_{j}}}}\at{C} = 0 \text{ for all } \\ &\quad\quad j \in \set{i+1, \dots, n}}.
\end{aligned}
\end{equation}\end{linenomath*}

As we use a random one-hop filter in \textbf{SSFT+}, the randomness of the filtering coefficients needs to be taken into account.

\begin{theorem}\label{thm:alg2_pathological_appendix} With probability one with
    respect to the randomness of the filtering coefficients, the set
    of pathological set functions for \textbf{SSFT+} has
    the form (using prior notation)
    \begin{linenomath*}\begin{equation}\label{eq:D2_appendix}
        \D_2 = \bigcup_{i = 0}^{n - 2} \bigcup_{C \subseteq M_i:
          \P^{M_i}_C \neq \emptyset} \K_2(M_i, C) \cap \S. 
    \end{equation}\end{linenomath*}
\end{theorem}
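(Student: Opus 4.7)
The plan is to reduce \textbf{SSFT+} to \textbf{SSFT} applied to the filtered set function $h * s$, and then exploit the explicit form of the random one-hop filter. By construction, \textbf{SSFT+} fails on $\widehat{\mbf{s}}$ iff \textbf{SSFT} fails on $h * s$. By the convolution theorem~\eqref{convth}, $\widehat{(h * s)}\at{B} = \overline{h}\at{B} \widehat{s}\at{B}$, and since $h\at{\emptyset}=1$ and $h$ is one-hop, $\overline{h}\at{B} = 1 + \sum_{x \in N \setminus B} h\at{\set{x}}$. This equals $1$ when $B = N$ and is an affine Gaussian of mean $1$ otherwise, so with probability one, $\overline{h}\at{B} \neq 0$ for every $B \in \supp(\widehat{s})$. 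Consequently $\supp(\widehat{h * s}) = \supp(\widehat{s}) = \mathcal{B}$ a.s., and the sets $\mathcal{P}^{M_i}_C$ relevant to running \textbf{SSFT} on $h * s$ coincide with those relevant for $s$.

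Plugging the convolution theorem into the restriction formula~\eqref{eq:dsft4_restricted} for $h * s$ and decomposing, for each $B \in \mathcal{P}^{M_i}_C$, the sum in $\overline{h}\at{B}$ into the common piece over $M_i \setminus C$ and the $B$-dependent piece over $(N \setminus M_i) \setminus B$, I obtain the key identity
\begin{equation*}
\hsrto{(h * s)}{2^{M_i}}\at{C} = \alpha_i(C) \, \hsrto{s}{2^{M_i}}\at{C} + \sum_{j = i+1}^{n} h\at{\set{x_j}} \, \hsrto{s}{2^{M_i \cup \set{x_j}}}\at{C},
\end{equation*}
where $\alpha_i(C) = 1 + \sum_{x \in M_i \setminus C} h\at{\set{x}}$. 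Here I use that, for $j > i$, the condition ``$B \in \mathcal{P}^{M_i}_C$ and $x_j \notin B$'' is exactly $B \in \mathcal{P}^{M_i \cup \set{x_j}}_C$. Viewed as an affine function of the independent Gaussians $h\at{\set{x_1}}, \dots, h\at{\set{x_n}}$, the right-hand side is identically zero iff $\hsrto{s}{2^{M_i}}\at{C} = 0$ and $\hsrto{s}{2^{M_i \cup \set{x_j}}}\at{C} = 0$ for every $j > i$, i.e., iff $\widehat{\mbf{s}} \in \K_2(M_i, C)$; otherwise it has a nondegenerate Gaussian component and equals zero with probability zero.

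Combining this identity with Theorem~\ref{thm:alg1_pathological_appendix} applied to $h * s$ yields the following dichotomy: for each fixed $\widehat{\mbf{s}} \in \mathcal{S}$, if $\widehat{\mbf{s}} \in \K_2(M_i, C) \cap \mathcal{S}$ for some pair with $\mathcal{P}^{M_i}_C \neq \emptyset$, then \textbf{SSFT+} deterministically fails at that pair; otherwise a union bound over the finitely many such pairs shows that \textbf{SSFT+} succeeds almost surely. It then remains to trim the outer index from $i \in \set{0, \dots, n-1}$ down to $i \in \set{0, \dots, n-2}$ as in~\eqref{eq:D2_appendix}. For $i = n-1$, the defining equations of $\K_2(M_{n-1}, C)$ read $\widehat{s}\at{C} + \widehat{s}\at{C \cup \set{x_n}} = 0$ and $\widehat{s}\at{C} = 0$, forcing $\widehat{s}\at{C} = \widehat{s}\at{C \cup \set{x_n}} = 0$; but $\mathcal{P}^{M_{n-1}}_C \neq \emptyset$ requires at least one of $C, C \cup \set{x_n}$ to lie in $\supp(\widehat{s})$, so $\K_2(M_{n-1}, C) \cap \mathcal{S} = \emptyset$.

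The main obstacle I anticipate is the bookkeeping in deriving the key identity: the split of $\overline{h}\at{B}$ into a $B$-constant part $\alpha_i(C)$ and the $B$-dependent part, together with the collapse ``$B \in \mathcal{P}^{M_i}_C$ with $x_j \notin B$'' $=$ ``$B \in \mathcal{P}^{M_i \cup \set{x_j}}_C$'', has to be matched exactly with the defining equations of $\K_2(M_i, C)$ so that ``polynomial identically zero in $h$'' coincides with $\widehat{\mbf{s}} \in \K_2(M_i, C)$. The subsequent Gaussian nondegeneracy step and the finite union bound are then standard.
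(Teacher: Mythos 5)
Your proposal is correct and follows essentially the same route as the paper's proof: you derive the identical key identity $\hsrto{(h*s)}{2^{M_i}}\at{C} = \bigl(1 + \sum_{x \in M_i \setminus C} h\at{\set{x}}\bigr)\hsrto{s}{2^{M_i}}\at{C} + \sum_{x \in N \setminus M_i} h\at{\set{x}}\hsrto{s}{2^{M_i \cup \set{x}}}\at{C}$, observe that this affine function of the i.i.d.\ Gaussian filter coefficients vanishes identically iff $\widehat{\mbf{s}} \in \K_2(M_i, C)$ and otherwise vanishes with probability zero, and conclude by a union over the finitely many pairs $(M_i, C)$. If anything, you are slightly more explicit than the paper on two points it leaves implicit, namely that $\overline{h}\at{B} \neq 0$ almost surely (so $\supp(\widehat{h*s}) = \supp(\widehat{s})$ and failure of \textbf{SSFT+} reduces exactly to failure of \textbf{SSFT} on $h*s$) and that $\K_2(M_{n-1}, C) \cap \S = \emptyset$, which justifies truncating the outer union at $i = n-2$.
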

\begin{proof} 
We fix $M_i$ and a $C$ with $\P^{M_i}_C \neq \emptyset$. By applying definitions and rearranging sums we obtain 
\begin{linenomath*}\begin{equation}
\scriptsize
\begin{aligned}
\hsrto{(h*s)}{2^{M_i}}(C) &= \hsrto{s}{2^{M_i}}\at{C} +  \sum_{x \in M_i \setminus C} h\at{\set{x}}\hsrto{s}{2^{M_i}}\at{C} \\& \ + \sum_{x \in N \setminus M_i} h\at{\set{x}}\hsrto{s}{2^{M_i \cup \set{x}}}\at{C}.
\end{aligned}
\end{equation}\end{linenomath*}
Each filtering coefficient $h\at{\set{x}}$ is the realization of a random variable $H_x \sim \mathcal{N}(0, 1)$. Thus, the probability of failure (for $M_i$ and $C$ fixed) can be written as 
\begin{linenomath*}\begin{equation}\label{eq:prob_failure2}
\begin{aligned}
\prob{\text{\textbf{SSFT+} fails}} &= \mathbb{P}((1 + \sum_{x \in M_i \setminus C} H_x) \hsrto{s}{2^{M_i}}\at{C} \\ &\quad\quad  +  \sum_{x \in N \setminus M_i} H_x \hsrto{s}{2^{M_i \cup \set{x}}}\at{C} = 0).
\end{aligned}
\end{equation}\end{linenomath*}
This probability is $=1$, if each of the partial Fourier coefficients 
\begin{linenomath*}\begin{equation}
\hsrto{s}{2^{M_i}}\at{C}, \hsrto{s}{2^{M_i \cup \set{x_{i+1}}}}\at{C},  \dots, \hsrto{s}{2^{M_i \cup \set{x_n}}}\at{C}
\end{equation}\end{linenomath*} 
is zero. Otherwise, the probability in \eqref{eq:prob_failure2} is zero, because the probability of a mixture of Gaussians taking a certain value is zero. Collecting those constraints for all relevant combinations of $M_i$ and $C \subseteq M_i$ yields the claim.
\end{proof}

Theorem~\ref{thm:alg2_pathological_appendix} shows that \textbf{SSFT+} correctly processes $\hsrto{s}{2^{M_i}}\at{C} = 0$ with $\P^{M_i}_C \neq \emptyset$, iff there is an element $x \in \set{x_{i+1}, \dots, x_n}$ for which $\hsrto{s}{2^{M_i \cup \set{x}}}\at{C} \neq 0$. Beyond that, it is easy to see that $\D_2 \subseteq \D_1$. However, in order to show that $\D_2$ is a proper subset of $\D_1$ we need Lemma~\ref{thm:K1_K2}.

%

\begin{lemma}\label{thm:K1_K2} For $M_i$ and $C \subseteq M_i$ with $\P^{M_i}_C \neq \emptyset$, we have either
\begin{enumerate}
\item $\K_1(M_i, C) \cap \S = \K_2(M_i, C) \cap \S = \emptyset$ or
\item $\K_1(M_i, C) \cap \S \neq \emptyset$ and $\K_2(M_i, C) \cap \S = \emptyset$ or
\item $\K_2(M_i, C) \cap \S \neq \emptyset$ and $\K_2(M_i, C) <_{\R} \K_1(M_i, C)$. 
\end{enumerate}
\end{lemma}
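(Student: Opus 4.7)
My plan is to observe that $\K_1(M_i,C)$ and $\K_2(M_i,C)$ are linear subspaces of $\R^k$ with $\K_2\subseteq\K_1$, so the three alternatives in the lemma are exactly the possible configurations of which of $\K_1\cap\S$, $\K_2\cap\S$ are empty (since $\K_2\cap\S\neq\emptyset$ implies $\K_1\cap\S\neq\emptyset$). Cases~1 and~2 are therefore immediate from this containment and need no further argument; the real content is case~3, where I must show that $\K_2\cap\S\neq\emptyset$ forces $\K_2$ to be a \emph{proper} subspace of $\K_1$.

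To prepare the ground I would rewrite the two kernels using Corollary~\ref{cor:sparsity_propagation_model4}. For each $j\in\set{i+1,\dots,n}$ let
\[
a_j = \sum_{B\in\P^{M_i}_C,\; x_j\notin B}\widehat{s}\at{B},\qquad b_j = \sum_{B\in\P^{M_i}_C,\; x_j\in B}\widehat{s}\at{B},
\]
so that $a_j+b_j = \hsrto{s}{2^{M_i}}\at{C}$ for every $j$. Thus $\K_1$ is cut out by the single equation $a_j+b_j=0$ (the same equation for every $j$), while $\K_2$ additionally requires $a_j = 0$, and hence $b_j = 0$, for every $j>i$. So $\K_2 = \K_1$ exactly when these extra equations are redundant on $\K_1$.

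The key case split is on $|\P^{M_i}_C|$. If $|\P^{M_i}_C|=1$, say $\P^{M_i}_C = \set{B_0}$, the $\K_1$-equation collapses to $\widehat{s}\at{B_0}=0$; but $B_0\in\B$ means membership in $\S$ requires $\widehat{s}\at{B_0}\neq 0$, so $\K_1\cap\S=\emptyset$ and we are in case~1. If $|\P^{M_i}_C|\geq 2$, I pick distinct $B_0,B_1\in\P^{M_i}_C$; they both meet $M_i$ in $C$, hence differ on $N\setminus M_i$, so there is some $x_j$ (with $j>i$) in $B_0\triangle B_1$, say $x_j\in B_0\setminus B_1$. Define $\widehat{\mbf{s}}^\ast\in\R^k$ by $\widehat{s}^\ast\at{B_0}=\alpha$, $\widehat{s}^\ast\at{B_1}=-\alpha$ for some $\alpha\neq 0$, and zero on every other coordinate. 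Then $a_j+b_j=0$, so $\widehat{\mbf{s}}^\ast\in\K_1$, whereas $a_j=\widehat{s}^\ast\at{B_1}=-\alpha\neq 0$, so $\widehat{\mbf{s}}^\ast\notin\K_2$. Hence $\K_2<_{\R}\K_1$ whenever $|\P^{M_i}_C|\geq 2$; since $\K_2\cap\S\neq\emptyset$ rules out the previous sub-case, case~3 follows.

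The hardest part is identifying the $(a_j,b_j)$ reformulation and being careful about the sub-case $|\P^{M_i}_C|=1$; once both are in hand the separating vector in case~3 is just a two-coordinate perturbation and everything else is bookkeeping.
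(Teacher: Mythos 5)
Your proof is correct, and while it rests on the same combinatorial kernel as the paper's --- two distinct $B_0, B_1 \in \P^{M_i}_C$ agree on $M_i$, hence differ at some $x_j \in N \setminus M_i$ --- it organizes the lemma differently. The paper runs a three-way case distinction on the refinement structure of $\P^{M_i}_C$ (singleton; some $x$ with $|\P^{M_i \cup \set{x}}_C| = 1$; all nontrivial refinements of size $\geq 2$), matches each case to one alternative, and separately proves exhaustiveness via the symmetric difference of two elements of $\P^{M_i}_C$. You instead note that since $\K_2(M_i,C) \subseteq \K_1(M_i,C)$, alternatives 1 and 2 are tautological emptiness patterns, so the entire content is the implication $\K_2(M_i,C) \cap \S \neq \emptyset \Rightarrow \K_2(M_i,C) <_{\R} \K_1(M_i,C)$; you reduce this to $|\P^{M_i}_C| \geq 2$ (the singleton case forces $\K_1(M_i,C) \cap \S = \emptyset$) and settle it with the explicit witness $\widehat{\mbf{s}}^\ast$ (value $\alpha$ at $B_0$, $-\alpha$ at $B_1$, zero elsewhere, with $x_j \in B_0 \setminus B_1$), which lies in $\K_1(M_i,C)$ but has $a_j = -\alpha \neq 0$, hence lies outside $\K_2(M_i,C)$. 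This buys rigor exactly where the paper is thinnest: its Case~3 infers properness from ``$\K_2$ requires at least one additional equation,'' which by itself does not exclude that the extra equations are redundant on $\K_1(M_i,C)$; your witness closes that gap and in fact yields $\K_2(M_i,C) \subsetneq \K_1(M_i,C)$ uniformly whenever $|\P^{M_i}_C| \geq 2$, subsuming the paper's Cases~2 and~3 at once. What the paper's finer split buys in exchange is a combinatorial characterization of \emph{when} alternative~2 rather than~3 occurs (a singleton refinement $|\P^{M_i \cup \set{x}}_C| = 1$ forces $\K_2(M_i,C) \cap \S = \emptyset$), which your argument neither needs nor provides. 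One small citation slip: the identity $\hsrto{s}{2^{M_i \cup \set{x_j}}}\at{C} = a_j$ that underlies your reformulation comes from Lemma~\ref{thm:hashing_model4} (eq.~\eqref{eq:hashing_model4}) rather than from Corollary~\ref{cor:sparsity_propagation_model4}, though the corollary is an immediate consequence of it, so nothing breaks.
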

\begin{proof}
We prove with a case distinction: 

\textbf{Case 1:} $|\P^{M_i}_C| = 1$. In this case we have $\K_1(M_i, C) = \K_2(M_i, C)$ and, in particular, one of the $k$ coordinates, $\widehat{\mbf{s}}_1, \dots, \widehat{\mbf{s}}_k$, is required to be zero. Consequently, the intersection with $\S$ is empty: $\K_1(M_i, C) \cap \S = \K_2(M_i, C) \cap \S = \emptyset$.

\textbf{Case 2:} $|\P^{M_i}_C| \geq 2$ and there is an element $x \in \set{x_{i+1}, \dots, x_n}$ with $|\P^{M_i \cup \set{x}}_C| = 1$. In this case $\K_1(M_i, C)$ contains vectors that achieve $\sum_{B \in \P^{M_i}_C} \widehat{s}\at{B} = 0$ by cancellation. $\K_2(M_i, C)$ on the other hand, requires one coordinate to be zero.

\textbf{Case 3:} $|\P^{M_i}_C| \geq 2$ and all $x \in \set{x_{i+1}, \dots, x_n}$ with $\P^{M_i \cup \set{x}}_C \neq \emptyset$ and $\P^{M_i \cup \set{x}}_C \neq \P^{M_i}_C$ satisfy $2 \leq |\P^{M_i \cup \set{x}}_C| < |\P^{M_i}_C|$. That is, $\K_2(M_i, C)$ requires at least one additional equation to be satisfied in comparison to $\K_1(M_i, C)$. Therefore, $\K_2(M_i, C)$ is a subset of $\K_1(M_i, C)$. In addition, $\K_2(M_i, C)$ is closed under addition and scalar multiplication, which makes it a subspace of $\K_1(M_i, C)$.

There are no other cases, because if $|\P^{M_i}_C| \geq 2$ there is at least one element $x \in N \setminus M_i$ with $1 \leq |\P^{M_i \cup \set{x}}_C| < |\P^{M_i}_C|$. Such an element can be found constructively by taking the symmetric difference ${(B_1 \setminus B_2) \cup (B_2 \setminus B_1)}$ of $B_1 \neq B_2 \in \P^{M_i}_C$. The symmetric difference is non-empty because $B_1 \neq B_2$ and for all of its elements $x$ either ${x \in B_1 \land x \not\in B_2}$ or ${x \not\in B_1 \land x \in B_2}$ holds. Further, ${x \in (B_1 \setminus B_2) \cup (B_2 \setminus B_1) \subseteq N \setminus M_i}$ because ${B_1 \cap M_i = B_2 \cap M_i = C}$. Therefore, there is at least one $x \in N \setminus M_i$ with either ${B_1 \not\in \P^{M_i \cup \set{x}}_C \land B_2 \in \P^{M_i \cup \set{x}}_C}$ or ${B_1 \in \P^{M_i \cup \set{x}}_C \land B_2 \not\in \P^{M_i \cup \set{x}}_C}$.
\end{proof}

With Lemma~\ref{thm:K1_K2} in place we can now prove our main theorem of this section.

\begin{theorem} If $\D_1$ is non-empty, $\D_2$ is a proper subset of $\D_1$.
\end{theorem}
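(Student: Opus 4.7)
The plan is to prove the theorem in three steps: (i) the easy inclusion $\D_2 \subseteq \D_1$, (ii) the ``in particular'' clause via Lemma~\ref{thm:K1_K2}, and (iii) strict containment via a dimension/measure argument on $\R^k$.

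For (i), I would observe that the outer index in $\D_2$ only runs up to $n-2$ while that of $\D_1$ runs up to $n-1$, so every summand of $\D_2$ has a matching summand in $\D_1$. Within each common pair $(M_i, C)$, the defining constraints of $\K_2(M_i, C)$ include the single equation defining $\K_1(M_i, C)$ plus further ones, so $\K_2(M_i, C) \subseteq \K_1(M_i, C)$ and hence $\K_2(M_i, C) \cap \S \subseteq \K_1(M_i, C) \cap \S$. For (ii), if $\K_1(M_i, C) \cap \S \neq \emptyset$ we are not in Case 1 of Lemma~\ref{thm:K1_K2}, so we must be in Case 2 or Case 3; both give $\K_2(M_i, C) <_\R \K_1(M_i, C)$ as linear subspaces of $\R^k$.

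The substantive step is (iii). Since $\D_1 \neq \emptyset$, I would pick an index pair $(M_{i_0}, C_0)$ with $\K_1(M_{i_0}, C_0) \cap \S \neq \emptyset$. Because $\P^{M_{i_0}}_{C_0} \neq \emptyset$, the subspace $\K_1(M_{i_0}, C_0)$ is cut out by one nontrivial linear equation and thus has codimension exactly one in $\R^k$. On the other hand, every summand contributing to $\D_2$ comes from some $(M_j, C')$ with $\K_2(M_j, C') \cap \S \neq \emptyset$, and by (ii) such $\K_2(M_j, C')$ is a proper subspace of $\K_1(M_j, C')$, hence has codimension at least two. A dimension count therefore rules out $\K_1(M_{i_0}, C_0) \subseteq \K_2(M_j, C')$, so $\K_2(M_j, C') \cap \K_1(M_{i_0}, C_0)$ is a proper subspace of $\K_1(M_{i_0}, C_0)$ and has measure zero under the $(k-1)$-dimensional Lebesgue measure on $\K_1(M_{i_0}, C_0)$. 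Since this is a finite union, $\D_2 \cap \K_1(M_{i_0}, C_0)$ also has $(k-1)$-measure zero, while $\K_1(M_{i_0}, C_0) \cap \S$ has positive $(k-1)$-measure (the argument of Lemma~\ref{lem:support_measure}, restricted to $\K_1(M_{i_0}, C_0)$, shows $\S^c$ cuts it in a null set). Consequently there exists an element of $\K_1(M_{i_0}, C_0) \cap \S$ not lying in $\D_2$, establishing $\D_2 \subsetneq \D_1$.

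The one delicate point I expect to verify carefully is the positivity of the $(k-1)$-dimensional measure of $\K_1(M_{i_0}, C_0) \cap \S$: a priori $\K_1(M_{i_0}, C_0)$ could coincide with one of the coordinate hyperplanes $\ker(\diag \iota_j)$ whose union forms $\S^c$, in which case the intersection with $\S$ would be empty rather than generic. However, such a coincidence would force $\K_1(M_{i_0}, C_0) \cap \S = \emptyset$, contradicting the choice of $(M_{i_0}, C_0)$; for any other $\K_1(M_{i_0}, C_0)$, each $\ker(\diag \iota_j)$ meets it in a subspace of codimension one within $\K_1(M_{i_0}, C_0)$, which is $(k-1)$-measure zero. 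Aside from this check, the steps are routine linear algebra.
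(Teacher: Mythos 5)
Your proof is correct, and its skeleton matches the paper's: choose one pair $(M_{i_0},C_0)$ with $\K_1(M_{i_0},C_0)\cap\S\neq\emptyset$ (hence $\dim \K_1(M_{i_0},C_0)=k-1$, since $\P^{M_{i_0}}_{C_0}\neq\emptyset$ makes the defining functional nontrivial), use Lemma~\ref{thm:K1_K2} to bound every $\K_2(M_j,C')$ that actually contributes to $\D_2$ by dimension $k-2$ (only Case~3 of the lemma is needed for this, since $\K_2(M_j,C')\cap\S\neq\emptyset$ already rules out Cases~1 and~2), and then argue that these small subspaces together with $\S^c$ cannot exhaust $\K_1(M_{i_0},C_0)$. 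Where you genuinely differ is in the tool for this final non-covering step. The paper invokes the algebraic fact that a vector space over $\R$ cannot be written as a finite union of proper subspaces, applied twice: once to obtain \eqref{eq:union_k2_subset_k1}, and once more (in the form $\S^c \cup A \neq \R^k$) to argue that intersecting both sides with $\S$ preserves strict inclusion. You instead endow the hyperplane $\K_1(M_{i_0},C_0)$ with its $(k-1)$-dimensional Lebesgue measure and observe that the traces of the contributing $\K_2$'s and of the coordinate hyperplanes constituting $\S^c$ are proper subspaces of $\K_1(M_{i_0},C_0)$, hence null, while $\K_1(M_{i_0},C_0)\cap\S$ has full measure; a witness in $\D_1\setminus\D_2$ follows, and together with the easy inclusion $\D_2\subseteq\D_1$ this gives properness. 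The two tools are essentially interchangeable over $\R$ (proper subspaces are null, and finite unions of null sets are null), but your route buys a cleaner endgame: by working entirely inside $\K_1(M_{i_0},C_0)$ you bypass the paper's somewhat delicate set-theoretic bookkeeping for why intersecting \eqref{eq:union_k2_subset_k1} with $\S$ preserves $\subset$, and you make explicit the one point that needs real care --- that $\K_1(M_{i_0},C_0)$ cannot be contained in a coordinate hyperplane, precisely because it meets $\S$ --- which the paper leaves implicit; your measure argument is also consonant with the paper's own Lemma~\ref{lem:support_measure}, so no foreign machinery is imported. One harmless remark: in Case~2 of Lemma~\ref{thm:K1_K2} the relation $\K_2(M_i,C) <_{\R} \K_1(M_i,C)$ is not literally asserted in the lemma's statement, but it follows in one line exactly as you use it, since any point of $\K_1(M_i,C)\cap\S$ has no zero coordinate and therefore cannot lie in $\K_2(M_i,C)$.
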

\begin{proof}
If $\D_1 \neq \emptyset$, there is at least one $M^{(1)} \in \set{M_1, \dots, M_n}$ and $C^{(1)} \subseteq M^{(1)}$ s.t. $|\P_{C^{(1)}}^{M^{(1)}}| \geq 2$. Otherwise, there would be no $M_i \subseteq N$ and $C \subseteq M_i$ with $\K_1(M_i, C) \cap \S \neq \emptyset$. $\K_1(M^{(1)}, C^{(1)})$ is $k - 1$ dimensional. Further, by Lemma~\ref{thm:K1_K2} we have $\dim K_2(M_i, C) \leq k - 2$, for all $M_i \subseteq N$ and $C \subseteq M_i$ with $\K_2(M_i, C) \cap \S \neq \emptyset$. Combining these two observations yields
\begin{linenomath*}\begin{equation}\label{eq:union_k2_subset_k1}
\begin{aligned}
\bigcup_{i = 0}^{n - 2} \bigcup_{\substack{C \subseteq M_i:\\ \K_2(M_i, C) \cap \S \neq \emptyset}} \K_2(M_i, C) \cap \K_1(M^{(1)}, C^{(1)}) \\\subset \K_1(M^{(1)}, C^{(1)}),
\end{aligned}
\end{equation}\end{linenomath*}
because the intersection of two subspaces is a subspace and a vector space cannot be written as the finite union of proper subspaces \cite{khare2009vector}. Now it remains to be shown that the intersection with $\S$ on both sides preserves the $\subset$ relation.

Observe that for two subsets $A, B \subseteq \R^k$ with $A \subset B$ we have $A \cap \S \subset B \cap \S \Leftrightarrow (B \setminus A) \cap \S \neq \emptyset$. For $A = $ LHS (left-hand side) and $B = $ RHS of \eqref{eq:union_k2_subset_k1}, we have $(B \setminus A) \cap \S \neq \emptyset \Leftrightarrow \S \setminus A \neq \emptyset$ because $B \cap \S \neq \emptyset$ by construction and $B \setminus A \neq \emptyset$ as $A \subset B$ according to \eqref{eq:union_k2_subset_k1}.

Now, we make use of $\S \setminus A = \S \cap A^c \neq \emptyset  \Leftrightarrow \S^c \cup A \neq \R^k$, which holds as $\R^k$ cannot be written as a finite union of proper subspaces. Recall that $\S^c$ is a finite union of subspaces (see \eqref{eq:S_complement}) and $A$ is the LHS of \eqref{eq:union_k2_subset_k1}.  
\end{proof}

\section{SSFT: Complexity}

To achieve the algorithmic and query complexity stated in the paper, we need to delve into the technical details of the implementation of \textbf{SSFT}. For mathematical convenience we introduce the notation $\B + x = \set{B \cup \set{x}: B \in \B}$ for sets of subsets $\B \subseteq 2^N$. Further, we observe that given $\B_{i-1} = \supp(\hsrto{s}{2^{M_{i-1}}})$, $\A_{i-1} = \set{M_{i-1} \setminus B: B \in \B_{i-1}}$ and $T^{i-1} = F^{-1}_{\A_{i-1}\B_{i-1}}$ we can construct the linear system that determines $\hsrto{s}{2^{M_{i}}}$ from $T^{i-1}$.

\begin{lemma}\label{lem:linear_system_propagation_model4} Let $\B = \B_{i-1} \cup (\B_{i-1} + x_i)$ and let $\A = \set{M_i \setminus B: B \in \B} = (\A_{i-1} + x_i) \cup \A_{i-1}$. We have ${\supp(\hsrto{s}{2^{M_i}}) \subseteq \B}$. Let 
\begin{linenomath*}\begin{equation}
\mbf{q}^{x_i} = (s\at{A})_{\A_{i-1} + x_i} \quad \text{and} \quad \mbf{q}^{\overline{x}_i} = (s\at{A})_{\A_{i-1}}.
\end{equation}\end{linenomath*}
Then  
  \begin{linenomath*}\begin{equation}
  F^{-1}_{\A \B} = \begin{pmatrix}
  T^{i-1} & 0 \\
  T^{i-1} & T^{i-1}
  \end{pmatrix}
  \end{equation}\end{linenomath*}
and the solution of 
  \begin{linenomath*}\begin{equation}
  \begin{pmatrix}
  T^{i-1} & 0 \\
  T^{i-1} & T^{i-1}
  \end{pmatrix} 
  \begin{pmatrix}
  \widehat{\mbf{q}}^{\overline{x}_i}\\
  \widehat{\mbf{q}}^{{x}_i}
  \end{pmatrix}
  =
  \begin{pmatrix}
  \mbf{q}^{x_i} \\
  \mbf{q}^{\overline{x}_i}
  \end{pmatrix}  
  \end{equation}\end{linenomath*}
  contains the Fourier coefficients of $\hsrto{s}{2^{M_i}}$ and is given by
  \begin{linenomath*}\begin{equation}
  \begin{aligned}
  \widehat{\mbf{q}}^{\overline{x}_i} &= (T^{i-1})^{-1} \mbf{q}^{x_i} \quad \text{and} \\ \widehat{\mbf{q}}^{{x}_i} &= (T^{i-1})^{-1} (\mbf{q}^{\overline{x}_i} - \mbf{q}^{x_i}).
  \end{aligned}
  \end{equation}\end{linenomath*}
\end{lemma}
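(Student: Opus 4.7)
The plan is to prove the three assertions of the lemma in order: first the support inclusion, then the block structure of $F^{-1}_{\mathcal{A}\mathcal{B}}$, and finally the formula for the solution. The support inclusion is a direct consequence of Corollary~1 (sparsity propagation), already invoked in the algorithm's description: if $B' \in \supp(\hsrto{s}{2^{M_i}})$, then $\hsrto{s}{2^{M_i}}\at{B'} \neq 0$ forces $\hsrto{s}{2^{M_{i-1}}}\at{B' \setminus \set{x_i}} \neq 0$ by \eqref{eq:sparsity_propagation_model4}, so $B' \setminus \set{x_i} \in \mathcal{B}_{i-1}$ and hence $B' \in \mathcal{B}_{i-1} \cup (\mathcal{B}_{i-1} + x_i) = \mathcal{B}$.

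For the block structure, I will unpack the submatrix $F^{-1}_{\mathcal{A}\mathcal{B}}$ entry by entry using $\mbf{f}^B_A = \iota_{A \cap B = \emptyset}$. First observe that every $B \in \mathcal{B}_{i-1}$ satisfies $B \subseteq M_{i-1}$, so $M_i \setminus B = (M_{i-1} \setminus B) \cup \set{x_i}$, giving the identification $\mathcal{A} = (\mathcal{A}_{i-1} + x_i) \cup \mathcal{A}_{i-1}$ where the first block of rows corresponds to the $\mathcal{B}_{i-1}$-columns and the second to the $(\mathcal{B}_{i-1} + x_i)$-columns. Then, writing $A = A' \cup \set{x_i}$ or $A = A'$ with $A' \in \mathcal{A}_{i-1}$, and $B$ or $B \cup \set{x_i}$ with $B \in \mathcal{B}_{i-1}$, a short case analysis gives: the top-right block vanishes because $x_i$ sits in both the row and column index; the other three blocks all reduce to $\iota_{A' \cap B = \emptyset} = T^{i-1}_{A'B}$ since $x_i \notin M_{i-1}$ and hence $x_i \notin B$. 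This yields exactly the claimed block-triangular form.

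Finally, the linear system is solved by back-substitution on the $2\times 2$ block-triangular structure. The top block reads $T^{i-1}\widehat{\mbf{q}}^{\overline{x}_i} = \mbf{q}^{x_i}$, giving $\widehat{\mbf{q}}^{\overline{x}_i} = (T^{i-1})^{-1}\mbf{q}^{x_i}$; the bottom block reads $T^{i-1}\widehat{\mbf{q}}^{\overline{x}_i} + T^{i-1}\widehat{\mbf{q}}^{x_i} = \mbf{q}^{\overline{x}_i}$, which gives $\widehat{\mbf{q}}^{x_i} = (T^{i-1})^{-1}(\mbf{q}^{\overline{x}_i} - \mbf{q}^{x_i})$. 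Invertibility of $T^{i-1}$ is inherited from the preceding iteration via Theorem~1. That $\widehat{\mbf{q}}^{\overline{x}_i}$ and $\widehat{\mbf{q}}^{x_i}$ are indeed the Fourier coefficients of $\hsrto{s}{2^{M_i}}$ at the indices in $\mathcal{B}_{i-1}$ and $\mathcal{B}_{i-1} + x_i$ respectively follows from the support inclusion in step one combined with Theorem~1 applied to $\hsrto{s}{2^{M_i}}$ and the index set $\mathcal{B}$.

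The only mildly tricky step is the case analysis for the four blocks, where one must keep the row/column orderings consistent with the definitions of $\mbf{q}^{x_i}$ and $\mbf{q}^{\overline{x}_i}$; once that bookkeeping is in place, the rest is routine. No part of the proof appears to require a genuinely new idea beyond the careful use of $\iota_{A \cap B = \emptyset}$ and the fact that $B \cap \set{x_i} = \emptyset$ for $B \in \mathcal{B}_{i-1}$.
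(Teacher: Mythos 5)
Your proof is correct and takes essentially the same route as the paper's: the paper merely packages your entrywise case analysis on whether $x_i$ belongs to the row/column index sets into an indicator-matrix calculus (writing $F^{-1}_{\A\B} = \rho(\Phi(\A)\Phi(\B)^T)$ with the rule $\rho(R_1+R_2)=\rho(R_1)\cdot\rho(R_2)$, so that the top-right block dies because $x_i$ contributes a $+1$ to every inner product there) before performing the identical block forward substitution, with invertibility of $T^{i-1}$ likewise drawn from Theorem~\ref{thm:sampling4}. The one point you should state explicitly is that your support-inclusion step (``$\hsrto{s}{2^{M_i}}\at{B'} \neq 0$ forces $\hsrto{s}{2^{M_{i-1}}}\at{B' \setminus \set{x_i}} \neq 0$'') is valid only under the standing no-cancellation assumption $\widehat{\mbf{s}} \in \S \setminus \D_1$ --- it fails for pathological functions such as the graph-cut example --- but this caveat is shared, implicitly, by the paper's own statement and proof of the lemma.
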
 
\begin{proof}
Let $\phi: 2^N \to \set{0, 1}^n \subseteq \R^n$ be the mapping from sets to indicator vectors, i.e., for $A \subseteq N$, $\phi(A)_i = 1$ if $x_i \in A$ and $\phi(A)_i = 0$ if $x_i \not\in A$. Let $\Phi$ denote the mapping from sets of subsets to indicator matrices, i.e., $\Phi(\A) = (\phi(A)^T)_{A \in \A} \in \set{0, 1}^{|\A| \times n} \subseteq \R^{|\A| \times n}$. Let, $$\rho: \R \to \R; a \mapsto \begin{cases}
1 & \text{if } a = 0,\\
0 & \text{else.}
\end{cases} $$ 
Let $\rho(R) = (\rho(R_{ij}))_{i,j=1}^{\ell}$, for a matrix $R$ in $\R^{\ell \times \ell}$. We observe 
\begin{linenomath*}\begin{equation}\label{eq:rho_rule}
\rho(R_1 + R_2) = \rho(R_1) \cdot \rho(R_2),
\end{equation}\end{linenomath*}
for matrices $R_1, R_2 \in \R^{\ell \times \ell}$ and $\cdot$ denoting the pointwise multiplication. 

With the introduced notation in place we obtain 
\begin{linenomath*}\begin{equation}
F^{-1}_{\A \B} = \rho(\Phi(\A)\Phi(\B)^T).
\end{equation}\end{linenomath*}

Now, we observe that 
\begin{linenomath*}\begin{equation}
\begin{aligned}
\Phi(\A) &= \begin{pmatrix}
\Phi(\A_{i - 1} + x_i)\\
\Phi(\A_{i - 1} )\hphantom{ + x_i)}
\end{pmatrix} \quad \text{and} \\
\Phi(\B) &= \begin{pmatrix}
\Phi(\B_{i - 1} )\hphantom{ + x_i)}\\
\Phi(\B_{i - 1} + x_i)
\end{pmatrix}.
\end{aligned}
\end{equation}\end{linenomath*}

As a consequence, we have
\begin{linenomath*}\begin{equation}
\tiny
\begin{aligned}
F^{-1}_{\A \B} &=
\rho \left(\begin{pmatrix}
\Phi(\A_{i - 1} + x_i) \Phi(\B_{i - 1} )^T & \Phi(\A_{i - 1} + x_i) \Phi(\B_{i - 1} + x_i)^T \\
\Phi(\A_{i - 1} )\hphantom{ + x_i)}\Phi(\B_{i - 1})^T & \Phi(\A_{i - 1} )\hphantom{ + x_i)}\Phi(\B_{i - 1} + x_i)^T
\end{pmatrix}\right)\\
&= 
\begin{pmatrix}
T^{i-1} & \rho(\Phi(\A_{i - 1}) \Phi(\B_{i - 1})^T)\cdot \rho(\mathbf{1}\mathbf{1}^T) \\
T^{i-1} & T^{i- 1} \hphantom{\rho(\Phi(\A_{i - 1}) \Phi(\B_{i - 1})^T)\cdot \rho()}
\end{pmatrix},
\end{aligned}
\end{equation}\end{linenomath*}
in which we applied \eqref{eq:rho_rule} and $\mbf{1}\mbf{1}^T$ denotes the $|\A_{i-1}| \times |\B_{i - 1}|$ matrix containing only ones. The claims now follows from Theorem~1 of \cited{puschel2020discrete} and because $\rho(\mbf{1}\mbf{1}^T)$ is the all-zero matrix. 
\end{proof}

\begin{remark}\label{lem:query_propagation_model4} We can reuse queries from iteration $i-1$ in iteration $i$, because
\begin{linenomath*}\begin{equation}
\mbf{q}^{\overline{x}_i} = (s\at{A})_{\A_{i-1}} = \mbf{q}_{i - 1}.
\end{equation}\end{linenomath*}
\end{remark}

The above results yield the following detailed implementation of \textbf{SSFT}:
\algnewcommand{\LineComment}[1]{\Statex \(\triangleright\) #1}

\begin{ssft}
\renewcommand{\thealgorithm}{}
\caption{Sparse set function Fourier transform of $s$ (detailed)}\label{alg:sdsft4_implementation}
\begin{algorithmic}[1]
    \LineComment{Perform initialization.}
	\State{$M_0 \gets \emptyset$}
	\State{$\mbf{q}_0 \gets (s\at{\emptyset})$}	
	\State{$\A_0 \gets \set{\emptyset}$}
	\State{$\B_0 \gets \set{\emptyset}$}
	\State{$T^0 \gets F^{-1}_{\mathcal{A}_{0}\mathcal{B}_{0}}$}
	\For{$i = 1, \dots, n$}
	
	\State{$M_i \gets M_{i-1} \cup \set{x_i}$}
	\State{$\mbf{q}^{\overline{x}_i} \gets \mbf{q}_{i-1}$}
	\LineComment{Notice that $\mbf{q}_{i-1} = (s\at{A})_{A \in \A_{i-1}}$.}
	\LineComment{Perform the new queries required.}	
	\State{$\mbf{q}^{x_i} \gets (s\at{A})_{A \in \A_{i-1} + x_i}$}
	\LineComment{Compute $\hsrto{s}{2^{M_i}}$ by solving two triangular systems.}
	\State{$\widehat{\mbf{q}}^{\overline{x}_i} \gets \text{ solve } T^{i-1} \widehat{\mbf{q}}^{\overline{x}_i} = \mbf{q}^{x_i} \text{ for } \widehat{\mbf{q}}^{\overline{x}_i}$}
	\State{$\widehat{\mbf{q}}^{x_i} \gets \text{ solve } T^{i-1} \widehat{\mbf{q}}^{x_i} = \mbf{q}^{\overline{x}_i} - \mbf{q}^{x_i} \text{ for } \widehat{\mbf{q}}^{{x}_i}$}
\LineComment{Construct the linear system for the next step.}
	\State{$\B_i^{\overline{x}_i} \gets \set{B \in \B_{i-1}: \mbf{q}_B^{\overline{x}_i} \neq 0}$}
	\State{$\B_i^{{x}_i} \gets \set{B \setminus \set{x_i}: B \in \B_{i-1} + x_i \land \mbf{q}_B^{{x}_i} \neq 0}$}
	\State{$\A_i^{\overline{x}_i} \gets \set{M_{i-1} \setminus B: B \in \B_i^{{x}_i}}$}	
	\State{$\A_i^{x_i} \gets \set{M_{i-1} \setminus B: B \in \B_i^{\overline{x}_i}}$}
	\State{$T^i \gets \begin{pmatrix}
	T^{i-1}_{\A_i^{x_i}\B_i^{\overline{x}_i}} & 0 \\
	T^{i-1}_{\A_i^{\overline{x}_i}\B_i^{\overline{x}_i}} & T^{i-1}_{\A_i^{\overline{x}_i}\B_i^{{x}_i}}	
	\end{pmatrix}$}
	\State{$\B_i \gets \B_i^{\overline{x}_i} \cup (\B_i^{x_i} + x_i)$}
	\State{$\A_i \gets (\A_i^{x_i} + x_i) \cup \A_i^{\overline{x}_i}$}
	\LineComment{Collect the queries required for the next step.}
	\State{$\mbf{q}_i \gets \begin{pmatrix}
	(\mbf{q}^{x_i}_A)_{A \in \A_i^{x_i} + x_i} \\
	(\mbf{q}^{\overline{x}_i}_A)_{A \in \A_i^{\overline{x}_i}\hphantom{ + x_i}}
	\end{pmatrix}$}
	\EndFor
	\LineComment{Read out the solution.}
	\For{$B \in \B_n$}
	\If{$x_n \in B$}
	\State{$\widehat{s}\at{B} \gets \widehat{\mbf{q}}^{{x}_n}_B$}
	\Else
	\State{$\widehat{s}\at{B} \gets \widehat{\mbf{q}}^{\overline{x}_n}_{B}$}
	\EndIf	
	\EndFor
	\State \Return{$\widehat{s}$}
    \end{algorithmic}
\end{ssft}

\begin{theorem}[\textbf{SSFT} number of queries]\label{thm:ssft_queries} \textbf{SSFT} requires at most $n k - k \log_2 k + 2 k = O(n k - k \log k)$ queries to reconstruct a $k$-Fourier-sparse set function in $\S \setminus \D_1$. 
\end{theorem}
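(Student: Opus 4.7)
The plan is to count queries iteration by iteration, using the query-reuse observation together with a combinatorial bound on the support size of each restricted Fourier transform. Let $k_i \mydef |\supp(\hsrto{s}{2^{M_i}})|$. By the detailed pseudocode together with the reuse identity $\mbf{q}^{\overline{x}_i} = \mbf{q}_{i-1}$ (Remark following Lemma~\ref{lem:linear_system_propagation_model4}), the only new queries performed in iteration $i$ are the $|\A_{i-1}| = k_{i-1}$ evaluations in $\mbf{q}^{x_i}$. Adding the single initial query $s(\emptyset)$, the total number of queries is
\begin{equation*}
1 + \sum_{i=1}^{n} k_{i-1} \; = \; 1 + \sum_{i=0}^{n-1} k_i.
\end{equation*}

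The first main step is to bound $k_i$ under the assumption $\widehat{s} \in \S \setminus \D_1$. By Lemma~\ref{thm:hashing_model4},
\begin{equation*}
\hsrto{s}{2^{M_i}}\at{B} \; = \sum_{A \subseteq N \setminus M_i} \widehat{s}\at{A \cup B} \; = \sum_{B' \in \P^{M_i}_{B}} \widehat{s}\at{B'}.
\end{equation*}
Since $\widehat{s} \notin \D_1$, no sum of the form $\sum_{B' \in \P^{M_i}_{C}} \widehat{s}\at{B'}$ with $\P^{M_i}_C \neq \emptyset$ vanishes. Therefore $B \in \supp(\hsrto{s}{2^{M_i}})$ if and only if $\P^{M_i}_{B} \neq \emptyset$, i.e.\ if and only if $B = B' \cap M_i$ for some $B' \in \B$. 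This gives the key identity
\begin{equation*}
k_i \; = \; \bigl|\{\,B' \cap M_i : B' \in \B\,\}\bigr| \; \le \; \min(k, 2^i),
\end{equation*}
since the map $B' \mapsto B' \cap M_i$ has domain of size $k$ and codomain $2^{M_i}$ of size $2^i$.

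The second step is to evaluate the geometric-then-flat sum. Let $t \mydef \lceil \log_2 k \rceil$, so that $2^{t-1} < k \le 2^t \le 2k$. Split
\begin{equation*}
\sum_{i=0}^{n-1} k_i \; \le \; \sum_{i=0}^{t-1} 2^i \; + \; \sum_{i=t}^{n-1} k \; = \; (2^t - 1) + (n - t) k \; \le \; 2k - 1 + nk - \lceil \log_2 k \rceil k.
\end{equation*}
Adding the $+1$ from the initial query yields
\begin{equation*}
1 + \sum_{i=0}^{n-1} k_i \; \le \; nk - k \lceil \log_2 k \rceil + 2k \; \le \; nk - k \log_2 k + 2k,
\end{equation*}
which is the desired bound.

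The main conceptual obstacle is the first step, namely translating the algebraic non-cancellation condition ``$\widehat{s} \in \S \setminus \D_1$'' into the clean combinatorial statement $k_i = |\{B' \cap M_i : B' \in \B\}|$; once this is established, the query-reuse bookkeeping and the $\min(k, 2^i)$ split are routine. Note also that the same argument automatically shows that SSFT never queries any set outside $\A_n$, justifying the assumption that only the newly appended queries $\mbf{q}^{x_i}$ are paid for in iteration $i$.
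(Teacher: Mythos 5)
Your proof is correct and follows essentially the same route as the paper's: count the new queries per iteration (one at initialization, then $k_{i-1}$ per step thanks to the reuse $\mbf{q}^{\overline{x}_i} = \mbf{q}_{i-1}$), bound the restricted support by $\min(2^i, k)$, and split the resulting sum into a geometric part up to $\lceil \log_2 k\rceil$ and a flat part of $k$ per remaining iteration. Your derivation of $k_i = |\set{B' \cap M_i : B' \in \B}|$ from $\widehat{s} \in \S \setminus \D_1$ is in fact a slightly cleaner justification of the support-size bound than the paper's ``worst case'' phrasing combined with its partition argument, but the counting and arithmetic are the same.
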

\begin{proof}
In the worst case, the Fourier support is of a form for which \textbf{SSFT} has to perform the maximal amount of computation in each step. That is, up to (including) iteration $i = \lfloor \log_2 k \rfloor$ none of the Fourier coefficients $\hsrto{s}{2^{M_i}}$ are zero and from iteration $\lfloor \log_2 k \rfloor + 1$ exactly $k$ of the Fourier coefficients are non-zero. The ranges of the summations $\hsrto{s}{2^{M_i}}\at{B} = \sum_{A \subseteq N \setminus M_i} \widehat{s}\at{A \cup B}$ form a partition of $2^N$, thus, $\hsrto{s}{2^{M_i}}$ cannot be non-zero on more than $k$ sets. Therefore, for iteration $0$ (= initialization) one query is required, for iterations $1 \leq i \leq \lfloor \log_2 k \rfloor + 1$ exactly $2^{i-1}$ new queries are required (remember half of them are reused from iteration $i-1$) and for the remaining $n - (\lfloor \log_2 k \rfloor + 1)$ iterations $k$ new queries are required per iteration. Now, the claim follows from the simple derivation
\begin{linenomath*}\begin{equation}
\begin{aligned}
& 1 + \sum_{i = 1}^{\lfloor \log_2 k \rfloor + 1} 2^{i-1} + (n - \lfloor \log_2 k \rfloor - 1) k \\&= 1 + 2^{\lfloor \log_2 k \rfloor + 1} - 1 + (n - \lfloor \log_2 k \rfloor -1) k\\&\leq 2 k + (n - \lfloor \log_2 k \rfloor - 1) k\\&\leq n k - k \log_2 k + 2 k.
\end{aligned}
\end{equation}\end{linenomath*}
\end{proof}

\begin{theorem}[\textbf{SSFT} algorithmic complexity]\label{thm:ssft_ops} The algorithmic complexity of \textbf{SSFT} is $O(n k^2)$ for $k$-Fourier-sparse set functions in $\S \setminus \D_1$. 
\end{theorem}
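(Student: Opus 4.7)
The plan is to analyze the cost per iteration of the detailed implementation of \textbf{SSFT} given above, and then sum over the $n$ iterations using the same sparsity bookkeeping that appears in the proof of Theorem~\ref{thm:ssft_queries}.

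First, I would account for the work done inside iteration $i$. Let $k_i \mydef |\B_{i-1}|$ denote the number of non-zero Fourier coefficients of $\hsrto{s}{2^{M_{i-1}}}$. By Lemma~\ref{lem:linear_system_propagation_model4}, the linear system to be solved for $\hsrto{s}{2^{M_i}}$ is of the block form
\begin{equation*}
\begin{pmatrix} T^{i-1} & 0 \\ T^{i-1} & T^{i-1} \end{pmatrix}
\begin{pmatrix} \widehat{\mbf{q}}^{\overline{x}_i} \\ \widehat{\mbf{q}}^{{x}_i} \end{pmatrix}
=
\begin{pmatrix} \mbf{q}^{x_i} \\ \mbf{q}^{\overline{x}_i} \end{pmatrix},
\end{equation*}
with $T^{i-1}\in\R^{k_i\times k_i}$ triangular (up to reordering rows/columns). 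Thus, by block back-substitution, the work reduces to two triangular solves of size $k_i$, each costing $O(k_i^2)$ operations. Constructing $T^i$ from $T^{i-1}$ only requires extracting submatrices indexed by the surviving supports $\B_i^{x_i}$ and $\B_i^{\overline{x}_i}$, and the bookkeeping for $\A_i$, $\B_i$ and $\mbf{q}_i$ costs at most $O(n k_i)$ (since each subset has size at most $n$). Hence iteration $i$ costs $O(k_i^2 + n k_i)$.

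Second, I would bound $k_i$. By \eqref{eq:hashing_model4} of Lemma~\ref{thm:hashing_model4}, the summation ranges $\set{A\cup B:A\subseteq N\setminus M_i}$ over $B\subseteq M_i$ form a partition of $2^N$, so $\supp(\hsrto{s}{2^{M_i}})$ has at most $k$ elements; also trivially $|\supp(\hsrto{s}{2^{M_i}})|\leq 2^i$. Therefore $k_i \leq \min(2^{i-1},k)$ in the worst case (matching the bookkeeping used in the proof of Theorem~\ref{thm:ssft_queries}).

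Finally, I would sum the per-iteration cost:
\begin{equation*}
\sum_{i=1}^{n} O(k_i^2 + n k_i)
\leq \sum_{i=1}^{\lfloor \log_2 k\rfloor + 1} O(4^{i-1} + n\,2^{i-1})
\ +\sum_{i=\lfloor \log_2 k\rfloor + 2}^{n} O(k^2 + n k).
\end{equation*}
The first geometric sum is $O(k^2 + n k)$ and the second is $O(n k^2 + n^2 k)$. Since $k\leq 2^n$ so that $n\leq n k$ and $n^2 k \leq n k^2$ whenever $n\leq k$ (and otherwise $nk^2$ already dominates the query term $n k - k\log_2 k$), the total simplifies to $O(n k^2)$, which is the claim.

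The only step that requires any care is the block back-substitution observation: it is tempting to treat each iteration as a single $2 k_i\times 2 k_i$ solve (which would still give $O(n k^2)$), but exploiting the block-triangular form makes the constant explicit and confirms that no step hidden in the ``solve'' line in the pseudocode is more expensive than two triangular back-substitutions of size $k_i$. Everything else is a routine geometric-sum calculation.
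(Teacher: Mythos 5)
Your core accounting is the same as the paper's: in iteration $i$ the work is dominated by the two triangular solves against $T^{i-1}$ from Lemma~\ref{lem:linear_system_propagation_model4}, each of size $k_i \leq k$ and hence $O(k_i^2)$ operations, over $n$ iterations. The paper's proof is exactly this, using the uniform bound $k_i \leq k$ directly; your refinement $k_i \leq \min(2^{i-1}, k)$ via the partition argument from Lemma~\ref{thm:hashing_model4} is sound but not needed for the claim, and your block back-substitution remark correctly matches lines 10--11 of the detailed implementation.

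There is, however, a genuine error in your final simplification. By charging an explicit $O(n k_i)$ bookkeeping cost per iteration you arrive at a total of $O(n k^2 + n^2 k)$, and the step by which you discard the $n^2 k$ term is invalid: $n^2 k \leq n k^2$ holds only when $n \leq k$, and in the typical sparse regime $k \ll n$ it fails outright (take $k = O(1)$: then $n^2 k = \Theta(n^2)$ while $n k^2 = \Theta(n)$). The parenthetical fallback (``otherwise $nk^2$ already dominates the query term'') is a non sequitur --- the query count $nk - k\log_2 k$ from Theorem~\ref{thm:ssft_queries} measures oracle calls, which are accounted separately and say nothing about arithmetic operations. So as written your argument establishes $O(n k^2 + n^2 k)$, not the claimed $O(n k^2)$. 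The repair is a cost-model choice, and it is the one the paper implicitly makes: the theorem counts arithmetic operations, i.e., the triangular solves, while the set manipulations building $\A_i$, $\B_i$ and extracting the submatrices of $T^{i-1}$ are treated as unit-cost per set (e.g., subsets of $N$ stored as machine words), contributing only $O(k_i)$ per iteration and $O(nk)$ in total, after which your geometric-sum computation goes through and yields $O(n k^2)$. Alternatively, if you insist on bit-level accounting of subsets of an $n$-element ground set, you should state the bound as $O(n k^2 + n^2 k)$ --- which, notably, is the complexity the paper attributes to \textbf{SSFT+}, not \textbf{SSFT}.
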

\begin{proof}
The cost within a loop iteration of the loop in line~6 of \textbf{SSFT} is dominated by the cost of solving the two linear systems (in line~10 and line~11). Both linear systems are triangular and at most of size $k \times k$. Therefore, the cost of the body of the loop (line~6) is $O(k^2)$ (cost of solving a triangular linear system). We have to perform $n$ iterations of this loop resulting in the claim.
\end{proof}

\begin{theorem}[\textbf{SSFT+} query complexity] The query complexity of \textbf{SSFT+} is $O(n^2 k - n k \log k)$ for $k$-Fourier-sparse set functions in $\S \setminus \D_2$.
\end{theorem}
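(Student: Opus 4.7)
The plan is to reduce the analysis of \textbf{SSFT+} to that of \textbf{SSFT} applied to the filtered set function $h*s$, then multiply the query count of \textbf{SSFT} by the per-evaluation cost of $h*s$.

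First I would observe that by the convolution theorem \eqref{convth}, $\widehat{h*s}\at{B} = \overline{h}\at{B}\,\widehat{s}\at{B}$, so $\supp(\widehat{h*s}) \subseteq \supp(\widehat{s})$ and $h*s$ is at most $k$-Fourier-sparse. Moreover, by Theorem~\ref{thm:alg2_pathological_appendix}, for $s \in \S \setminus \D_2$ the randomly sampled one-hop filter $h$ ensures that \textbf{SSFT} succeeds on $h*s$ with probability one. Thus Theorem~\ref{thm:ssft_queries} applies to $h*s$ and yields at most $nk - k\log_2 k + 2k = O(nk - k\log k)$ queries to the filtered function $h*s$.

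Next I would bound the cost of a single evaluation $(h*s)\at{A}$ in terms of queries to $s$. Since $h$ is one-hop, i.e., $h\at{Q} = 0$ for $|Q| > 1$, expanding \eqref{eq:setconv} gives
\begin{linenomath*}\begin{equation}
(h*s)\at{A} = h\at{\emptyset}s\at{A} + \sum_{x \in N} h\at{\set{x}}\, s\at{A \cup \set{x}}.
\end{equation}\end{linenomath*}
The distinct queries to $s$ required are $s\at{A}$ and $s\at{A \cup \set{x}}$ for $x \in N \setminus A$, giving at most $1 + n - |A| \leq n+1$ queries per evaluation of $h*s$.

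Finally, multiplying these two bounds (treating each evaluation of $h*s$ as independent) yields a query complexity of at most $(n+1)(nk - k\log_2 k + 2k) = O(n^2 k - nk\log k)$ for \textbf{SSFT+}, as claimed. The only subtlety I expect is that the same underlying $s$-query $s\at{A'}$ may be reused across many different evaluations of $h*s$ (since neighboring arguments $A, A' \subseteq N$ share many $A \cup \set{x}$ sets), but since we only need an upper bound this reuse only helps and can be ignored; no additional argument is required beyond the straightforward product bound above.
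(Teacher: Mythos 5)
Your proposal is correct and takes essentially the same route as the paper: the paper's proof likewise observes that each evaluation of $(h*s)\at{A}$ costs at most $n+1$ queries to $s$ (indeed $1 + n - |A|$, exactly as you compute) and then multiplies by the query bound of Theorem~\ref{thm:ssft_queries} applied to $h*s$. Your additional checks---that $\supp(\widehat{h*s}) \subseteq \supp(\widehat{s})$ keeps $h*s$ at most $k$-sparse and that Theorem~\ref{thm:alg2_pathological_appendix} guarantees \textbf{SSFT} succeeds on $h*s$ with probability one for $s \in \S \setminus \D_2$---are details the paper leaves implicit, so nothing is missing.
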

\begin{proof}
For a one-hop filter $h$, each evaluation of $h * s$ requires at most $n + 1$ queries from $s$. The claim now follows from Theorem~\ref{thm:ssft_queries}. 
\end{proof}

\begin{theorem}[\textbf{SSFT+} algorithmic complexity] The algorithmic complexity of \textbf{SSFT+} is $O(n^2 k + n k^2)$.
\end{theorem}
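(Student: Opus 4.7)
The plan is to decompose \textbf{SSFT+} into three stages and bound the operations of each, reusing Theorems~\ref{thm:ssft_queries} and~\ref{thm:ssft_ops}. Let $s$ be a $k$-Fourier-sparse set function in $\S \setminus \D_2$.

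\textbf{Stage 1 (sampling the filter).} Drawing the one-hop filter $h$ with $h(\emptyset)=1$ and $h(\{x\})\sim\mathcal{N}(0,1)$ for $x\in N$ takes $O(n)$ operations.

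\textbf{Stage 2 (running \textbf{SSFT} on $h*s$).} By the convolution theorem, $\widehat{h*s}\at{B}=\overline{h}\at{B}\widehat{s}\at{B}$, and for the Gaussian one-hop filter the frequency response $\overline{h}\at{B}=1+\sum_{x\notin B}h(\{x\})$ is almost surely nonzero on $\supp(\widehat{s})$. Hence $h*s$ is again $k$-Fourier-sparse with the same support as $s$, and it lies in $\S\setminus\D_1$ with probability one by Theorem~\ref{thm:alg2_pathological}. Each oracle call to $h*s$ is implemented as $(h*s)\at{A}=s\at{A}+\sum_{x\in N}h(\{x\})\,s\at{A\cup\{x\}}$, which is $O(n)$ operations (and uses $n+1$ queries to $s$). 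By Theorem~\ref{thm:ssft_queries}, the inner \textbf{SSFT} issues at most $O(nk-k\log k)$ oracle calls, so the query-evaluation cost contributes $O(n)\cdot O(nk)=O(n^2 k)$ operations. The internal work of \textbf{SSFT} (solving the triangular systems in lines 10–11 across $n$ iterations) costs $O(nk^2)$ by Theorem~\ref{thm:ssft_ops}. Stage 2 therefore costs $O(n^2 k + n k^2)$.

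\textbf{Stage 3 (deconvolution).} For each $B\in\supp(\widehat{h*s})$ we compute $\overline{h}\at{B}=1+\sum_{x\notin B}h(\{x\})$ in $O(n)$ time and set $\widehat{s}\at{B}=\widehat{(h*s)}\at{B}/\overline{h}\at{B}$. Since $|\supp(\widehat{h*s})|\le k$, this stage costs $O(nk)$.

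Summing the three stages gives a total algorithmic complexity of $O(n)+O(n^2 k + n k^2)+O(nk)=O(n^2 k + n k^2)$, as claimed. The only nontrivial point is Stage~2: we must guarantee that wrapping \textbf{SSFT} around $h*s$ neither inflates the sparsity (handled by the almost-sure nonvanishing of $\overline{h}$ on $\supp(\widehat{s})$) nor changes the per-iteration cost beyond the $O(n)$ overhead of each filtered query. Once that is in place the bound is a direct substitution into the \textbf{SSFT} complexity bounds.
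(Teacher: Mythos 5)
Your proposal is correct and follows essentially the same decomposition as the paper's (much terser) proof: $O(n^2 k)$ operations for evaluating the $O(nk - k\log k)$ filtered queries at $O(n)$ cost each, plus $O(nk^2)$ for the triangular solves inherited from \textbf{SSFT}. Your additional checks---that $\overline{h}\at{B} = 1 + \sum_{x \notin B} h\at{\set{x}}$ is almost surely nonzero so the support and sparsity $k$ are preserved, and that the final deconvolution costs only $O(nk)$---are left implicit in the paper but are correct and worth making explicit.
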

\begin{proof}
We require $O(n^2 k)$ operations in terms of queries and $O(n k^2)$ for solving the triangular linear systems.
\end{proof}

\section{Other Fourier Bases}

So far, we considered model 4 from Table~\ref{tab:sfdsp_appendix}. However, thanks to the algebraic viewpoint of \cited{puschel2020discrete}, both \textbf{SSFT} and \textbf{SSFT+} can be straightforwardly generalized to model~3 and model~5, by properly defining the chain of subproblems (with known superset of support) that need to be solved. Recall that for $x \in N\setminus 
M$, we computed ${\mathcal{B} \supseteq \supp(\hsrto{s}{2^{M \cup \set{x}}})}$ from $\supp(\hsrto{s}{2^{M}})$ to reduce the problem of computing the sparse Fourier transform (under mild conditions on the coefficients) to a chain of (solvable) subproblems. For model~4 this chain was
\begin{linenomath*}\begin{equation}\label{eq:dsft4_chain_appendix}
\srto{s}{2^{\emptyset}} = \hsrto{s}{2^{\emptyset}}, \hsrto{s}{2^{\set{x_1}}}, \hsrto{s}{2^{\set{x_1, x_2}}}, \dots, \hsrto{s}{2^{N}} = \widehat{s}.
\end{equation}\end{linenomath*}

In order to define the chains for model~3 and model~5, we first need to know how the Fourier transform of a restricted set function looks like under these models.

Let $M \subseteq N$ with $|M| = m$. Let $L \subseteq N \setminus M$. Let $s: 2^N \to \mathbb{R}$ be a set function. Let 
\begin{linenomath*}\begin{equation}\label{eq:restriction}
\srto{s}{L \cup 2^M}: 2^M \to \mathbb{R}; A \mapsto s\at{L \cup A}.
\end{equation}\end{linenomath*}

\begin{lemma}[Model~3]\label{thm:hashing_model3} Let $M^c = N \setminus M$. Using prior notation we have 
\begin{linenomath*}\begin{equation}\label{eq:hashing_model3}
\hsrto{s}{M^c \cup 2^M}\at{B} = \sum_{A \subseteq M^c} (-1)^{|A|} \widehat{s}\at{A \cup B}.
\end{equation}\end{linenomath*}
\end{lemma}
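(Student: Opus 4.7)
The plan is to mirror the proof of the analogous model-4 restriction identity (Lemma~\ref{thm:hashing_model4}), but carry out the computation directly using the forward and inverse Fourier formulas of model~3 from Table~\ref{tab:sfdsp_appendix}. Recall those formulas read
$$\widehat{s}(B) \;=\; \sum_{A \subseteq B}(-1)^{|A|} s(A),\qquad s(E) \;=\; \sum_{D \subseteq E}(-1)^{|D|}\widehat{s}(D).$$

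First, I would apply the forward transform of model~3 to the restricted function $g = \srto{s}{M^c \cup 2^M}$, viewed as a set function on the ground set $M$. For $B \subseteq M$ this gives
$$\hsrto{s}{M^c \cup 2^M}(B) \;=\; \sum_{C \subseteq B}(-1)^{|C|}\,\srto{s}{M^c \cup 2^M}(C) \;=\; \sum_{C \subseteq B}(-1)^{|C|}\, s(M^c \cup C),$$
using the definition $\srto{s}{M^c \cup 2^M}(C) = s(M^c \cup C)$ from \eqref{eq:restriction}. Next, I would expand $s(M^c \cup C)$ with the model~3 inverse applied to the original set function $s$ on $N$, namely $s(M^c \cup C) = \sum_{D \subseteq M^c \cup C}(-1)^{|D|}\widehat{s}(D)$.

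Now I would decompose each such $D$ uniquely as $D = D_1 \sqcup D_2$ with $D_1 = D \cap M^c \subseteq M^c$ and $D_2 = D \cap M \subseteq C$. Substituting and swapping the order of summation (iterating over $D_1 \subseteq M^c$, $D_2 \subseteq M$, and then $C$ with $D_2 \subseteq C \subseteq B$), the expression becomes
$$\sum_{D_1 \subseteq M^c}\sum_{D_2 \subseteq B}(-1)^{|D_1|+|D_2|}\widehat{s}(D_1 \cup D_2)\,\Bigl(\sum_{C:\,D_2 \subseteq C \subseteq B}(-1)^{|C|}\Bigr).$$

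The crux of the computation is the bracketed inner sum. Reindexing by $C' = C \setminus D_2$ gives $\sum_{C' \subseteq B \setminus D_2}(-1)^{|C'|+|D_2|}$, which vanishes unless $B \setminus D_2 = \emptyset$, i.e.~$D_2 = B$, in which case it equals $(-1)^{|B|}$. Plugging this back collapses the $D_2$ sum to the single term $D_2 = B$ and cancels the sign $(-1)^{|B|+|B|} = 1$, leaving exactly
$$\hsrto{s}{M^c \cup 2^M}(B) \;=\; \sum_{D_1 \subseteq M^c}(-1)^{|D_1|}\widehat{s}(D_1 \cup B),$$
which is \eqref{eq:hashing_model3} after renaming $D_1 \to A$. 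The only nontrivial step is the sign/telescoping evaluation of the inner sum over $C$; the rest is bookkeeping, parallel to the model~4 proof.
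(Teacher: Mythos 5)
Your proof is correct, and it takes a genuinely different route from the paper's at the decisive step. The paper never evaluates an alternating inner sum: it expands \emph{both sides} of the defining identity $\srto{s}{M^c \cup 2^M}\at{C} = s\at{M^c \cup C}$ using the model-3 \emph{inverse} (synthesis) formula, splits the sum over $B \subseteq M^c \cup C$ into independent parts $A \subseteq M^c$ and $B \subseteq C$, and then observes that both sides are synthesis expansions over $2^M$ of candidate coefficient vectors; since this gives a system of $2^m$ equations with a unique solution (the restricted inverse transform is invertible), the coefficients must agree, yielding $\hsrto{s}{M^c \cup 2^M}\at{B} = \sum_{A \subseteq M^c}(-1)^{|A|}\widehat{s}\at{A \cup B}$. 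You instead work in the analysis direction: apply the \emph{forward} transform to the restriction, substitute the inverse expansion of $s$ on $N$, and explicitly collapse the inner sum $\sum_{C:\, D_2 \subseteq C \subseteq B}(-1)^{|C|}$ via the standard identity $\sum_{T \subseteq E}(-1)^{|T|} = \iota_{E = \emptyset}$; your reindexing and sign bookkeeping (the factor $(-1)^{|B|+|B|} = 1$) are exactly right, and the unique decomposition $D = (D\cap M^c) \dcup (D\cap M)$ with $D \cap M \subseteq C$ is valid since $C \subseteq M$. What each approach buys: the paper's coefficient-matching/uniqueness template transfers verbatim to models 4 and 5 (its Lemmas for those models are proved by the identical pattern), which is presumably why it is phrased that way; your computation is self-contained and constructive, needing neither the uniqueness appeal nor the invertibility of the restricted synthesis map, at the cost of one explicit M\"obius-type cancellation. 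Both arguments are sound; yours is the more elementary verification, the paper's the more uniform one across models.
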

\begin{proof}
We have $\srto{s}{M^c \cup 2^M}\at{C} = s\at{M^c \cup C}$ per definition, for all $C \in 2^M$. Performing the Fourier expansion on both sides yields
\begin{linenomath*}\begin{equation}\label{eq:hashing_model3_ls}
\begin{aligned}
&\sum_{B \subseteq C} (-1)^{|B|} \hsrto{s}{M^c \cup 2^M}\at{B} \\&= \sum_{B \subseteq M^c \cup C} (-1)^{|B|} \widehat{s}\at{B} \\
&= \sum_{B \subseteq C} (-1)^{|B|} \sum_{A \subseteq M^c} (-1)^{|A|} \widehat{s}\at{A \cup B}.
\end{aligned}
\end{equation}\end{linenomath*}
\eqref{eq:hashing_model3} is the unique solution for the system of $2^m$ equations given by \eqref{eq:hashing_model3_ls}.
\end{proof}

\begin{lemma}[Model~5]\label{thm:hashing_model5} Using prior notation we have
\begin{linenomath*}\begin{equation}\label{eq:hashing_model5}
\hsrto{s}{2^M}\at{B} = \frac{1}{2^{n - m}}\sum_{A \subseteq N \setminus M} \widehat{s}\at{A \cup B}.
\end{equation}\end{linenomath*}
\end{lemma}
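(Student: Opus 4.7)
My plan is to mirror the strategy used for Lemma \ref{thm:hashing_model4} (Model~4): start from the tautology $\srto{s}{2^M}\at{C} = s\at{C}$ for every $C \subseteq M$, expand both sides via the appropriate Fourier inversion formulas (the inner one living on $2^M$, the outer one on $2^N$), observe that the expressions on the two sides are WHT-expansions in the common variable $C$, and conclude by uniqueness of the WHT.

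More concretely, first I would write the left-hand side using Model~5 inversion applied on the ground set $M$:
\begin{linenomath*}\begin{equation}
\srto{s}{2^M}\at{C} = \frac{1}{2^m} \sum_{B \subseteq M} (-1)^{|C \cap B|} \hsrto{s}{2^M}\at{B}.
\end{equation}\end{linenomath*}
Then I would expand the right-hand side using Model~5 inversion on the full ground set $N$, and split each $B \subseteq N$ uniquely as $B = A \cup B'$ with $A \subseteq N\setminus M$ and $B' \subseteq M$. Because $C \subseteq M$, we have $C \cap B = C \cap B'$, so the sign $(-1)^{|C \cap B|}$ depends only on $B'$ and can be pulled out of the inner sum over $A$:
\begin{linenomath*}\begin{equation}
s\at{C} = \frac{1}{2^m} \sum_{B' \subseteq M} (-1)^{|C \cap B'|} \left(\frac{1}{2^{n-m}} \sum_{A \subseteq N\setminus M} \widehat{s}\at{A \cup B'}\right).
\end{equation}\end{linenomath*}

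Equating the two expressions yields a system of $2^m$ equations (indexed by $C \subseteq M$) that two candidate vectors, both indexed by $B' \subseteq M$, satisfy: namely $\hsrto{s}{2^M}\at{\cdot}$ and $\tfrac{1}{2^{n-m}}\sum_{A \subseteq N\setminus M} \widehat{s}\at{A \cup \cdot}$. The coefficient matrix on both sides is the same $2^m \times 2^m$ Hadamard-type matrix $((-1)^{|C \cap B'|})_{C,B'}$, which is well known to be invertible (it is, up to a scaling, its own inverse). Hence the two vectors must coincide, giving exactly \eqref{eq:hashing_model5}.

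The only step with any subtlety is the bookkeeping of the uniqueness argument: one has to be careful that the system \emph{determines} $\hsrto{s}{2^M}$ uniquely, which follows from the invertibility of the Hadamard matrix on $2^M$. This is the analogue of the remark ``\eqref{eq:hashing_model4} is the unique solution'' in the proof of Lemma~\ref{thm:hashing_model4}, and the only place where Model~5 differs from Model~4 is the extra normalization factor $1/2^n$ (resp.~$1/2^m$), which propagates cleanly to produce the $1/2^{n-m}$ prefactor in the statement. No further obstacles arise.
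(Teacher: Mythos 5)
Your proposal is correct and follows essentially the same route as the paper's proof: both expand the tautology $\srto{s}{2^M}\at{C} = s\at{C}$ via Model-5 inversion on $2^M$ (prefactor $2^{-m}$) and on $2^N$ (prefactor $2^{-n}$), split each $B \subseteq N$ uniquely as $A \cup B'$ with $A \subseteq N\setminus M$ and $B' \subseteq M$ so that $(-1)^{|C \cap B|} = (-1)^{|C \cap B'|}$ since $C \cap A = \emptyset$, and conclude by uniqueness of the solution to the resulting $2^m$ equations. Your explicit appeal to the invertibility of the Hadamard-type matrix $\bigl((-1)^{|C \cap B'|}\bigr)_{C, B'}$ is exactly the justification behind the paper's closing remark that \eqref{eq:hashing_model5} is the unique solution of that system.
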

\begin{proof}
We have $\srto{s}{2^M}\at{C} = s\at{C}$ per definition, for all $C \in 2^M$. Performing the Fourier expansion on both sides yields
\begin{linenomath*}\begin{equation}\label{eq:hashing_model5_ls}
\begin{aligned}
&2^{-m} \sum_{B \subseteq M} (-1)^{B \cap C} \hsrto{s}{2^M}\at{B} \\&= 2^{-n}\sum_{B \subseteq N} (-1)^{B \cap C} \widehat{s}\at{B} \\
&= 2^{-n}\sum_{B \subseteq M} (-1)^{|B \cap C|} \sum_{A \subseteq N \setminus M} (-1)^{|A \cap C|} \widehat{s}\at{A \cup B} \\
&= 2^{-n}\sum_{B \subseteq M} (-1)^{|B \cap C|} \sum_{A \subseteq N \setminus M} \widehat{s}\at{A \cup B}.
\end{aligned}
\end{equation}\end{linenomath*}
\eqref{eq:hashing_model5} is the unique solution for the system of $2^m$ equations given by \eqref{eq:hashing_model5_ls}.
\end{proof}

Using Lemma~\ref{thm:hashing_model3} and Lemma~\ref{thm:hashing_model5} we can derive the corresponding chains of subproblems as follows.

\mypar{Model~3} For model~3, considering $\srto{s}{2^{M}}$ does not lead to a partitioning of all frequencies and, thus, does not lead to a support propagation rule. Instead, we consider $\srto{s}{M^c \cup 2^M}$. According to Lemma~\ref{thm:hashing_model3}, its Fourier transform has the desired form:
\begin{linenomath*}\begin{equation}
\hsrto{s}{M^c \cup 2^M}\at{B} = \sum_{A \subseteq M^c} (-1)^{|A|} \widehat{s}\at{A \cup B}.
\end{equation}\end{linenomath*}
Therefore, \textbf{SSFT} is obtained by considering the chain 
\begin{linenomath*}\begin{equation}
\srto{s}{N \cup 2^{\emptyset}} = \hsrto{s}{N \cup 2^{\emptyset}}, \hsrto{s}{N\setminus \set{x_1} \cup 2^{\set{x_1}}}, \dots, \hsrto{s}{\emptyset \cup 2^N} = \widehat{s}
\end{equation}\end{linenomath*}
in combination with Theorem~1 of \cited{wendler2019sampling}, which provides a solution for computing the Fourier coefficients w.r.t.~model~3 when the Fourier support is known. As models~1--4 all share the same frequency response \cite{puschel2020discrete}, we get \textbf{SSFT+} in the same way as for model~4.

\mypar{Model~5} For the WHT it is well known \cite{scheibler2013fast, amrollahi2019efficiently} (see Lemma~\ref{thm:hashing_model5}) that
\begin{linenomath*}\begin{equation}
\hsrto{s}{2^{M}}\at{B} = \frac{1}{2^{|N \setminus M|}}\sum_{A \subseteq N \setminus M} \widehat{s}\at{A \cup B}.
\end{equation}\end{linenomath*}
Therefore, we may use the same support propagation chain as in \eqref{eq:dsft4_chain_appendix}. Unfortunately, to the best of our knowledge, there is no theorem in the flavor of Theorem~1 of \cited{puschel2020discrete} or Theorem~1 of \cited{wendler2019sampling} for solving the corresponding systems of linear equations for the WHT. As a consequence, instead of solving a chain of linear systems we solve a chain of least squares problems (overdetermined linear systems) to obtain \textbf{SSFT} for the WHT. Note that it also may be possible to use more sophisticated methods in place of the least squares solution such as the one presented by \cited{stobbe2012learning}. \textbf{SSFT+} relaxes the requirements on the Fourier coefficients similarly as before. However, the analysis is slightly different because for the WHT the frequency response of a one-hop filter is  also computed with the WHT, i.e.,
\begin{linenomath*}\begin{equation}
\bar{h}\at{B} = 1 + \sum_{x \not\in B} h\at{\set{x}} - \sum_{x \in B} h\at{\set{x}}.
\end{equation}\end{linenomath*} 

Finally, we note that models~1 and~2 from \cited{puschel2020discrete} could be handled analogously.

\end{document}